\PassOptionsToPackage{numbers,compress}{natbib}
\documentclass{article}

\usepackage[preprint]{neurips_2026}
\usepackage[utf8]{inputenc}
\usepackage[T1]{fontenc}
\usepackage{hyperref}
\usepackage{url}
\usepackage{booktabs}
\usepackage{amsmath,amssymb,amsfonts,amsthm}
\usepackage{nicefrac}
\usepackage{microtype}
\microtypesetup{nopatch=eqnum}
\usepackage[dvipsnames]{xcolor}
\usepackage{mdframed}
\usepackage{graphicx}
\usepackage{wrapfig}
\usepackage{algorithm}
\usepackage{algorithmic}
\usepackage{multirow}
\usepackage{alphalph}
\usepackage{titletoc}

\graphicspath{{figure/}}

\makeatletter
\renewcommand{\tagform@}[1]{\maketag@@@{(\ignorespaces#1\unskip)}}
\renewcommand\appendix{%
  \par
  \setcounter{section}{0}%
  \setcounter{subsection}{0}%
  \gdef\thesection{\AlphAlph{\value{section}}}%
  \gdef\theHsection{appendix.\AlphAlph{\value{section}}}%
  \gdef\theHsubsection{appendix.\AlphAlph{\value{section}}.\arabic{subsection}}}
\makeatother

\newtheorem{theorem}{Theorem}
\newtheorem{proposition}{Proposition}
\newtheorem{definition}{Definition}
\newtheorem{remark}{Remark}
\newtheorem{corollary}{Corollary}
\newtheorem{lemma}{Lemma}

\definecolor{theoremgray}{gray}{0.94}
\mdfdefinestyle{theoremgraybox}{
  backgroundcolor=theoremgray,
  linecolor=theoremgray,
  linewidth=0pt,
  roundcorner=0pt,
  leftmargin=0pt,
  rightmargin=0pt,
  innerleftmargin=0pt,
  innerrightmargin=0pt,
  innertopmargin=0pt,
  innerbottommargin=0pt,
  skipabove=8pt,
  skipbelow=8pt
}
\definecolor{questiongray}{gray}{0.96}
\mdfdefinestyle{questionboxstyle}{
  backgroundcolor=questiongray,
  linecolor=black!18,
  linewidth=0.4pt,
  roundcorner=0pt,
  leftmargin=0pt,
  rightmargin=0pt,
  innerleftmargin=8pt,
  innerrightmargin=8pt,
  innertopmargin=7pt,
  innerbottommargin=7pt,
  skipabove=8pt,
  skipbelow=8pt
}
\newenvironment{questionbox}{\begin{mdframed}[style=questionboxstyle]\small}{\end{mdframed}}
\newcommand{\applytheorembox}[1]{\surroundwithmdframed[style=theoremgraybox]{#1}}
\newcommand{\E}{\mathbb{E}}
\newcommand{\Eqref}[1]{Eq.~\ref{#1}}
\newcommand{\seedmark}[1]{\hspace{0.08em}\textsuperscript{[#1]}}
\newcommand{\benchval}[3]{$(#1 \pm #2)\times 10^{#3}$}
\newcommand{\benchvalseed}[4]{$(#1 \pm #2)\times 10^{#3}$\seedmark{#4}}
\newcommand{\benchvalbf}[3]{\textbf{\boldmath$(#1 \pm #2)\times 10^{#3}$}}
\newcommand{\benchvalbfseed}[4]{\textbf{\boldmath$(#1 \pm #2)\times 10^{#3}$}\seedmark{#4}}
\newcommand{\benchvalul}[3]{\underline{$(#1 \pm #2)\times 10^{#3}$}}

\newcommand{\UAM}{\texorpdfstring{\textcolor{Emerald}{UAM}}{UAM}}
\newcommand{\CAM}{\texorpdfstring{\textcolor{Violet}{CAM}}{CAM}}
\newcommand{\LCAM}{\texorpdfstring{\textcolor{ProcessBlue}{LCAM}}{LCAM}}
\newcommand{\FAMOUAM}{FAMO+\UAM}
\newcommand{\GNUAM}{GN+\UAM}
\newcommand{\FAMOLCAM}{FAMO+\LCAM}
\newcommand{\GNLCAM}{GN+\LCAM}
\applytheorembox{definition}

\title{Per-Loss Adapters for Gradient Conflict\\
in Physics-Informed Neural Networks}
\author{
  \parbox{0.95\textwidth}{\centering\normalfont
    \textbf{Bum Jun Kim}\textsuperscript{1,*}
    \qquad
    \textbf{Gnankan Landry Regis N'guessan}\textsuperscript{2,3,4}\\[0.35em]
    \textsuperscript{1}\,The University of Tokyo, Japan\\
    \textsuperscript{2}\,Axiom Research Group\\
    \textsuperscript{3}\,Department of Applied Mathematics and Computational Science,
    NM-AIST,
    Tanzania\\
    \textsuperscript{4}\,African Institute for Mathematical Sciences (AIMS),
    Research and Innovation Centre, Rwanda\\[0.35em]
    \textsuperscript{*}\,Corresponding author:
    \texttt{bumjun.kim@weblab.t.u-tokyo.ac.jp}
  }
}
\date{}

\begin{document}
\maketitle

\begin{abstract}
Physics-informed neural networks (PINNs) train a single neural approximation by minimizing multiple physics- and data-derived losses, but the gradients of these losses often interfere and can stall optimization. Existing remedies typically treat this pathology either through scalar loss balancing or full-parameter-space gradient surgery, leaving it unclear which intervention is most appropriate. We show that PINN gradient conflict is not a uniform failure mode with one universal remedy. Instead, we identify distinct PINN gradient-conflict regimes, each associated with a different intervention class. Persistent directional conflict may require separate loss-indexed parameter subspaces, magnitude imbalance often favors scalar reweighting, and low or transient conflict may require no extra mitigation. To select between scalar reweighting and a lightweight architectural intervention, we propose a diagnostic-first framework. It profiles a 1000-step unmodified PINN run and, when intervention is warranted, uses one low-rank adapter per loss to create explicit loss-indexed parameter subspaces attached to a shared PINN trunk, providing each loss with a direct gradient pathway. Across more than 60 PDE configurations, including forward, inverse, multi-physics, parameter-varying, and high-dimensional problems up to 50D, persistent directional conflict dominates standard forward $K=3$ benchmarks and a natural $K=4$ thermoelastic system, where adapters combined with reweighting yield significant improvements. In contrast, $K=3$ inverse problems and natural $K=5$ and $K=6$ multi-physics systems are largely magnitude-dominated and often favor reweighting alone, while full-parameter-space gradient surgery can fail on heterogeneous parameter spaces. A regime-transition theorem and a blockwise neural tangent kernel analysis explain why gains from these separate adapter parameter subspaces arise selectively in persistent-conflict regimes.
\end{abstract}

\section{Introduction}
\label{sec:intro}
Physics-informed neural networks (PINNs)~\citep{raissi2019physics, karniadakis2021physics} embed physical laws directly into neural network training by minimizing a composite loss:
\begin{equation}
\label{eq:pinn-loss}
  \mathcal{L}(\theta) = \sum_{k=1}^{K} \mathcal{L}_k(\theta),
\end{equation}
where $\mathcal{L}_k$ includes partial differential equation (PDE) residuals $\mathcal{L}_{\mathrm{PDE}}(\theta)$, boundary-condition (BC) terms $\mathcal{L}_{\mathrm{BC}}(\theta)$, initial-condition (IC) terms $\mathcal{L}_{\mathrm{IC}}(\theta)$, and data-fidelity terms for inverse problems. Despite their elegance, PINNs suffer from well-documented training difficulties~\citep{wang2021understanding, wang2022and, krishnapriyan2021characterizing, leiteritz2021trivial, daw2023mitigating, wang2023expertguide}. The gradients $g_k = \nabla_\theta \mathcal{L}_k$ frequently point in conflicting directions, causing destructive interference that impedes convergence; see Figure~\ref{fig:gradient-conflict}.

\begin{wrapfigure}{r}{0.45\textwidth}
  \vspace{-0.6\baselineskip}
  \centering
  \includegraphics[width=\linewidth]{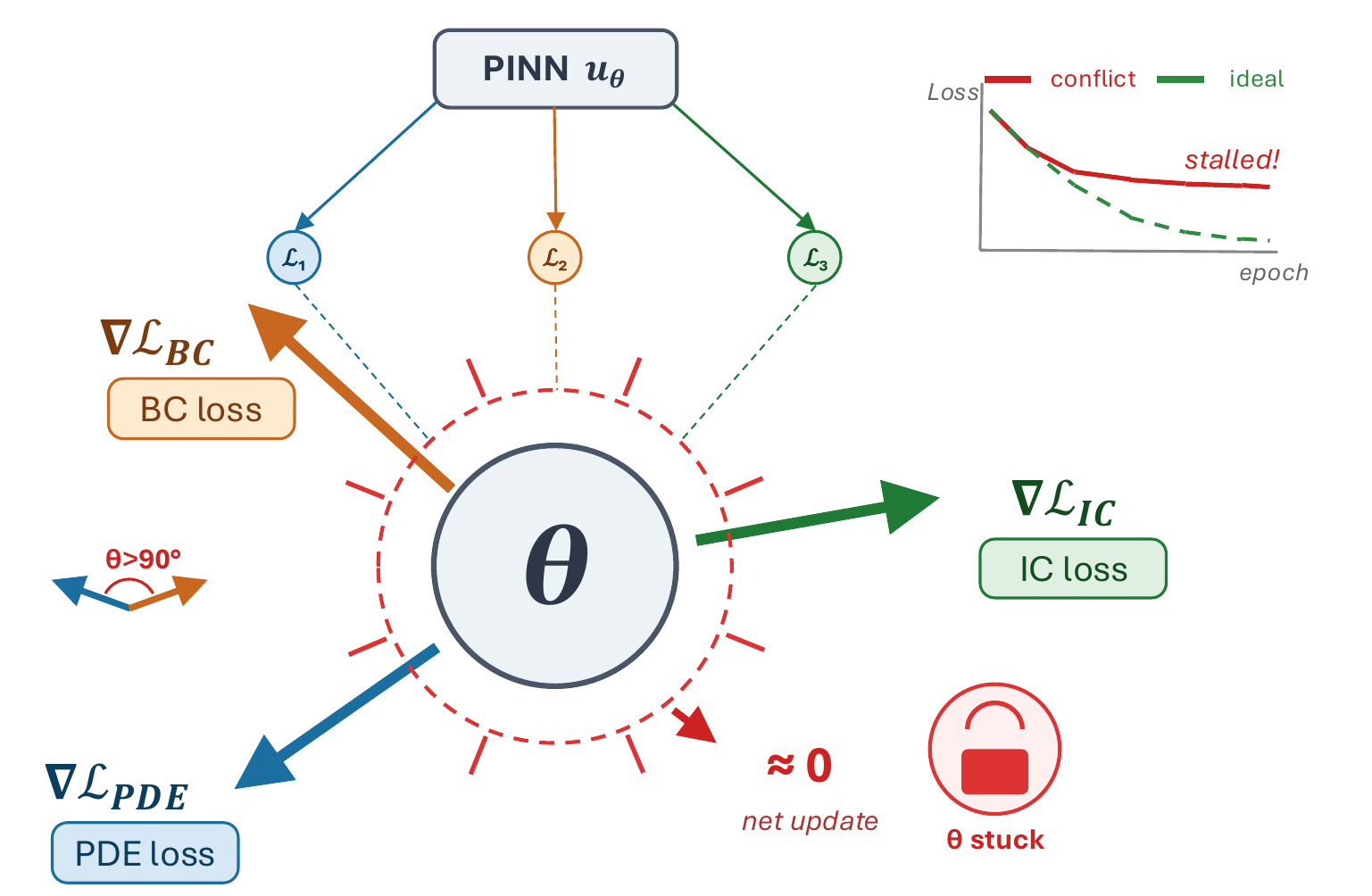}
  \caption{%
    Gradient conflict in PINNs.
    A PINN $u_{\boldsymbol{\theta}}$ is trained with
    multiple loss terms, whose gradients
    $\nabla_{\boldsymbol{\theta}}\mathcal{L}_{k}$ may point in opposing
    directions,
    which causes the training to stall.%
  }
  \label{fig:gradient-conflict}
  \vspace{-0.7\baselineskip}
\end{wrapfigure}

Existing approaches to addressing gradient conflicts in PINNs fall into two categories: scalar loss-balancing methods and full-parameter-space gradient-surgery methods.
The first category includes loss reweighting methods such as learning rate annealing, gradient normalization (GradNorm), SoftAdapt, and Fast Adaptive Multitask Optimization (FAMO)~\citep{wang2021understanding, chen2018gradnorm, heydari2019softadapt, liu2024famo}, which adjust the scalar weights $\lambda_k$ in $\sum_k \lambda_k \mathcal{L}_k$ to balance loss magnitudes but do not modify gradient directions.
The second category includes gradient-surgery methods such as projected conflicting gradient (PCGrad), conflict-averse gradient descent (CAGrad), Gradient Vaccine, RotoGrad, and Nash multi-task learning (NashMTL)~\citep{yu2020gradient, liu2021conflict, wang2021gradvac, javaloy2022rotograd, navon2022multi}, which project, rotate, or otherwise transform conflicting gradients to remove negative interference but operate on the full parameter space, are computationally expensive, and can fail catastrophically in heterogeneous PINN settings, especially inverse problems that combine network weights with learnable physical constants. Tables~\ref{tab:inverse_10k} and~\ref{tab:surgery_mitigation} isolate this failure mode through inverse-problem results and grouped-surgery mitigation.
Recent PINN-specific analyses and optimizers likewise make gradient conflict explicit~\citep{hwang2024dualcone, wang2025gradientalignment}, but they still do not provide a regime-level rule for when architectural separation is preferable to scalar reweighting.

Rather than advocating a single method, our work argues that PINN gradient conflict should be treated as a regime-dependent optimization pathology. Different PDE-loss systems may require different classes of interventions, even when all exhibit multi-loss interference.
We characterize the full landscape of gradient conflict in PINNs across directional conflict, magnitude imbalance, and their $K$-dependent interaction, providing a principled framework for regime-dependent method selection.
Here, a conflict regime refers to the dominant obstruction observed in a short profile, namely persistent directional misalignment, gradient-norm imbalance, low conflict, or transient conflict. This regime determines whether scalar reweighting, architectural separation, or no additional intervention is appropriate.

The central distinction is that PINN losses are not independent tasks in the usual multi-task learning (MTL) sense, because the PDE residual, BC, IC, and data terms all constrain the same physical field $u(x)$.
Their gradient geometry is therefore tied to the PDE operator and the physics-loss decomposition, with all losses acting through a shared physical output instead of distinct task heads.
This distinction motivates the guiding question of our study:
\begin{questionbox}
\centering
\emph{When can PINN gradient conflicts be resolved through scalar reweighting, and when do PDE-induced persistent directional conflicts require explicit separate parameter subspaces?}
\end{questionbox}
No prior work provides a theory-guided answer to determine when each algorithm class is appropriate, validated across different operator families, dimensions, and architectures.
We fill this gap with a diagnostic framework that determines whether a conflict warrants intervention. This framework is supported by regime-transition theory and a kernel-level interpretation of per-loss adapters, and it adopts per-loss low-rank adapters~\citep{hu2022lora} as lightweight architectural probes that explicitly instantiate separate loss-indexed parameter subspaces with a 15\% parameter overhead. Their gains precisely track the predicted conflict regime.

Our contributions are as follows. First, we show that PINN gradient conflict forms regimes tied to loss count and the problem class. Standard forward $K=3$ PDEs and natural $K=4$ thermoelasticity are dominated by directional conflict, whereas $K=3$ inverse problems and natural $K=5$ and $K=6$ multi-physics problems are mostly magnitude-dominated.
Second, we explain this transition with Theorem~\ref{thm:regime_transition}, operator-type guidance in Remark~\ref{thm:pde_conflict}, and a blockwise neural tangent kernel (NTK) view showing how loss-indexed low-rank adapters create separate parameter subspaces without requiring separate physical outputs.
Third, we validate a 1000-step diagnostic workflow across 60+ PDE configurations, architecture variants, MTL benchmarks, and NTK-weighting comparisons, identifying when scalar reweighting, per-loss adapters, or no additional intervention is appropriate.

\section{Method}
\label{sec:method}

\subsection{Problem Setup}
\label{sec:problem}
Consider a PDE-constrained problem defined on a domain $\Omega \subset \mathbb{R}^d$ with boundary $\partial\Omega$, written at the residual level as
\begin{align*}
\begin{aligned}
  \mathcal{R}_{\mathrm{PDE}}[u](x)
  := \mathcal{F}[u](x)-f(x) &= 0
  && \text{for } x \in \Omega, \\
  \mathcal{R}_m[u](x) &= 0
  && \text{for } x \in \Gamma_m,\quad m=1,\ldots,M.
\end{aligned}
\end{align*}
Here $\mathcal{F}$ is the differential operator, $f$ is a forcing or source term, and the residuals $\mathcal{R}_m$ on the sets $\Gamma_m$ encode BC, IC, interface, or observation constraints.
A PINN approximates the unknown physical field $u$, the state variable or physical quantity constrained by the governing equations, using a neural network $u_\theta$ trained by minimizing the composite objective in \Eqref{eq:pinn-loss}.
In the PDE setting, its components typically include $\mathcal{L}_{\text{PDE}}(\theta)$, $\mathcal{L}_{\text{BC}}(\theta)$, $\mathcal{L}_{\text{IC}}(\theta)$, $\ldots$.

We formalize the resulting multi-loss training pathology as follows.

\begin{definition}[Gradient conflict]
\label{def:conflict}
Two loss terms $\mathcal{L}_i, \mathcal{L}_j$ exhibit gradient conflict at parameters $\theta$ if their nonzero gradients have a negative cosine similarity:
\begin{equation}
  \cos(g_i, g_j) = \frac{g_i^\top g_j}{\|g_i\| \|g_j\|} < 0,
\end{equation}
where $g_k = \nabla_\theta \mathcal{L}_k(\theta)$.
We further define the regularized, magnitude-aware conflict score used in profiling and reporting as:
\begin{equation}
\label{eq:conflict-score}
  C_{ij} =
  \left[
    \frac{g_i^\top g_j}{(\|g_i\| + \varepsilon_g)(\|g_j\| + \varepsilon_g)}
  \right]_- \cdot
  \left(
    1 + \left|\log \frac{\|g_i\| + \varepsilon_g}{\|g_j\| + \varepsilon_g}\right|
  \right),
\end{equation}
where $[\cdot]_- = \max(0, -\cdot)$ retains only the negative cosine similarity, and $\varepsilon_g > 0$ is a fixed numerical floor. The magnitude ratio term accounts for gradient norm imbalance, which is a critical factor in PINNs because PDE residual gradients can be orders of magnitude larger than boundary gradients~\citep{wang2021understanding}.
\end{definition}

In all experiments, we use $\varepsilon_g = 10^{-12}$ only to avoid division by zero or $\log 0$ in degenerate cases. When both gradient norms are strictly positive and much larger than $\varepsilon_g$, \Eqref{eq:conflict-score} reduces to the usual cosine and log-ratio expression.

Definition~\ref{def:conflict} is used as a diagnostic tool for identifying a pathology observed during PINN training.
Remark~\ref{thm:pde_conflict} and Appendix~\ref{app:proof_pde_conflict} explain how PDE residuals, BC errors, operator coupling, and network-Jacobian correlations can induce negative alignment between loss gradients.
Empirical evidence in the appendix connects persistent conflict to structured errors, adapter specialization, thermoelastic energy mismatch, and unresolved directional conflict under reweighting.
For $K$ loss terms, the number of pairwise conflict terms grows as $\binom{K}{2}$, creating $\binom{K}{2}$ potential interference sources that must be managed.

\subsection{Profiling Gradient Conflict for Regime Selection}
\label{sec:profiling}
To diagnose gradient conflicts before intervention, we propose a profiling stage for the unmodified PINN.
We run a short Vanilla optimization window of $T_{\mathrm{prof}}=1000$ steps and compute the loss-specific gradients $g_k^{(t)} := \nabla_\theta \mathcal{L}_k(\theta_t)$.
For each step, we summarize directional conflict by the fraction and average strength of negative cosine pairs,
\begin{align*}
  f_{\mathrm{neg}}^{(t)}
  := \frac{|\{(i,j): i<j, \cos(g_i^{(t)},g_j^{(t)})<0\}|}{\binom{K}{2}},
  \qquad
  D^{(t)} := f_{\mathrm{neg}}^{(t)} |\bar c_-^{(t)}|,
\end{align*}
where $\bar c_-^{(t)}$ is the mean cosine over the negative pairs and is set to $0$ when no negative pair exists.
We summarize magnitude imbalance by the coefficient of variation of gradient norms,
\begin{align*}
  M^{(t)} :=
  \frac{\operatorname{std}_k(\|g_k^{(t)}\|)}
       {K^{-1}\sum_k \|g_k^{(t)}\|+\varepsilon_g}.
\end{align*}
The profiled summaries are
\begin{align*}
  \widehat f_{\mathrm{neg}} := \frac{1}{T_{\mathrm{prof}}}\sum_t f_{\mathrm{neg}}^{(t)},\quad
  \widehat D := \frac{1}{T_{\mathrm{prof}}}\sum_t D^{(t)},\quad
  \widehat M := \frac{1}{T_{\mathrm{prof}}}\sum_t M^{(t)},\quad
  \widehat R := \frac{\widehat D}{\widehat M+\varepsilon_R},
\end{align*}
where $\varepsilon_R=10^{-8}$.
The ratio $\widehat R$ is a practical surrogate for the theory-level directional-to-magnitude ratio $\mathcal U_K$, not a plug-in estimator: $\widehat M$ uses a coefficient-of-variation proxy, whereas Theorem~\ref{thm:regime_transition} analyzes a max-to-mean imbalance.
For deployment, we also measure persistence by comparing the first and last thirds of the profile,
\begin{align*}
  P :=
  \frac{\bar f_{\mathrm{neg}}^{\mathrm{late}}}
       {\max(\bar f_{\mathrm{neg}}^{\mathrm{early}},\varepsilon_P)},
\end{align*}
where $\varepsilon_P=10^{-8}$; we also compute the least-squares slope of $t \mapsto f_{\mathrm{neg}}^{(t)}$.
Low $\widehat f_{\mathrm{neg}}$ or rapidly decaying conflict indicates that scalar reweighting is usually sufficient, whereas persistent conflict motivates the adapter intervention described below.
Appendix~\ref{app:diagnostic_defs} gives the zero-gradient convention and full profile definitions, and Appendix Algorithm~\ref{alg:selection} gives executable pseudocode.
These profiled diagnostics determine whether scalar reweighting is sufficient or whether explicit loss-indexed architectural separation is needed.

The architectural intervention used in persistent-conflict regimes has two core components. The first is a shared feature trunk, meaning the common network body that computes the hidden representation used by all losses. The second is a set of per-loss low-rank adapters, whose parameters form explicit loss-indexed subspaces and whose outputs are mixed back into the shared representation. The default fixed mixer is Uniform Adapter Mixing (\UAM{}), which uses equal adapter weights. For $K\geq4$, these components can be augmented with Conflict-aware Adapter Mixing (\CAM{}) variants, including Layerwise Conflict-aware Adapter Mixing (\LCAM{}). The \CAM{} and \LCAM{} definitions apply to any $K$; our $K=3$ stress-test ablations use them only to assess whether dynamic mixing is worthwhile in the low-$K$ regime. See Appendix~\ref{app:method_details} for the shared trunk architecture, orthogonality regularization, conflict-aware mixing variant, and the full training algorithm in Algorithm~\ref{alg:training}; the main text focuses on the shared-output adapter construction and the default method choice.

\subsection{Per-Loss Low-Rank Adapters}
\label{sec:adapters}
\paragraph{Why per-loss adapters for PINNs.} In PINNs, all losses constrain the same physical field $u(x)$ rather than representing independent tasks with separate outputs.
This shared-output structure means that the shared trunk naturally captures the common representation, while adapters provide targeted adjustments to the gradient flow.
When the PDE residual gradient $g_{\text{PDE}}$ conflicts with the BC gradient $g_{\text{BC}}$, the standard shared-parameter approach forces a destructive compromise.
Per-loss adapters instead introduce structured low-rank parameter blocks whose Jacobian contributions are encouraged to occupy distinct adapter subspaces through the orthogonality loss in \Eqref{eq:ortho} and are interpreted via the blockwise NTK decomposition in Appendix~\ref{sec:blockwise_ntk}; because these blocks are mixed into the hidden state, as shown in \Eqref{eq:adapted}, they can decorrelate the Jacobians of different losses while maintaining a coherent shared output through the trunk.
Thus, the proposed adapters explicitly create separate loss-indexed parameter subspaces within a single-output PINN, rather than replacing the shared physical field with separate task heads.
Figure~\ref{fig:per_loss_adapter_pipeline} summarizes this block-level pipeline.

\begin{figure}[t!]
  \centering
  \includegraphics[width=\linewidth]{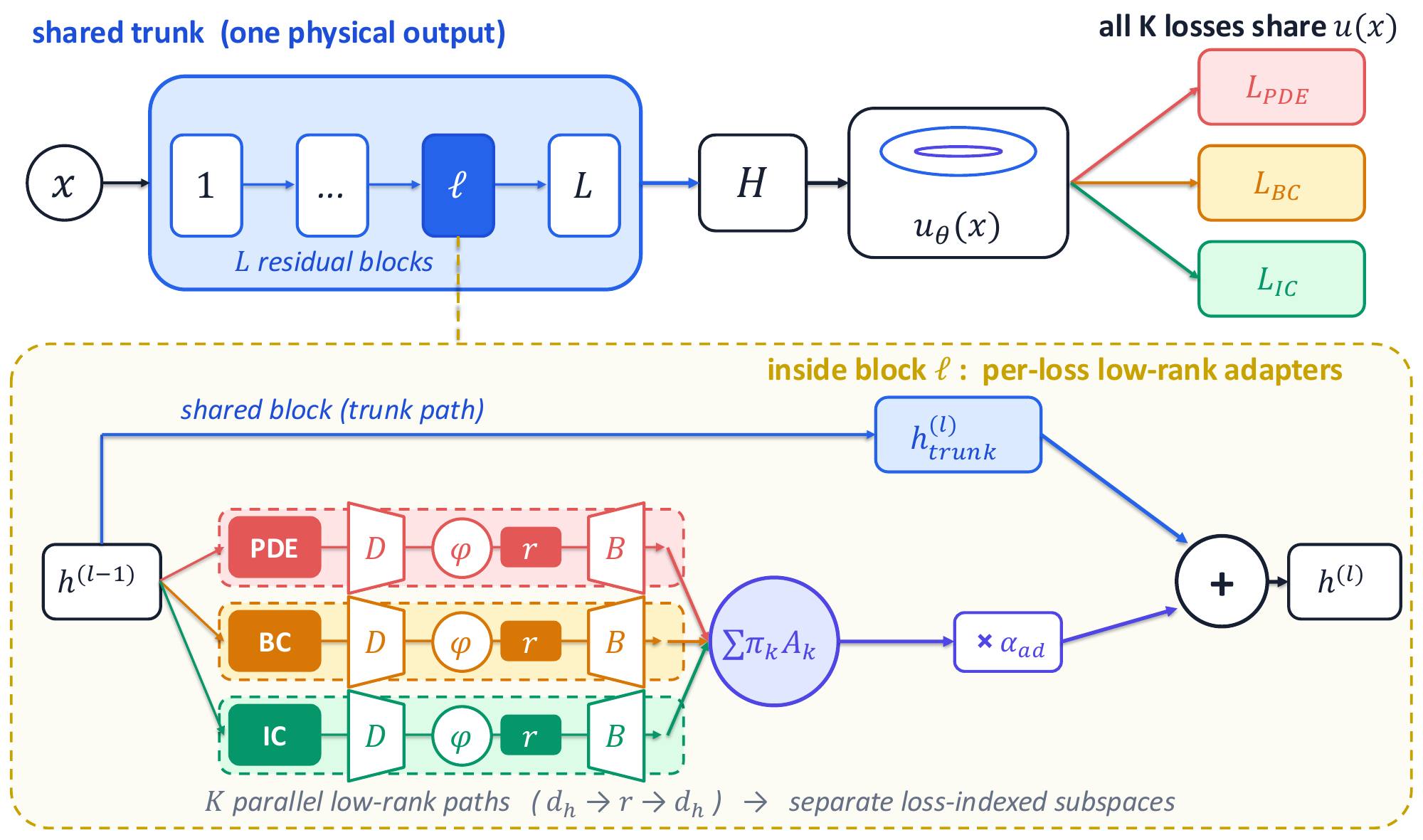}
  \caption{%
    Block-level pipeline for per-loss low-rank adapters in a shared-output PINN.
    At residual block $\ell$, the shared trunk produces $h_{\text{trunk}}^{(\ell)}$, while each loss-indexed adapter maps the previous hidden state through a rank-$r$ bottleneck formed by a down-projection followed by an up-projection.
    The mixer forms the weighted adapter correction $\sum_k \pi_k^{(\ell)} A_k^{(\ell)}$ and adds it to the shared trunk state, as shown in \Eqref{eq:adapted}.
    A single readout $H$ then produces the physical field $u_\theta(x)$, from which all PDE, BC, IC, or data losses are evaluated; the adapters therefore separate gradient subspaces without creating separate physical outputs.%
  }
  \label{fig:per_loss_adapter_pipeline}
\end{figure}

The core innovation is attaching $K$ low-rank adapters to the shared trunk, with one for each loss term.
At each residual block $\ell$, loss $k$ has an associated adapter:
\begin{equation}
  A_k^{(\ell)}(h) = B_k^{(\ell)} \cdot \phi(D_k^{(\ell)} h),
\end{equation}
where $D_k^{(\ell)} \in \mathbb{R}^{r \times d_h}$ is the down-projection, $B_k^{(\ell)} \in \mathbb{R}^{d_h \times r}$ is the up-projection, $\phi$ is the adapter nonlinearity, and $r = 16$ is the adapter rank.
Let $h_{\text{trunk}}^{(\ell)}$ denote the output of the $\ell$th shared residual block when applied to the current hidden state $h^{(\ell-1)}$.
Each adapter is assigned a latent mixing score $\rho_k^{(\ell)} \in [0,1]$ for loss $k$ at layer $\ell$.
The normalized rule below depends only on the relative values of these scores and is well-defined whenever at least one adapter is active.
In fixed \UAM{}, we use uniform adapter weights $\pi_k^{(\ell)} = 1/K$; equivalently, any common constant score $\rho_k^{(\ell)} \equiv c < 1$ gives the same weights, and our implementation stores $c=0.5$.
We then combine the adapters through the normalized mixing weights
\begin{equation}
  \pi_k^{(\ell)} := \frac{1-\rho_k^{(\ell)}}{\sum_{j=1}^K (1-\rho_j^{(\ell)})}
\end{equation}
These weights satisfy $\sum_{k=1}^K \pi_k^{(\ell)} = 1$, and they define the actual shared hidden-state update
\begin{equation}
\label{eq:adapted}
  h^{(\ell)} = h_{\text{trunk}}^{(\ell)} + \alpha_{\text{ad}} \sum_{k=1}^K \pi_k^{(\ell)} A_k^{(\ell)}(h^{(\ell-1)}),
\end{equation}
where $\alpha_{\text{ad}}$ controls the overall strength of the adapter correction added to the shared trunk representation.
The final output is the single physical field $u_\theta(x) = H(h^{(L)})$, where $H$ is the final linear readout. Each loss $\mathcal{L}_k$ is evaluated on quantities derived from this same shared output, such as boundary values, PDE residuals, or IC mismatches.
The loss-indexed mixer therefore modulates only how strongly each adapter subspace perturbs the common representation; it does not create separate physical outputs.

\subsection{Method variants and recommended defaults}\label{sec:variants}
For compact reporting, GradNorm (GN) denotes the existing GradNorm method, while \CAM{} refers to our global mixing variant, and \LCAM{} to its layer-wise variant.
\FAMOUAM{} combines uniform adapter mixing with loss-value-based reweighting and is the recommended default, offering the best accuracy-cost trade-off with a training-time cost of about $1.3\times$ Vanilla.
\GNUAM{} is an alternative for solving spectral-bias-dominated PDEs such as Helmholtz.
Alternative adapter architectures and mixing variants are summarized in Appendix Tables~\ref{tab:alt_adapters} and~\ref{tab:mixing_ablation}.

\subsection{Complementarity}\label{sec:complementarity}
Adapters resolve directional conflict, defined by $\cos(g_i, g_j) < 0$, through separate parameter subspaces. Reweighting resolves magnitude imbalance, defined by $\|g_i\| \gg \|g_j\|$, through scalar multipliers $\lambda_k$.
Because these mechanisms act on different aspects of the multi-loss update, they can be complementary in practice (Section~\ref{sec:experiments}).

\section{Theoretical Analysis}
\label{sec:theory}
We organize the theory around the directional-to-magnitude ratio
\begin{align*}
  \mathcal U_K := \frac{D_K}{M_{\mathrm{eff}}(K)},
\end{align*}
where $D_K$ denotes pairwise negative directional conflict and $M_{\mathrm{eff}}(K)$ denotes update-level magnitude imbalance in the model analyzed below. The theorem below gives the formal transition statement, while Appendix~\ref{app:diagnostic_defs} gives the profiling definitions used for the empirical diagnostic quantities. See Appendix~\ref{app:theory_details} for additional supporting theory.

The empirical observation that the balance between directional conflict and magnitude imbalance shifts with the number of loss terms is formalized here through a fixed-model transition theorem for the ratio $\mathcal U_K$. Appendix~\ref{sec:directional_magnitude_support} gives the supporting conditional concentration result for $D_K$, interpretation remarks, and the expectation-level finite-range corollary.

\begin{theorem}[Regime transition in a fixed spherical model]
\label{thm:regime_transition}
Fix a subspace $V \subseteq \mathbb{R}^p$ with
$\dim(V) = d_{\mathrm{eff}}$. Assume $g_k = m_k u_k$, where $u_1, \ldots, u_K$ are independent and uniformly distributed on
$V \cap \mathbb{S}^{p-1}$. In orthonormal coordinates on $V$, these are
independent and uniformly distributed on $\mathbb{S}^{d_{\mathrm{eff}}-1}$.
The magnitudes $m_1, \ldots, m_K$ are positive i.i.d. random variables, independent of
$u_1, \ldots, u_K$, and satisfy
\begin{align*}
  \mu_m := \E[m_1] \in (0,\infty),
  \qquad
  \mu_d := \frac{\Gamma(d_{\mathrm{eff}}/2)}
                {2\sqrt{\pi}\Gamma((d_{\mathrm{eff}}+1)/2)}.
\end{align*}
Write
\begin{align*}
  A_K &:= \max_{1 \le k \le K} m_k,
  \qquad
  \bar m := \frac{1}{K}\sum_{k=1}^K m_k,\\
  M_{\mathrm{eff}}(K) &:= \frac{A_K}{\bar m},
  \qquad
  \mathcal U_K := \frac{D_K}{M_{\mathrm{eff}}(K)}.
\end{align*}
Then, we have
\begin{align}
\label{eq:uk-factorization}
  \E[\mathcal U_K]
  = \mu_d \E\left[\frac{\bar m}{A_K}\right].
\end{align}
Moreover, the following rate statements hold.

\paragraph{Exponential-tail magnitudes.} Suppose there exist constants $\nu > 0$, $0 < a \le b < \infty$, and
$t_0 > 0$ such that for all $t \ge t_0$,
\begin{align*}
  \exp\left(-\frac{b t}{\nu}\right)
  \le \Pr(m_1 > t)
  \le \exp\left(-\frac{a t}{\nu}\right).
\end{align*}
Then
\begin{align*}
  \E[\mathcal U_K]
  = \Theta(d_{\mathrm{eff}}^{-1/2} / \log K).
\end{align*}

\paragraph{Pareto-tail magnitudes.} Suppose
\begin{align*}
  \Pr(m_1 > t) = \left(\frac{t_0}{t}\right)^\alpha,
  \qquad
  t \ge t_0,
\end{align*}
for some $\alpha > 1$. Then
\begin{align*}
  \E[\mathcal U_K]
  = \Theta(d_{\mathrm{eff}}^{-1/2} K^{-1/\alpha}).
\end{align*}
\end{theorem}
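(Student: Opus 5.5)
The plan is to first pin down the directional factor. In the model, I take $D_K := \binom{K}{2}^{-1}\sum_{i<j}[\cos(g_i,g_j)]_-$. Since $m_k>0$, we have $\cos(g_i,g_j)=\langle u_i,u_j\rangle$, so $D_K$ is a measurable function of $(u_1,\dots,u_K)$ alone. Likewise $M_{\mathrm{eff}}(K)^{-1}=\bar m/A_K$ is a function of $(m_1,\dots,m_K)$ alone. Independence of directions and magnitudes then gives $\E[\mathcal U_K]=\E[D_K]\,\E[\bar m/A_K]$, with both factors finite because $0\le D_K\le 1$ and $0<\bar m/A_K\le 1$.

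\textbf{Computing $\E[D_K]$.} By rotation invariance and independence, conditioning on $u_i$ shows that $\langle u_i,u_j\rangle$ has the law of one coordinate $X$ of a uniform point on $\mathbb{S}^{d_{\mathrm{eff}}-1}$. Its density is proportional to $(1-x^2)^{(d_{\mathrm{eff}}-3)/2}$ on $[-1,1]$. The law is symmetric, so $\E[X_-]=\tfrac12\E|X|$. A Beta-integral computation gives $\E|X|=\Gamma(d_{\mathrm{eff}}/2)/(\sqrt\pi\,\Gamma((d_{\mathrm{eff}}+1)/2))$, so $\E[D_K]=\mu_d$, which proves \eqref{eq:uk-factorization}. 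Gautschi's inequality for $\Gamma(x+\tfrac12)/\Gamma(x)$ then gives $\mu_d=\Theta(d_{\mathrm{eff}}^{-1/2})$. The whole problem therefore reduces to showing $\E[\bar m/A_K]=\Theta(1/q_K)$, where $q_K=\log K$ in the exponential case and $q_K=K^{1/\alpha}$ in the Pareto case. Throughout, constants may depend on the law of $m_1$ but not on $K$ or $d_{\mathrm{eff}}$.

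\textbf{Upper bound.} By exchangeability, $\E[\bar m/A_K]=\E[m_1/A_K]$. Let $A'=\max_{2\le j\le K}m_j$, which is independent of $m_1$. Fix a truncation level $\tau_K=c\,q_K$ and use
\[
\frac{m_1}{A_K}\le \mathbf 1\{A'<\tau_K\}+\frac{m_1}{\tau_K},
\]
which holds because $m_1/A_K\le 1$ and $m_1/A_K\le m_1/A'$. Taking expectations,
\[
\E[m_1/A_K]\le \Pr(m_1\le\tau_K)^{K-1}+\mu_m/\tau_K\le \exp\bigl(-(K-1)\Pr(m_1>\tau_K)\bigr)+\mu_m/\tau_K .
\]
\emph{Exponential case.} Take $\tau_K=\nu\log K/(2b)$. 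The lower tail bound gives $\Pr(m_1>\tau_K)\ge K^{-1/2}$, so the first term is $e^{-\Omega(\sqrt K)}=o(1/\log K)$.

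\emph{Pareto case.} A constant $c$ only makes the first term a constant, which is not small enough. Here I would instead compute directly, using the fact that $A'\ge t_0$ makes $1/A'$ bounded:
\[
\E[m_1/A_K]\le \mu_m\,\E[1/A'] ,
\]
with
\[
\E[1/A']=\int_0^{1/t_0}\bigl(1-(t_0s)^\alpha\bigr)^{K-1}ds=\Theta\bigl(t_0^{-1}K^{-1/\alpha}\bigr).
\]
The last equality follows by substituting $s=K^{-1/\alpha}t_0^{-1}y$ and dominated convergence. This truncation step, and the fact that it must be done differently for the two tail classes, is where I expect the main care to be needed. In particular, in the exponential case I cannot use $\E[1/A']$ directly, since $m_1$ may put mass near $0$.

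\textbf{Lower bound.} Let $E_K=\{\bar m\ge\mu_m/2\}\cap\{A_K\le C q_K\}$. On $E_K$, $\bar m/A_K\ge \mu_m/(2Cq_K)$, so it suffices to show $\Pr(E_K)\ge \tfrac12$ for large $K$.
\begin{itemize}
\item By the weak law of large numbers (valid since $\mu_m<\infty$, including Pareto with $\alpha>1$), $\Pr(\bar m<\mu_m/2)\to0$.
\item By a union bound, $\Pr(A_K>Cq_K)\le K\Pr(m_1>Cq_K)$. In the exponential case, with $q_K=\nu\log K$, this is at most $K^{1-aC}$, which is small once $C>2/a$. In the Pareto case, with $q_K=t_0K^{1/\alpha}$, it equals $C^{-\alpha}$, which is small for large $C$.
\end{itemize}
Hence $\E[\bar m/A_K]\ge \mu_m/(4Cq_K)$ for large $K$. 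Finitely many small $K$ are absorbed into the constants, since $\E[\bar m/A_K]\in(0,1]$. Combining this with the upper bound and with $\mu_d=\Theta(d_{\mathrm{eff}}^{-1/2})$ yields both rate statements.
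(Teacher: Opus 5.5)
Your proof is correct and follows the paper's overall structure. You get the exact factorization $\E[\mathcal U_K]=\mu_d\,\E[\bar m/A_K]$ from independence of directions and magnitudes; the paper takes $\E[D_K]=\mu_d$ from Proposition~\ref{prop:dk_concentration}, which uses the same one-pair computation you do by linearity. You truncate at level $c\log K$ for the exponential upper bound. For both lower bounds you use the good event $\{\bar m\ge\mu_m/2\}\cap\{A_K\le Cq_K\}$ with the weak law of large numbers.

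The one genuinely different step is the Pareto upper bound. The paper separates the dependent pair $(\bar m,A_K)$ with H\"older's inequality for some $q\in(1,\alpha)$ and $r=q/(q-1)$. This needs $\E[m_1^q]<\infty$ and a Beta-integral evaluation $\E[A_K^{-r}]=\Theta(K^{-r/\alpha})$. You instead use exchangeability to replace $\bar m$ by $m_1$, then bound $1/A_K\le 1/A'$, where $A'$ is the leave-one-out maximum and is independent of $m_1$. The decoupling is therefore exact, and you need only $\mu_m$ and the first negative moment $\E[1/A']$. This gives an explicit constant $\mu_m\Gamma(1+1/\alpha)/t_0$ in the upper bound.

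Your integral is the $r=1$ case of the paper's Beta computation, with $K-1$ in place of $K$. H\"older cannot reach that case, since it forces $r>\alpha/(\alpha-1)>1$. In exchange for a higher moment, the paper's route gives a closed form for every negative moment of $A_K$.

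Your remark that $\E[1/A']$ may be infinite under the exponential-tail hypothesis is correct, because the law of $m_1$ near $0$ is unconstrained. The same obstruction would block the negative moments in a H\"older argument, so truncation is the natural tool in that case for both proofs.
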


See Appendix~\ref{sec:directional_magnitude_support} for supporting statements and Appendix~\ref{app:proof_regime} for the full proofs.
Empirically, we use the short-run profile described in Section~\ref{sec:profiling} and the ratio $\widehat R := \widehat D / (\widehat M + \varepsilon_R)$ as a practical surrogate for $\mathcal U_K$.
Appendix~\ref{app:diagnostic_defs} gives the zero-gradient convention and additional reporting details for the profiled summaries.

\section{Experiments}
\label{sec:experiments}
We evaluate on 60+ PDE configurations spanning forward, inverse, multi-physics, parameter-varying, and high-dimensional settings. See Appendix~\ref{app:exp_setup} and Appendix~\ref{app:main_text_diagnostics} for detailed benchmark definitions, the natural $K$-scaling setup, compared methods, evaluation protocol, computational-cost analysis, and supplementary diagnostics. The main text focuses on the core empirical findings and the resulting deployment heuristic. The forward-benchmark tables below refer to the conflict-free gradient method (ConFIG) and aligned multi-task learning (AlignedMTL) by their standard short names, alongside the previously defined PCGrad, CAGrad, NashMTL, \UAM{}, GN, and \LCAM{} labels.

\subsection{Main Results on Forward Benchmarks}
Tables~\ref{tab:main_results_a} and~\ref{tab:main_results_b} present the full result matrices for five main PDE benchmarks at 10K epochs. Table~\ref{tab:main_results_a} groups Burgers, Helmholtz, and Allen--Cahn, while Table~\ref{tab:main_results_b} groups convection-diffusion (Conv-Diff), Klein--Gordon, and the average rank.

\begin{table}[t!]
\centering
\caption{10K-epoch benchmark I. Relative $L_2$ error on Burgers, Helmholtz, and Allen--Cahn. We report the mean $\pm$ std with $d_h=128$ and 5 seeds unless otherwise noted. Best values are bold and second-best values are underlined.}
\label{tab:main_results_a}
{\small
\begin{tabular}{lccc}
\toprule
Method & Burgers & Helmholtz & Allen--Cahn \\
\midrule
\multicolumn{4}{l}{Baselines} \\
  Vanilla & \benchval{8.4}{2.4}{-3} & \benchval{2.8}{1.1}{-2} & \benchval{1.60}{0.33}{-4} \\
\midrule
\multicolumn{4}{l}{Loss Weighting} \\
  FAMO & \benchvalul{4.7}{0.14}{-3} & \benchval{1.4}{0.63}{-3} & \benchval{2.17}{0.2}{-4} \\
  GradNorm & \benchval{1.3}{1.9}{-2} & \benchvalul{6.40}{6.4}{-4} & \benchval{5.7}{0.83}{-3} \\
  Causal & \benchvalseed{7.5}{3}{-3}{3} & \benchvalseed{5.3}{1.1}{-2}{3} & \benchvalseed{1.71}{0.096}{-4}{3} \\
  SelfAdapt & \benchvalseed{1.1}{0.8}{-2}{3} & \benchvalseed{1.2}{0.5}{-2}{3} & \benchvalseed{1.70}{0.39}{-4}{3} \\
  Uncertainty & \benchval{1.1}{0.7}{-2} & \benchvalseed{1.8}{1.7}{-2}{4} & \benchvalul{1.31}{0.27}{-4} \\
\midrule
\multicolumn{4}{l}{Gradient Surgery} \\
  ConFIG & \benchval{9.6}{2.7}{-2} & \benchvalbfseed{2.49}{0.77}{-4}{3} & $1.4\times 10^{-2}$\seedmark{1} \\
  PCGrad & \benchval{4.8}{0.11}{-3} & \benchvalseed{9.06}{2.8}{-4}{4} & $2.25\times 10^{-4}$\seedmark{1} \\
  CAGrad & \benchval{1.5}{1}{-2} & \benchval{3.3}{1.5}{-2} & $1.96\times 10^{-4}$\seedmark{1} \\
  NashMTL & \benchval{6.5}{6.7}{-2} & \benchval{1.65}{1.11}{-1} & \benchval{3.13}{0.43}{-4} \\
  AlignedMTL & \benchval{5.1}{0.34}{-3} & \benchvalseed{1.1}{0.33}{-3}{4} & $1.81\times 10^{-4}$\seedmark{1} \\
\midrule
\multicolumn{4}{l}{Architecture (Ours)} \\
  \UAM & \benchval{4.9}{0.075}{-3} & \benchval{1.5}{0.1}{-2} & \benchvalbf{1.19}{0.21}{-4} \\
  \FAMOUAM & \benchval{4.8}{0.082}{-3} & \benchval{1.1}{0.46}{-3} & \benchval{1.84}{0.28}{-4} \\
  \GNUAM & \benchvalbf{4.6}{0.12}{-3} & \benchval{1.0}{0.44}{-3} & \benchval{1.6}{0.21}{-3} \\
  \FAMOLCAM & \benchval{4.8}{0.17}{-3} & \benchval{1.3}{0.41}{-3} & \benchval{2.59}{0.39}{-4} \\
  \GNLCAM & \benchvalbf{4.6}{0.12}{-3} & \benchval{7.81}{2.8}{-4} & \benchval{2.3}{0.45}{-3} \\
\bottomrule
\multicolumn{4}{l}{\footnotesize Superscript $[n]$ denotes seed count when $<$5.}
\end{tabular}
}
\end{table}

\begin{table}[t!]
\centering
\caption{10K-epoch benchmark II. Relative $L_2$ error on Conv-Diff and Klein--Gordon together with the average rank. We report mean $\pm$ std with $d_h=128$ and 5 seeds, unless otherwise noted.}
\label{tab:main_results_b}
{\small
\begin{tabular}{lccr}
\toprule
Method & Conv-Diff & Klein--Gordon & Avg. Rank \\
\midrule
\multicolumn{4}{l}{Baselines} \\
  Vanilla & \benchval{3.55}{0.74}{-4} & \benchval{4.6}{1}{-2} & 7.8 \\
\midrule
\multicolumn{4}{l}{Loss Weighting} \\
  FAMO & \benchval{3.95}{1.4}{-4} & \benchvalul{2.5}{0.39}{-3} & 6.4 \\
  GradNorm & \benchval{1.8}{0.46}{-3} & \benchval{1.13}{0.57}{-1} & 11.4 \\
  Causal & \benchvalseed{2.98}{0.39}{-4}{3} & \benchvalseed{9.80}{0.13}{-1}{3} & 9.6 \\
  SelfAdapt & \benchvalseed{4.12}{0.91}{-4}{3} & \benchvalseed{3.8}{0.34}{-2}{3} & 8.1 \\
  Uncertainty & \benchval{3.66}{0.73}{-4} & \benchval{4.89}{4.87}{-1} & 9.3 \\
\midrule
\multicolumn{4}{l}{Gradient Surgery} \\
  ConFIG & \benchval{8.12}{4.44}{-1} & \benchval{2.39}{3}{-1} & 12.6 \\
  PCGrad & \benchval{3.92}{1.7}{-4} & \benchval{9.0}{3.7}{-3} & 6.0 \\
  CAGrad & \benchval{4.63}{1.5}{-4} & \benchval{4.7}{1}{-2} & 11.2 \\
  NashMTL & \benchval{5.41}{1.1}{-4} & \benchval{3.6}{1.1}{-2} & 12.0 \\
  AlignedMTL & \benchval{4.16}{1.4}{-4} & \benchval{1.14}{2.1}{-1} & 8.7 \\
\midrule
\multicolumn{4}{l}{Architecture (Ours)} \\
  \UAM & \benchvalbf{2.64}{0.32}{-4} & \benchval{4.3}{0.7}{-2} & \underline{5.4} \\
  \FAMOUAM & \benchvalul{2.70}{0.21}{-4} & \benchvalbf{1.6}{0.38}{-3} & \textbf{4.4} \\
  \GNUAM & \benchval{1.0}{0.19}{-3} & \benchval{4.8}{1.1}{-2} & 8.5 \\
  \FAMOLCAM & \benchval{3.07}{0.56}{-4} & \benchvalbf{1.6}{0.11}{-3} & 5.9 \\
  \GNLCAM & \benchval{1.4}{0.41}{-3} & \benchval{6.2}{1.7}{-2} & 8.8 \\
\bottomrule
\end{tabular}
}
\end{table}

Architectural and loss-weighting combinations are strongest. On 10K-epoch Burgers, \GNUAM{} and \FAMOUAM{} are statistically tied at about $4.6\times 10^{-3}$. This Burgers tie corresponds to a $1.8\times$ improvement over Vanilla, with a 95\% confidence interval (CI) of $[1, 2]$ and a Cohen's $d=2.1$. On Helmholtz, \GNUAM{} with $1.0\times 10^{-3}$ achieves a $28\times$ improvement over Vanilla, with a CI $[18, 46]$ and $d=3.4$. \FAMOUAM{} achieves $25\times$, with a CI $[16, 39]$ and the same effect size $d=3.4$. The gap persists across width and three-dimensional (3D) evaluations in Table~\ref{tab:helmholtz_scaling}.
All architecture-based methods successfully complete every benchmark without failure, whereas gradient surgery methods such as CAGrad and NashMTL diverge on inverse problems. ConFIG also shows extreme PDE-dependent variance, with $3\times 10^{-4}$ on the Helmholtz but $0.81$ on Conv-Diff, where four out of five seeds diverge.
No single method dominates, and the right method depends on the PDE conflict profile. Both causal training and self-adaptive weights further confirm this PDE dependence. Causal training matches \FAMOUAM{} on Burgers but catastrophically fails on Klein--Gordon, reaching $0.98$ at 10K and $0.92$ at 20K, whereas \FAMOUAM{} reaches $1.6\times 10^{-3}$. Self-adaptive weights also show high variance across PDEs.
Additional PDE checks in Appendix~\ref{sec:heldout} confirm these findings. On Klein--Gordon, \FAMOUAM{} achieves a $29\times$ gain, with a CI $[24, 36]$ and $d=6.1$. On Darcy flow, the gain is $2.0\times$, with a CI $[1, 3]$ and $d=2.3$.
Appendix Figure~\ref{fig:qual_adapter_specialization_regime} provides a qualitative view of the mechanism underlying regime dependence by visualizing the learned adapter contributions in a persistent-conflict case and a low-conflict control.

\subsection{Inverse Problems and Robustness}

\begin{table}[t!]
\centering
\caption{Inverse problem results at 10K epochs with $d_h=128$ and 3 to 5 seeds. Full-parameter-space gradient surgery methods diverged on these benchmarks and are omitted here. $\dagger$GradNorm entries use a simplified loss-ratio variant rather than the reference gradient-norm algorithm used in Tables~\ref{tab:main_results_a} and~\ref{tab:main_results_b}.}
\label{tab:inverse_10k}
\small
\begin{tabular}{lccc}
\toprule
Method & Inverse Burgers ($K=3$) & Inverse Heat ($K=4$) & Inverse Poisson ($K=3$) \\
\midrule
Vanilla & $(8.0 \pm 0.7)\times 10^{-2}$ & $(2.2 \pm 3.8)\times 10^{-2}$ & $(2.71 \pm 0.65)\times 10^{-1}$ \\
FAMO & \underline{$(6.5 \pm 0.4)\times 10^{-2}$} & $(1.1 \pm 2.3)\times 10^{-2}$ & \textbf{\boldmath$(2.3 \pm 1.6)\times 10^{-3}$} \\
GradNorm$^\dagger$ & \textbf{\boldmath$(5.9 \pm 0.2)\times 10^{-2}$} & $(1.1 \pm 0.1)\times 10^{-3}$ & $(1.0 \pm 1.5)\times 10^{-2}$ \\
\midrule
\FAMOUAM & $(7.2 \pm 0.4)\times 10^{-2}$ & \underline{$(2.8 \pm 2.6)\times 10^{-4}$} & \underline{$(2.8 \pm 0.9)\times 10^{-3}$} \\
\GNUAM$^\dagger$ & $(6.6 \pm 0.4)\times 10^{-2}$ & $(8.7 \pm 0.9)\times 10^{-4}$ & $(3.4 \pm 0.9)\times 10^{-3}$ \\
\FAMOLCAM & $(7.3 \pm 0.2)\times 10^{-2}$ & \textbf{\boldmath$(2.6 \pm 0.74)\times 10^{-4}$} & $(3.2 \pm 0.7)\times 10^{-3}$ \\
\bottomrule
\end{tabular}
\end{table}

Full-parameter-space gradient surgery methods, namely, PCGrad, CAGrad, NashMTL, and ConFIG, produce invalid values within the first 50 to 200 epochs during inverse-problem runs due to the $10^4\times$ gradient norm mismatch between network weights and scalar physical constants. Table~\ref{tab:surgery_mitigation} in the appendix gives the details.
At 10K epochs in Table~\ref{tab:inverse_10k}, a $K$-dependent pattern emerges. For $K=3$ inverse problems, FAMO and GradNorm without adapters outperform adapter combinations by 10\% to 25\%. On the $K=4$ inverse heat problem, adapter variants substantially improve stability compared with Vanilla, with \FAMOLCAM{} numerically best and \FAMOUAM{} close behind, while Vanilla fails catastrophically on one of three seeds.
Parameter-grouped surgery variants, such as PCGrad-Grouped and CAGrad-Grouped, resolve divergence but remain far worse than the direct baselines shown in Table~\ref{tab:inverse_10k}, which confirms that the failure is structural. Gradient surgery corrupts the low-dimensional physical-parameter learning signal.

\subsection{\texorpdfstring{Multi-Physics and $K$-Scaling}{Multi-Physics and K-Scaling}}
\label{sec:kscaling}
The natural scaling studies clarify the regime picture.
In the naturally coupled thermoelastic system with $K=4$, reported in Table~\ref{tab:natural_k4} in the appendix, the adapter-and-weighting combination outperforms either component alone. This thermoelastic result confirms the complementarity in a multi-physics setting where the loss decomposition is intrinsic to the underlying physics.
Appendix Figure~\ref{fig:qual_natural_k4_energy_exchange} shows that this improvement is physically structured. The adapter variant mainly improves the coupled displacement branch and the associated elastic-energy exchange rather than merely reducing a pixel-wise error metric.

\paragraph{Natural higher-$K$ validation.} Natural multi-physics systems with $K=5$ reactive transport and $K=6$ magnetohydrodynamics (MHD), as discussed in Appendix~\ref{app:kscaling_extended}, fall into the empirically magnitude-dominated regime. FAMO alone outperforms \FAMOUAM{}, consistent with the low-profile proxy $\widehat R$.
See Appendix~\ref{app:forward_extensions} for additional 3D and broader forward-validation results.

\subsection{Method Selection and Practical Guidelines}
\label{sec:method_selection}
The results above reveal a clear regime structure that motivates a practical, profiling-based deployment heuristic.
In practice, budget-constrained settings favor FAMO. Forward problems with $K=3$ often benefit from \FAMOUAM{}. Inverse problems with $K=3$ favor FAMO or GradNorm without adapters, while inverse problems with $K=4$ commonly benefit from \FAMOUAM{}. The natural higher-$K$ multi-physics systems in our suite are better handled by FAMO alone.
The resulting deployment rule is diagnostic-first.
Run the 1000-step Vanilla profile from Section~\ref{sec:profiling}; if the profiled Vanilla error is already below $10^{-3}$ or $\widehat f_{\mathrm{neg}}<0.05$, use FAMO alone.
If the conflict persists with $P>0.8$, use \FAMOUAM{}; if it decays with $P<0.5$ and slope $<-0.02$, use FAMO alone.
When conflict is present but ambiguous, we default to \FAMOUAM{}.
The ratio $\widehat R$ is reported as a compact direction-to-magnitude summary, while $P$ distinguishes persistent conflict from transient negative pairs.
Appendix Algorithm~\ref{alg:selection} gives the full pseudocode; the computational-cost analysis and the qualitative profiling-to-outcome bridge are provided in Appendix~\ref{app:main_text_diagnostics}.

\paragraph{Baseline fairness.} All baselines share the same trunk, the AdamW optimizer with a learning rate of $10^{-3}$, the same scheduler, and a nominal training budget of 10K epochs.
Most entries use 5 seeds. Any results with fewer seeds are explicitly marked in Tables~\ref{tab:main_results_a} and~\ref{tab:main_results_b}.
NashMTL is additionally swept at a learning rate of $3\times 10^{-4}$. ConFIG is swept over $\{10^{-4}, 5\times 10^{-4}, 10^{-3}, 2\times 10^{-3}, 5\times 10^{-3}\}$, as reported in Appendix~\ref{app:config}. GradNorm follows the reference algorithm, with gradient norms computed with respect to the last shared layer.
Implementation details appear in Appendix~\ref{app:details}. The source code is provided in the supplementary material.

\paragraph{Domain decomposition and inference.} Extended physics-informed neural networks (XPINNs) partition the input space~\citep{jagtap2020extended}, whereas per-loss adapters partition the gradient space. These are orthogonal interventions, as Tables~\ref{tab:xpinn_comparison} and~\ref{tab:xpinn_sensitivity} show.
Adapters increase the model parameter count by 15\% to 41\% in the configurations reported in Tables~\ref{tab:efficiency} and~\ref{tab:xpinn_comparison}.
See Appendix~\ref{app:main_text_diagnostics} for training-curve, NTK-weighting, and rank-ablation diagnostics.

\section{Conclusion}
\label{sec:conclusion}
We characterized gradient conflict in PINNs and showed that its structure is regime-dependent rather than uniformly requiring a single mitigation strategy. Across 60+ PDE configurations, directional conflict dominates the standard forward $K=3$ benchmarks and the natural $K=4$ thermoelastic system, making loss-indexed adapters combined with reweighting the most effective choice. In contrast, $K=3$ inverse problems and natural $K=5$ and $K=6$ multi-physics systems are largely magnitude-dominated, often favoring reweighting alone, and expose the unreliability of full-parameter-space gradient surgery on heterogeneous parameter spaces. A 1000-step profiling rule turns this observation into a practical method-selection heuristic, while the regime-transition theorem and the blockwise NTK view explain why adapter gains appear only in the appropriate conflict regime. Overall, our results support a diagnostic-first workflow for selecting PINN training interventions.

\bibliographystyle{plainnat}
\bibliography{references}

\appendix
\clearpage
\startcontents[appendix]
\pdfbookmark[1]{Appendix Table of Contents}{app-toc}
\section*{Appendix Table of Contents}
\printcontents[appendix]{l}{1}{\setcounter{tocdepth}{3}}
\clearpage

\section{List of Notation}
\label{app:notation}

We list the symbols used throughout the paper.

\begin{table}[t!]
\centering
\caption{List of notation}
\label{tab:notation}
\small
\begin{tabular}{ll}
\toprule
Symbol & Meaning \\
\midrule
$\Omega, \partial\Omega, x, d$ & Domain, boundary, point, dimension. \\
$\mathcal{F}, f, \mathcal{R}_{\mathrm{PDE}}, \mathcal{R}_m, \Gamma_m, u, u_\theta$ & PDE operator, source, residuals, constraint sets, and fields. \\
$\theta, \theta_{\text{trunk}}, \theta_k$ & All, trunk, and $k$th adapter parameters. \\
$\mathcal{L}, \mathcal{L}_k, \mathcal{L}_{\mathrm{PDE}}, \mathcal{L}_{\mathrm{BC}}, \mathcal{L}_{\mathrm{IC}}, K, \lambda_k$ & Losses, loss count, and weights. \\
$\nu, \kappa, \alpha$ & Physical coefficients in inverse and coupled problems. \\
$g_k, \cos(g_i,g_j), C_{ij}, \varepsilon_g$ & Gradient, cosine, conflict score, floor. \\
$g_k^{(\ell)}, C_k^{(\ell)}$ & Layer gradient and conflict. \\
$C_k, \bar r$ & Mean conflict, average norm ratio. \\
$h^{(\ell)}, H, L, d_h$ & Layer feature, readout, depth, width. \\
$h_{\text{trunk}}^{(\ell)}$ & Pre-adapter trunk feature. \\
$A_k^{(\ell)}(h), D_k^{(\ell)}, B_k^{(\ell)}, r$ & Adapter, projections, rank. \\
$\rho_k^{(\ell)}, \pi_k^{(\ell)}, \alpha_{\text{ad}}$ & Mixing score, adapter mixing weight, adapter strength. \\
$\mathcal{R}_{\text{ortho}}, \lambda_{\text{self}}, \lambda_{\text{cross}}$ & Orthogonality regularizer and weights. \\
$\Phi(x), \omega_f, F, \omega_{\max}$ & Fourier features, frequencies, bands, max frequency. \\
$D_K, M_{\mathrm{eff}}(K), \mathcal{U}_K, K^\star_{\mathrm E}$ & Conflict, imbalance, regime ratio, endpoint. \\
$T_{\mathrm{prof}}, \widehat f_{\mathrm{neg}}, \widehat D, \widehat M, \widehat R, P$ & Profile length, negative-pair fraction, summaries, persistence. \\
$J_\theta(x), r_i, e_j$ & Jacobian, pointwise PDE residual, pointwise BC error. \\
$\Theta_{ij}, \Theta_{ij}^{\text{shared}}, \Theta_{ij}^{(k)}$ & Block NTK, shared and adapter terms. \\
\bottomrule
\end{tabular}
\end{table}

\section{Related Work}
\label{sec:related}

\paragraph{Training difficulties and loss reweighting in PINNs.} \citet{raissi2019physics} introduced PINNs, and \citet{karniadakis2021physics} reviewed the broader physics-informed machine-learning landscape. Subsequent work identified gradient pathologies~\citep{wang2021understanding}, NTK-driven spectral bias~\citep{wang2022and, rahaman2019spectral}, soft-constraint trivial solutions~\citep{leiteritz2021trivial}, poor propagation~\citep{daw2023mitigating}, and broader failure modes~\citep{krishnapriyan2021characterizing}. Recent benchmarks and guides systematize these observations~\citep{hao2024pinnacle, wang2023expertguide}. Loss reweighting methods include learning rate annealing~\citep{wang2021understanding}, NTK-based weighting~\citep{wang2022and}, self-adaptive weights~\citep{mcclenny2023self, xiang2022selfadaptive}, GradNorm~\citep{chen2018gradnorm}, SoftAdapt~\citep{heydari2019softadapt}, FAMO~\citep{liu2024famo}, uncertainty weighting~\citep{kendall2018multi}, and causal training~\citep{wang2024respecting}. These methods adjust when or how much each loss contributes, but they do not directly resolve directional gradient conflict. Causal training, for example, enforces temporal causality by progressively activating loss terms from earlier to later time points.

Other PINN-side interventions modify the loss structure or update geometry itself. Gradient-enhanced PINNs add derivative-based residual terms~\citep{yu2022gpinn}, exact boundary imposition removes soft BC penalties via distance functions~\citep{sukumar2022exact}, and recent cone-based or gradient-alignment analyses study non-conflicting PINN updates directly~\citep{hwang2024dualcone, wang2025gradientalignment}. \citet{bischof2021multi} showed that multi-objective loss balancing can match gradient surgery. These methods are complementary to our per-loss adapters. They alter the loss schedule, loss scale, or update geometry, while our adapters adjust how each loss influences parameters through explicit loss-indexed subspaces.

\paragraph{Gradient surgery for MTL.} Representative MTL conflict-mitigation and multi-objective methods include the multiple-gradient descent algorithm~\citep{sener2018multi}, Pareto MTL~\citep{lin2019pareto}, PCGrad~\citep{yu2020gradient}, CAGrad~\citep{liu2021conflict}, NashMTL~\citep{navon2022multi}, Gradient Vaccine~\citep{wang2021gradvac}, RotoGrad~\citep{javaloy2022rotograd}, ConFIG~\citep{liu2025config}, and AlignedMTL~\citep{senushkin2023independent}.
These methods assume homogeneous shared parameters; however, this assumption makes them prone to failure on PINN problems with heterogeneous parameter types. In our setting, this heterogeneity combines network weights with learnable physical constants and produces a $10^4\times$ norm mismatch, as we systematically demonstrate in Section~\ref{sec:experiments}.
Indeed, ConFIG's unit-vector normalization discards gradient magnitude information critical for PDE loss landscapes where gradient magnitude ratios can exceed $10^3$ in our evaluation suites. AlignedMTL's singular value decomposition (SVD)-based alignment also incurs $O(p^2)$ cost per step.

\paragraph{Architecture-level approaches and adapters.} Domain decomposition methods such as conservative PINNs~\citep{jagtap2020conservative}, XPINNs~\citep{jagtap2020extended}, finite basis PINNs~\citep{moseley2023finite}, and augmented PINNs~\citep{he2023augmented} partition the input space into subdomains, each handled by a local PINN with interface constraints.
Variational formulations such as variational PINNs~\citep{kharazmi2019variational} and their hp-variants~\citep{kharazmi2021hpvpinns} instead modify the weak-form residual and can be combined with domain decomposition.
These methods address spatial complexity, weak-form structure, or discontinuities, but they do not resolve gradient conflict between different loss terms within each subdomain or formulation.
Our per-loss adapters are orthogonal to this line of work. They augment the shared trunk with loss-indexed low-rank parameter blocks and can be integrated with domain decomposition by placing such adapters inside each XPINN subdomain when a problem exhibits both spatial and loss-term complexity.

Beyond the PINN setting, broader MTL studies architecture-level soft sharing, where representations, routing weights, or parameter subsets are partially shared across tasks rather than tied through a single monolithic trunk.
Representative examples include Cross-Stitch Networks~\citep{misra2016crossstitch}, task-attention mechanisms~\citep{liu2019mtan}, and Multi-gate mixture-of-experts (MoE)~\citep{ma2018mmoe}, in addition to generic sparse MoE~\citep{shazeer2017outrageously} and task grouping~\citep{fifty2021efficiently, standley2020which}.
Task-specific adapters were popularized in transfer learning~\citep{houlsby2019parameter}, and Low-Rank Adaptation (LoRA) introduced rank-controlled adapters for parameter-efficient large language model fine-tuning~\citep{hu2022lora}. We use this low-rank parameterization not for task adaptation, but to instantiate loss-indexed perturbation subspaces inside a single shared-output PINN.
Our loss-indexed adapter mixing is derived from gradient conflict rather than input features, which makes the method complementary to both domain decomposition and MoE approaches.

\paragraph{NTK analysis and spectral diagnostics in deep learning.} The NTK~\citep{jacot2018neural} provides a spectral characterization of neural network training dynamics.
\citet{wang2022and} applied NTK analysis to PINNs, revealing how eigenvalue disparity between PDE and BC kernels causes spectral bias and contributes to training difficulty.
Our work complements this line by using a blockwise NTK perspective to interpret how per-loss architectural modifications can reduce destructive interference between losses.

\paragraph{Neural operators.} Neural operators such as the Deep Operator Network~\citep{lu2021deeponet} and the Fourier Neural Operator~\citep{li2021fourier} learn function-to-function solution maps, typically from paired solution data, although physics-informed and hybrid variants can also include residual losses.
This setting is complementary to ours. We study data-scarce PINN training in which a single physical field is optimized using PDE, BC, IC, and optional data losses, making inter-loss gradient geometry the central bottleneck.

\paragraph{Positioning our contribution.} Prior work provides strong individual tools for PINN loss balancing, gradient surgery, domain decomposition, adapters, and NTK analysis, but it does not, to our knowledge, characterize the conflict regimes that determine when scalar reweighting, architectural separation, or no additional intervention is appropriate.
Our novelty is therefore not the adapter primitive itself, but the diagnostic-first regime characterization linking conflict structure to method choice.
We support this characterization with regime-transition theory, operator-structure arguments, a blockwise NTK interpretation of per-loss parameter subspaces, and validation across 60+ PDE configurations, MTL benchmarks, NTK-weighting baselines~\citep{wang2022and}, and architecture-transfer checks spanning multilayer perceptron (MLP), Modified MLP, and Transformer backbones.

\section{Supplementary Method Details}
\label{app:method_details}

\subsection{Shared Feature Trunk}
\label{sec:trunk}

All methods share a common feature trunk that maps inputs $x \in \mathbb{R}^d$ to hidden features, consisting of Fourier feature encoding~\citep{tancik2020fourier} followed by residual blocks:
\begin{equation}
  \Phi(x) = [\sin(\omega_1^\top x), \cos(\omega_1^\top x), \ldots, \sin(\omega_F^\top x), \cos(\omega_F^\top x)],
\end{equation}
where $\omega_f \in \mathbb{R}^d$ are fixed frequency vectors with linearly spaced magnitudes
$\|\omega_f\|_2 = f\omega_{\max}/F$ for $f = 1, \ldots, F$,
with $F=10$ bands and $\omega_{\max} = 10\pi$.
The trunk consists of $L=4$ residual blocks with hidden dimension $d_h = 128$ and Tanh activations:
\begin{equation}
  h^{(\ell)} = h^{(\ell-1)} + \tanh(W_2^{(\ell)} \tanh(W_1^{(\ell)} h^{(\ell-1)} + b_1^{(\ell)}) + b_2^{(\ell)}).
\end{equation}
A final linear layer maps to the scalar output $u_\theta(x) \in \mathbb{R}$, or to $\mathbb{R}^{d_{\text{out}}}$ for systems.
This trunk has 140,033 parameters and serves as the Vanilla baseline.
In adapter variants, the same residual trunk is retained as the shared backbone; the adapters are additional loss-indexed low-rank modules inserted at each block.

\subsection{Conflict Metrics and Profiled Summaries}
\label{app:diagnostic_defs}

The aggregate conflict score for loss $k$ is $C_k = (1/(K-1))\sum_{j \neq k} C_{kj}$, measuring how much loss $k$ conflicts with the others on average.
For conflict-aware mixing and per-layer analyses, we analogously define
\begin{align*}
  C_k^{(\ell)} := \frac{1}{K-1}\sum_{j \neq k} C_{kj}^{(\ell)},
\end{align*}
where $C_{ij}^{(\ell)}$ is obtained by applying \Eqref{eq:conflict-score} to the layer-$\ell$ parameter gradients $g_k^{(\ell)} := \nabla_{\theta^{(\ell)}} \mathcal{L}_k(\theta)$.
For per-problem summaries, we also report the average pairwise magnitude ratio
\begin{align*}
  \bar r := \frac{1}{\binom{K}{2}} \sum_{i<j}
  \frac{\max(\|g_i\|, \|g_j\|) + \varepsilon_g}{\min(\|g_i\|, \|g_j\|) + \varepsilon_g},
\end{align*}
which is finite and at least $1$ by construction.

\paragraph{Profiled summaries.} For a profiling window $t = 1, \ldots, T_{\mathrm{prof}}$, let $g_k^{(t)} := \nabla_\theta \mathcal{L}_k(\theta_t)$ and define the pairwise cosine with a zero-gradient convention
\begin{align*}
  c_{ij}^{(t)} :=
  \begin{cases}
    \cos(g_i^{(t)}, g_j^{(t)}), & \|g_i^{(t)}\| \|g_j^{(t)}\| > 0,\\
    0, & \text{otherwise,}
  \end{cases}
\end{align*}
and define
\begin{align*}
  N_t^- &:= \{(i,j) : 1 \le i < j \le K, c_{ij}^{(t)} < 0\},
  &\quad
  f_{\mathrm{neg}}^{(t)}
  &:= \frac{|N_t^-|}{\binom{K}{2}}, \\
  \bar c_-^{(t)} &:=
  \begin{cases}
    \frac{1}{|N_t^-|}\sum_{(i,j)\in N_t^-} c_{ij}^{(t)}, & |N_t^-| > 0,\\
    0, & |N_t^-| = 0,
  \end{cases}
  &\quad
  D^{(t)}
  &:= f_{\mathrm{neg}}^{(t)} |\bar c_-^{(t)}|, \\
  M^{(t)}
  &:= \frac{\operatorname{std}_k(\|g_k^{(t)}\|)}
          {K^{-1}\sum_k \|g_k^{(t)}\| + \varepsilon_g}.
\end{align*}
We report the profiled summaries
\begin{align*}
  \widehat f_{\mathrm{neg}} &:= \frac{1}{T_{\mathrm{prof}}}\sum_{t=1}^{T_{\mathrm{prof}}} f_{\mathrm{neg}}^{(t)}, \quad
  \widehat D := \frac{1}{T_{\mathrm{prof}}}\sum_{t=1}^{T_{\mathrm{prof}}} D^{(t)}, \quad
  \widehat M := \frac{1}{T_{\mathrm{prof}}}\sum_{t=1}^{T_{\mathrm{prof}}} M^{(t)}.
\end{align*}
We also report the empirical proxy
\begin{align*}
  \widehat R := \frac{\widehat D}{\widehat M + \varepsilon_R},
\end{align*}
where $\varepsilon_R = 10^{-8}$.
Here $\widehat M$ is a coefficient-of-variation surrogate for magnitude imbalance, whereas Theorem~\ref{thm:regime_transition} uses the max-to-mean ratio $M_{\mathrm{eff}}(K)$.
The profiled ratio $\widehat R$ should therefore be read as an empirical surrogate for the theorem-level quantity $\mathcal U_K$, not as a plug-in estimator of it.
These profiled summaries are the quantities reported in the experiments.

\subsection{Additional Adapter Details}

\paragraph{Adapter orthogonality.} To encourage the independence of the adapter row subspaces in $\mathbb{R}^{d_h}$, we regularize both self-orthogonality within each down-projection and cross-orthogonality between different down-projections:
\begin{equation}
\label{eq:ortho}
  \mathcal{R}_{\text{ortho}} = \lambda_{\text{self}} \sum_{k,\ell} \|D_k^{(\ell)} D_k^{(\ell)\top} - I_r\|_F^2
  + \lambda_{\text{cross}} \sum_{\substack{i < j \\\ell}} \|D_i^{(\ell)} D_j^{(\ell)\top}\|_F^2.
\end{equation}
The first term encourages the rows of each $D_k^{(\ell)}$ to be orthonormal, while the second penalizes overlap between the row spaces of different adapters. When the row spaces are exactly orthogonal and each $D_k^{(\ell)}$ has full row rank $r$, the cross term vanishes.

\subsection{Conflict-Aware Adapter Mixing}
\label{sec:mixing}

The latent mixing scores $\rho_k^{(\ell)}$ can be fixed or dynamically adjusted based on gradient conflict.
\UAM{} uses the uniform rule $\pi_k^{(\ell)} = 1/K$.
For global \CAM{}, we use the aggregate conflict summary $C_k$ and set a layer-independent score $\rho_k = \sigma(a_0 - b \cdot C_k)$, with $\rho_k^{(\ell)} \equiv \rho_k$ in the normalization above.
For \LCAM{}, we instead use layerwise conflict summaries and set $\rho_k^{(\ell)} = \sigma(a_0 + \delta_\ell - b \cdot C_k^{(\ell)})$ before converting these scores into the normalized adapter weights $\pi_k^{(\ell)}$.
In both cases, larger conflict decreases $\rho$ and therefore increases the corresponding adapter weight.
Here $a_0 = 1.0$, $b = 2.0$, and the \LCAM{} offset $\delta_\ell$ interpolates from $+0.3$ at shallow layers, which biases toward sharing, to $-0.3$ at deep layers, which biases toward specialization. The conflict scores are exponential-moving-average (EMA) smoothed with $\beta=0.95$.
In practice, fixed \UAM{} captures nearly all of the benefit. On rerun $K=3$ stress tests in Helmholtz and Klein--Gordon, reported in Table~\ref{tab:mixing_ablation}, dynamic \LCAM{} stays within single-digit percentage differences and fixed uniform mixing is slightly better in both cases.
We therefore recommend \UAM{} with uniform $\pi$ as the default, while \CAM{} remains a higher-complexity variant for settings in which fixed mixing is insufficient.

\subsection{Training Algorithm}
\label{sec:training}

Training follows Algorithm~\ref{alg:training}.
A warm-up of $T_{\text{warm}}=200$ epochs with equal weights and uniform adapter mixing $\pi_k = 1/K$ establishes a stable baseline.
The optional mixing step is used only by dynamic \CAM{} and \LCAM{} variants and is skipped for fixed \UAM{}.
After warm-up, every $\tau$ epochs, we compute the current aggregate conflict scores $C_k$ for \CAM{} or layerwise scores $C_k^{(\ell)}$ for \LCAM{}, update their EMA, set the corresponding latent scores $\rho$ using Section~\ref{sec:mixing} with clipping to $[\rho_{\min}, \rho_{\max}]$, and recompute $\pi$ through the normalization in Section~\ref{sec:adapters}.
The latent scores are bookkeeping variables for adapter mixing rather than AdamW-optimized model parameters.

\begin{algorithm}[t!]
\caption{Per-loss adapter training with \UAM{} as the default and \CAM{} optional}
\label{alg:training}
\begin{algorithmic}[1]
\REQUIRE PDE problem with $K$ losses, single-output network $u_\theta$ with $K$ adapters
\FOR{epoch $t = 1, \ldots, T$}
  \STATE Sample collocation, BC, and IC points
  \STATE Run the forward pass through the shared trunk and aggregated adapter correction $\to u_\theta(x)$ as in \Eqref{eq:adapted}
  \STATE Compute losses $\{\mathcal{L}_k\}_{k=1}^K$ from the same $u_\theta$
  \STATE For \CAM{} and \LCAM{} only, after warm-up and every $\tau$ epochs, update EMA conflict scores, refresh $\rho$, and recompute $\pi$
  \STATE Total loss: $\mathcal{L} = \sum_k \lambda_k \mathcal{L}_k + \mathcal{R}_{\text{ortho}}$
  \STATE Update $\theta$ via AdamW with gradient clipping
\ENDFOR
\end{algorithmic}
\end{algorithm}

\section{Supplementary Theory Details}
\label{app:theory_details}

We collect the supplementary theory details referenced in the main text.
First, we give the supporting directional-to-magnitude results for the regime-transition analysis.
We then give a blockwise NTK interpretation of per-loss adapters as structured perturbations of the shared representation.
Finally, we justify the operator-structure arguments summarized in Remark~\ref{thm:pde_conflict}.

\subsection{Supporting Directional-to-Magnitude Results}
\label{sec:directional_magnitude_support}

\begin{proposition}[Conditional concentration of $D_K$]
\label{prop:dk_concentration}
Let $g_1, \ldots, g_K$ be nonzero gradients in $\mathbb{R}^p$, write
$\hat g_k = g_k / \|g_k\|$, and define
\begin{align*}
  X_{ij} := \max(0, -\hat g_i^\top \hat g_j),
  \qquad
  D_K := \frac{1}{\binom{K}{2}} \sum_{i<j} X_{ij}.
\end{align*}
Suppose there exists a $\sigma$-algebra $\Lambda$ such that,
conditional on $\Lambda$, the family $\{X_{ij}\}_{i<j}$ is
exchangeable with
\begin{align*}
  \E[X_{12} \mid \Lambda] = \mu(\Lambda),
  \qquad
  \sup_{(i,j)\neq(k,\ell)}
  \left|\mathrm{Cov}(X_{ij}, X_{k\ell} \mid \Lambda)\right|
  \le c(\Lambda),
\end{align*}
where the supremum is over distinct unordered pairs and
$c(\Lambda) \ge 0$ is $\Lambda$-measurable. Then with
$n_K := \binom{K}{2}$, we have
\begin{align*}
  \E[D_K \mid \Lambda] &= \mu(\Lambda),\\
  \mathrm{Var}(D_K \mid \Lambda) &\le \frac{1}{4 n_K} + c(\Lambda),
\end{align*}
and therefore
\begin{equation}
\label{eq:dk-totallaw-new}
  \mathrm{Var}(D_K)
  = \mathrm{Var}(\mu(\Lambda))
  + \E[\mathrm{Var}(D_K \mid \Lambda)]
  \le \mathrm{Var}(\mu(\Lambda))
  + \frac{1}{4 n_K}
  + \E[c(\Lambda)].
\end{equation}

If, in addition, the directions $\hat g_k$ are independent and uniformly
distributed on
$V \cap \mathbb{S}^{p-1}$ for some fixed subspace
$V \subseteq \mathbb{R}^p$ with $\dim(V) = d_{\mathrm{eff}}$, then in
orthonormal coordinates on $V$ they are independent and uniformly
distributed on $\mathbb{S}^{d_{\mathrm{eff}}-1}$. For
\begin{align*}
  \mu_d
  := \frac{\Gamma(d_{\mathrm{eff}}/2)}
          {2\sqrt{\pi}\Gamma((d_{\mathrm{eff}}+1)/2)}
  = \Theta(d_{\mathrm{eff}}^{-1/2}),
\end{align*}
we have
\begin{align*}
  \E[D_K] &= \mu_d,\\
  \mathrm{Var}(D_K)
  &= \frac{\mathrm{Var}(X_{12})}{n_K}
   = \frac{1}{n_K}
      \left(\frac{1}{2 d_{\mathrm{eff}}} - \mu_d^2\right)
   = O((K^2 d_{\mathrm{eff}})^{-1}).
\end{align*}
\end{proposition}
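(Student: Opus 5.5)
The proof has two independent parts: a purely second-moment argument for the conditional statements, and a computation on the sphere for the uniform case.

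\textbf{Conditional part.} Since $D_K$ is an average of the $n_K$ variables $X_{ij}$, linearity and conditional exchangeability give $\E[D_K\mid\Lambda]=\frac{1}{n_K}\sum_{i<j}\E[X_{ij}\mid\Lambda]=\mu(\Lambda)$. For the variance, I expand
\begin{align*}
  \mathrm{Var}(D_K\mid\Lambda)
  = \frac{1}{n_K^2}\Bigl(\sum_{i<j}\mathrm{Var}(X_{ij}\mid\Lambda)
  + \sum_{(i,j)\neq(k,\ell)}\mathrm{Cov}(X_{ij},X_{k\ell}\mid\Lambda)\Bigr).
\end{align*}
Each $X_{ij}$ takes values in $[0,1]$, because it is the negative part of a cosine. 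Popoviciu's inequality then gives $\mathrm{Var}(X_{ij}\mid\Lambda)\le 1/4$. The covariance sum has $n_K(n_K-1)$ terms, each bounded in absolute value by $c(\Lambda)$. Together these give the bound $\frac{1}{4n_K}+\frac{n_K-1}{n_K}c(\Lambda)\le\frac{1}{4n_K}+c(\Lambda)$. Equation~\eqref{eq:dk-totallaw-new} then follows from the law of total variance, $\mathrm{Var}(D_K)=\mathrm{Var}(\E[D_K\mid\Lambda])+\E[\mathrm{Var}(D_K\mid\Lambda)]$, combined with the conditional bound.

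\textbf{Uniform spherical part.} First, an isometry from $V$ onto $\mathbb{R}^{d_{\mathrm{eff}}}$ maps the uniform measure on $V\cap\mathbb{S}^{p-1}$ to the uniform measure on $\mathbb{S}^{d_{\mathrm{eff}}-1}$, so I work there with $d:=d_{\mathrm{eff}}$. By rotation invariance, $t:=\hat g_i^\top\hat g_j$ has the law of the first coordinate of a uniform point on the sphere. This law is symmetric about $0$ and, for $d\ge2$, has density proportional to $(1-t^2)^{(d-3)/2}$; for $d=1$ it is uniform on $\{\pm1\}$.

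Symmetry gives $\E[X_{12}]=\tfrac12\E|t|$ and $\E[X_{12}^2]=\tfrac12\E[t^2]=\frac{1}{2d}$, using $\E[t^2]=1/d$ from $\sum_k u_k^2=1$. A Beta-integral computation gives $\E|t|=\Gamma(d/2)/(\sqrt{\pi}\,\Gamma((d+1)/2))$, so $\E[X_{12}]=\mu_d$. I will check the case $d=1$ separately; it gives $\mu_1=1/2$, consistent with the formula. The rate $\mu_d=\Theta(d^{-1/2})$ follows from Gautschi's inequality for $\Gamma(x+\tfrac12)/\Gamma(x)$.

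The key remaining step, which I expect to be the main obstacle, is showing that all covariances between distinct pairs vanish. This yields $\mathrm{Var}(D_K)=\mathrm{Var}(X_{12})/n_K=\frac{1}{n_K}\bigl(\frac{1}{2d}-\mu_d^2\bigr)=O((K^2d)^{-1})$, where the last step uses $n_K=\Theta(K^2)$. There are two cases.
\begin{itemize}
  \item If the pairs are disjoint, the two variables depend on disjoint sets of independent directions, so they are independent.
  \item If the pairs share an index, say $X_{12}$ and $X_{13}$, I condition on $\hat g_1$. Given $\hat g_1$, the variables $\hat g_2^\top\hat g_1$ and $\hat g_3^\top\hat g_1$ are independent. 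By rotation invariance, each has a conditional law that does not depend on $\hat g_1$. Hence $X_{12}$ and $X_{13}$ are unconditionally independent, so pairwise uncorrelated.
\end{itemize}
Note that the $X_{ij}$ are only pairwise independent, not jointly independent. The argument must therefore rely on covariances only, which is exactly what the variance expansion requires.
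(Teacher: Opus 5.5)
Your proposal is correct and follows the paper's proof essentially step for step. It uses conditional exchangeability for the mean, the diagonal/off-diagonal variance split with the $[0,1]$ range bound and the $n_K(n_K-1)$ covariance count, the law of total variance, the symmetric density of the spherical cosine to get $\mu_d$ and $1/(2d)$, and pairwise independence by conditioning on the shared direction and using rotation invariance. Your extras are refinements, not a different route: treating $d_{\mathrm{eff}}=1$ separately, where the density formula degenerates, and invoking Gautschi's inequality for $\mu_d=\Theta(d_{\mathrm{eff}}^{-1/2})$, which the paper only asserts.
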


\begin{remark}
\Eqref{eq:dk-totallaw-new} is the quantity that is stable
under latent regime variation. Exchangeability alone does not force
$D_K$ to concentrate without the additional term
$\mathrm{Var}(\mu(\Lambda))$ vanishing. In particular, a short
profiling window should be interpreted as estimating a marginal
average of the directional-conflict level over that window, not a
pointwise constant that is guaranteed to concentrate across all
optimization states.
\end{remark}

\begin{remark}
The next theorem is not a consequence of
Proposition~\ref{prop:dk_concentration} under arbitrary latent regime
variation. It is a separate stylized model in which the directional
regime is fixed, so $\Lambda$ is trivial, and gradient directions
decouple from gradient magnitudes. This fixed-directional-regime model is the setting in which
explicit rate calculations are available.
\end{remark}

\begin{corollary}[Expected beneficial range]
\label{cor:finite_kstar}
For $\tau > 0$, whenever the set is nonempty, define
\begin{align*}
  K^\star_{\mathrm E}(\tau)
  := \sup\{K \ge 2 : \E[\mathcal U_K] > \tau\}.
\end{align*}
Under the hypotheses of Theorem~\ref{thm:regime_transition}, this set
is nonempty for all sufficiently small $\tau$. As $\tau \to 0^+$,
\begin{align*}
  K^\star_{\mathrm E}(\tau)
  =
  \begin{cases}
    \exp(\Theta(d_{\mathrm{eff}}^{-1/2} / \tau)),
      & \text{under the exponential-tail model},\\
    \Theta((\tau d_{\mathrm{eff}}^{1/2})^{-\alpha}),
      & \text{under the Pareto-tail model.}
  \end{cases}
\end{align*}
The constants hidden in $\Theta(\cdot)$ depend on the tail parameters
and on $\mu_m$, but not on $\tau$ or $d_{\mathrm{eff}}$.
\end{corollary}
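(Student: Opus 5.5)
The plan is to reduce Corollary~\ref{cor:finite_kstar} to the two rate statements of Theorem~\ref{thm:regime_transition} and then invert them. The factorization \eqref{eq:uk-factorization} gives $\E[\mathcal U_K] = \mu_d\, \E[\bar m/A_K]$, with $\mu_d = \Theta(d_{\mathrm{eff}}^{-1/2})$ by Proposition~\ref{prop:dk_concentration}. Write $\psi(K) := \E[\bar m/A_K]$; it depends only on the magnitude law. The theorem then yields constants $0<c_1\le c_2$, independent of $d_{\mathrm{eff}}$ and $\tau$, and a threshold $K_0$ with
\[
\frac{c_1 d_{\mathrm{eff}}^{-1/2}}{\log K}\le \E[\mathcal U_K]\le \frac{c_2 d_{\mathrm{eff}}^{-1/2}}{\log K}
\]
in the exponential case, and the analogous bounds with $K^{-1/\alpha}$ in the Pareto case, for all $K\ge K_0$.

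For nonemptiness, I would use that $\E[\mathcal U_2] > 0$, since $\bar m/A_2 \in (0,1]$ almost surely and $\mu_d>0$. Hence for every $\tau < \E[\mathcal U_2]$ the set contains $K=2$. For finiteness of the supremum, the upper bound tends to $0$ as $K\to\infty$, so only finitely many $K$ satisfy $\E[\mathcal U_K] > \tau$, and $K^\star_{\mathrm E}(\tau)$ is a finite maximum.

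For the asymptotics I would sandwich $K^\star$ from both sides.

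\textbf{Lower bound.} Any $K\ge K_0$ with $c_1 d_{\mathrm{eff}}^{-1/2}/\log K > \tau$, i.e.\ $\log K < c_1 d_{\mathrm{eff}}^{-1/2}/\tau$, lies in the set. Taking the largest such integer gives $K^\star \ge \exp(c_1 d_{\mathrm{eff}}^{-1/2}/\tau) - 1$. For small $\tau$ this exceeds $K_0$, so the restriction $K\ge K_0$ is harmless.

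\textbf{Upper bound.} Every member $K\ge K_0$ satisfies $\tau < c_2 d_{\mathrm{eff}}^{-1/2}/\log K$, hence $K < \exp(c_2 d_{\mathrm{eff}}^{-1/2}/\tau)$. Members with $K< K_0$ are bounded by $K_0$, which is eventually dominated.

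The Pareto case is identical, with $\tau < c_2 d_{\mathrm{eff}}^{-1/2}K^{-1/\alpha}$ inverting to $K < (c_2/(\tau d_{\mathrm{eff}}^{1/2}))^{\alpha}$, and similarly for the lower bound. This gives $\Theta((\tau d_{\mathrm{eff}}^{1/2})^{-\alpha})$ with constants $c_i^{\alpha}$.

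The main obstacle is uniformity in $d_{\mathrm{eff}}$. The corollary claims the hidden constants do not depend on $d_{\mathrm{eff}}$, and "$\tau\to0^+$" must be read so that the threshold $K_0$ and the eventual regime are controlled. I would handle this by keeping $\psi(K)$ and $\mu_d\sqrt{d_{\mathrm{eff}}}$ separate. The function $\psi$ is independent of $d_{\mathrm{eff}}$, so its $\Theta(1/\log K)$ or $\Theta(K^{-1/\alpha})$ constants and $K_0$ depend only on the tail parameters and $\mu_m$. The quantity $\mu_d\sqrt{d_{\mathrm{eff}}}$ lies in a fixed compact interval for all $d_{\mathrm{eff}}\ge1$, by Gautschi/Wendel bounds on the Gamma ratio.

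Two further technical points need care. In the exponential case the phrase $\exp(\Theta(\cdot))$ must be read as a two-sided bound on $\log K^\star$. I would state that $\log K^\star \in [c_1',c_2']\,d_{\mathrm{eff}}^{-1/2}/\tau$ once $\tau$ is small enough that the $-1$ and the $K_0$ corrections are absorbed. Since $d_{\mathrm{eff}}^{-1/2}/\tau\to\infty$, this absorption is uniform in bounded $d_{\mathrm{eff}}$; if $d_{\mathrm{eff}}$ is allowed to grow with $\tau$, one needs $\tau\sqrt{d_{\mathrm{eff}}}\to0$.
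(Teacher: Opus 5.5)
Your proposal is correct and follows essentially the same route as the paper's proof. Both arguments take the two-sided bounds $c_1 d_{\mathrm{eff}}^{-1/2}/\log K \le \E[\mathcal U_K] \le c_2 d_{\mathrm{eff}}^{-1/2}/\log K$ (resp.\ with $K^{-1/\alpha}$) for $K \ge K_0$ from Theorem~\ref{thm:regime_transition} and invert them to sandwich $K^\star_{\mathrm E}(\tau)$; the paper uses slightly slackened endpoints $K_\pm(\tau)$ built from $c_1/(2\tau)$ and $2c_2/\tau$. Two of your steps are small refinements of what the paper leaves implicit: using $K=2$ as the witness for nonemptiness, and arguing explicitly that $c_1, c_2, K_0$ do not depend on $d_{\mathrm{eff}}$ by separating $\E[\bar m/A_K]$ from $\mu_d\sqrt{d_{\mathrm{eff}}}$.
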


\subsection{Blockwise NTK View for Per-Loss Adapters}
\label{sec:blockwise_ntk}

This subsection develops the NTK perspective on per-loss adapters, building on the foundational NTK framework~\citep{jacot2018neural} and its PINN-specific analysis by \citet{wang2022and}.

For fixed mixing, write the parameter vector as $\theta = [\theta_{\text{trunk}}, \theta_1, \ldots, \theta_K]$, where $\theta_k$ denotes the parameters of the $k$th adapter block.
For each loss $k$, let $s_k(x)$ denote a scalar quantity entering $\mathcal{L}_k$ and derived from the same shared output $u_\theta$, such as the raw output, a boundary mismatch, or a PDE residual component.
We define the loss-indexed block NTK entry, measuring parameter-space coupling between scalar channels $i$ and $j$ at inputs $x$ and $x'$, by
\begin{equation}
\label{eq:ntk-decomp}
  \Theta_{ij}(x, x') := \nabla_\theta s_i(x)^\top \nabla_\theta s_j(x')
\end{equation}
for $1 \le i,j \le K$.
When the latent mixing scores $\rho_k^{(\ell)}$ are treated as fixed with respect to $\theta$ during differentiation, and equivalently when the normalized weights $\pi_k^{(\ell)}$ are fixed, the block kernel splits additively by parameter block:
\begin{equation}
  \Theta_{ij}(x, x') = \Theta_{ij}^{\text{shared}}(x, x') + \sum_{k=1}^K \Theta_{ij}^{(k)}(x, x'),
\end{equation}
where
\begin{align*}
  \Theta_{ij}^{\text{shared}} &= \nabla_{\theta_{\text{trunk}}} s_i(x)^\top \nabla_{\theta_{\text{trunk}}} s_j(x'), \\
  \Theta_{ij}^{(k)} &= \nabla_{\theta_k} s_i(x)^\top \nabla_{\theta_k} s_j(x').
\end{align*}
In the exact single-output architecture, the adapter terms $\Theta_{ij}^{(k)}$ need not vanish for $i \neq j$ because every $s_i$ depends on the same shared output $u_\theta$, which itself depends on all adapter blocks through \Eqref{eq:adapted}.
In the idealized exact-separation surrogate where $s_i$ depends only on $\theta_{\text{trunk}}$ and $\theta_i$, the adapter contribution instead reduces to a diagonal $\delta_{ij}$ term.
Here, the term ``exact'' refers to the implemented shared-output adapter architecture with fixed mixing, for which the parameter-block decomposition above is exact but no diagonalization is assumed.
The exact-separation surrogate is not the implemented model; it is an idealized limiting case in which cross-adapter sensitivities are neglected.
Thus, the diagonal $\delta_{ij}$ form should be read as a mechanism-level approximation, while the actual architecture may retain off-diagonal adapter couplings that are encouraged, but not forced, to be small by subspace separation and orthogonality regularization.
We use this blockwise picture as bookkeeping for the kernel-level interpretation of per-loss adapters. It identifies the adapter contribution as an additive perturbation of the shared kernel and clarifies why loss-indexed parameter blocks can relieve directional interference without introducing separate physical outputs.

\subsection{Capacity Heuristics and Per-Layer Mixing}

For architectural capacity, the saturation ratio $\Delta_{\mathrm{sat}} := Kr/d_h$ is a useful heuristic proxy. At fixed width, increasing either the number of losses or the adapter rank leaves less room for loss-specific low-rank specialization, and exact cross-adapter orthogonality becomes harder to satisfy.
Per-layer mixing can in principle exploit layer-wise variation in conflict, but in our experiments, its gains are modest at $K=3$ and become more noticeable only as $K$ grows.
We therefore treat capacity, conflict-aware mixing, and orthogonality effects primarily through the rank, mixing, and regularization ablations rather than through separate standalone theorems.

\subsection{PDE Structure and Gradient Conflict}
\label{sec:pde_structure}

Beyond the $K$-dependent regime structure, the type of PDE determines the conflict profile.
The empirical results across the evaluation suites suggest a structural pattern:

\begin{remark}[Operator-type heuristic for gradient coupling]
\label{thm:pde_conflict}
For the PDE and Dirichlet-boundary setting made explicit in Appendix~\ref{app:proof_pde_conflict}, the operator class constrains the state-level influence pattern and therefore provides only a qualitative prior for parameter-gradient coupling. Uniform ellipticity is compatible with domain-wide state-level influence via Green and Poisson representations, so widespread PDE and BC gradient coupling is possible; however, its sign and magnitude still depend on the residual and error coefficients and on the network-Jacobian correlations. Scalar linear parabolic operators satisfying a comparison principle or a maximum principle are order-preserving in the state variable, so the operator alone does not determine the sign of PDE and BC parameter-gradient alignment. Negative alignment can already arise from residual and error signs together with network Jacobian correlations, and can be further amplified by nonlinearities, discretization, or non-monotone parameterizations. Hyperbolic operators localize state-level influence to a domain of dependence through finite propagation speed, but finite propagation alone does not determine the sign of the parameter-gradient cosine. Stiff or highly oscillatory terms can nevertheless change the effective length scales seen by the network.
Hence operator type provides a heuristic prior on conflict structure, but not a universal sign or numerical value of $f_{\mathrm{neg}}$.
\end{remark}

This classification should therefore be read as a heuristic prior rather than a standalone decision rule. Elliptic problems are more compatible with widespread coupling, while parabolic problems require an additional nonlinearity or parameterization check.
However, the following remark shows that operator type alone is still insufficient in high dimensions.

\begin{remark}[Residual decomposition alone does not imply a high-dimensional elliptic monotonicity law]
\label{cor:highdim_elliptic}
For the $d$-dimensional Poisson equation $-\Delta u = f$ on $[0,1]^d$ with manufactured solution $u(x) = \prod_{i=1}^d \sin(\pi x_i)$, the residual decomposition
\begin{equation}
  r_\theta(x) = -\sum_{i=1}^d \partial_{x_i}^2 u_\theta(x) - f(x),
\end{equation}
with forcing amplitude $f(x) = d\pi^2 u(x)$, together with the identity
\begin{equation}
  \nabla_\theta r_\theta(x) = -\sum_{i=1}^d \nabla_\theta \partial_{x_i}^2 u_\theta(x),
\end{equation}
does not by itself imply any universal monotone-in-$d$ law for the instantaneous cosine between $\nabla_\theta \mathcal{L}_{\mathrm{PDE}}$ and $\nabla_\theta \mathcal{L}_{\mathrm{BC}}$ whenever those aggregated gradients are nonzero.
In particular, if the per-coordinate sensitivities $\nabla_\theta \partial_{x_i}^2 u_\theta(x)$ are weakly correlated, then the normalized sum $d^{-1}\sum_{i=1}^d \nabla_\theta \partial_{x_i}^2 u_\theta(x)$ can become more concentrated with $d$, while the residual formula simultaneously contains an explicit forcing contribution of size $O(d)$.
Because the network Laplacian term can either reinforce or cancel that forcing contribution, the residual amplitude need not scale monotonically with $d$.
Therefore, the displayed decomposition alone does not justify a universal monotone-in-$d$ law for the instantaneous PDE and BC cosine.
Any extension of this warning to profiled statistics such as $f_{\mathrm{neg}}(d)$ requires additional assumptions on the boundary-gradient scaling and on the training trajectory used in the profiling procedure.
\end{remark}

Appendix~\ref{app:proof_pde_conflict} justifies both remarks. Accordingly, operator-type heuristics are insufficient. The conflict regime depends on the interplay between the differential operator, forcing term, and spatial dimension, motivating the lightweight profiling criterion in Section~\ref{sec:method_selection} as a general-purpose tool that subsumes operator-type heuristics.

\section{Proofs for the Regime-Transition Results}
\label{app:proof_regime}

\begin{proof}[Proof of Proposition~\ref{prop:dk_concentration}]
By conditional exchangeability,
\begin{align*}
  \E[D_K \mid \Lambda]
  = \frac{1}{n_K}\sum_{i<j}\E[X_{ij}\mid\Lambda]
  = \mu(\Lambda).
\end{align*}
For the variance,
\begin{align*}
  \mathrm{Var}(D_K \mid \Lambda)
  = \frac{1}{n_K^2}
    \sum_{i<j}\sum_{k<\ell}
    \mathrm{Cov}(X_{ij}, X_{k\ell} \mid \Lambda).
\end{align*}
The diagonal contribution satisfies
\begin{align*}
  \frac{1}{n_K^2}\sum_{i<j}\mathrm{Var}(X_{ij}\mid\Lambda)
  \le \frac{1}{n_K^2}\cdot n_K \cdot \frac14
  = \frac{1}{4 n_K},
\end{align*}
because $0 \le X_{ij} \le 1$. The off-diagonal contribution satisfies
\begin{align*}
  \frac{1}{n_K^2}
  \sum_{(i,j)\neq(k,\ell)}
  \left|\mathrm{Cov}(X_{ij}, X_{k\ell} \mid \Lambda)\right|
  \le \frac{n_K(n_K-1)}{n_K^2} c(\Lambda)
  \le c(\Lambda).
\end{align*}
The conditional covariance bound proves the conditional variance bound. \Eqref{eq:dk-totallaw-new} is then the law of total variance.

For the spherical claim, let $V$ be a fixed subspace of
$\mathbb{R}^p$ with $\dim(V) = d := d_{\mathrm{eff}}$, choose orthonormal
coordinates on $V$, and work in that coordinate system. Then the
directions become independent and uniformly distributed on
$\mathbb{S}^{d-1}$. If
$u, v \in \mathbb{S}^{d-1}$ are independent and uniform, and
$c := u^\top v$, then the density of $c$ on $[-1,1]$ is symmetric and
proportional to $(1-z^2)^{(d-3)/2}$, so
\begin{align*}
  \E[X_{12}]
  = \E[(-c)_+]
  = \frac12 \E|c|
  = \frac{\Gamma(d/2)}{2\sqrt{\pi}\Gamma((d+1)/2)}
  = \mu_d.
\end{align*}
Also,
\begin{align*}
  \E[X_{12}^2]
  = \E[c^2 \mathbf{1}_{\{c<0\}}]
  = \frac12 \E[c^2]
  = \frac{1}{2d},
\end{align*}
because $\E[c^2] = 1/d$.

It remains to show that distinct $X_{ij}$ are pairwise independent.
If the pairs are disjoint, pairwise independence is immediate from independence of the
underlying directions. If they share one index, say $(i,j)=(1,2)$ and
$(k,\ell)=(1,3)$, then conditional on $u_1$ the random variables
$X_{12}$ and $X_{13}$ are independent because $u_2$ and $u_3$ are
independent. Moreover, by rotational invariance the conditional law of
$u_1^\top u_2$ given $u_1$ does not depend on $u_1$, so for any bounded
measurable $\varphi$ and $\psi$,
\begin{align*}
  \E[\varphi(X_{12})\psi(X_{13}) \mid u_1]
  = \E[\varphi(X_{12}) \mid u_1]\E[\psi(X_{13}) \mid u_1]
  = \E[\varphi(X_{12})]\E[\psi(X_{13})].
\end{align*}
Taking expectations gives pairwise independence. Therefore
\begin{align*}
  \mathrm{Var}(D_K)
  = \frac{1}{n_K^2}\sum_{i<j}\mathrm{Var}(X_{ij})
  = \frac{\mathrm{Var}(X_{12})}{n_K}
  = \frac{1}{n_K}
    \left(\frac{1}{2d_{\mathrm{eff}}} - \mu_d^2\right),
\end{align*}
which is $O((K^2 d_{\mathrm{eff}})^{-1})$.
\end{proof}

\begin{proof}[Proof of Theorem~\ref{thm:regime_transition}]
Since $D_K$ depends only on the directions $u_1,\dots,u_K$ and
$\bar m / A_K$ depends only on the magnitudes $m_1,\dots,m_K$, the two
terms are independent. Proposition~\ref{prop:dk_concentration} gives
$\E[D_K] = \mu_d$, hence
\begin{align*}
  \E[\mathcal U_K]
  = \E\left[D_K \frac{\bar m}{A_K}\right]
  = \E[D_K] \E\left[\frac{\bar m}{A_K}\right]
  = \mu_d \E\left[\frac{\bar m}{A_K}\right].
\end{align*}
The factorization argument proves \Eqref{eq:uk-factorization}.

\paragraph{Exponential-tail magnitudes.} Fix constants $c_-, c_+ > 0$ such that
\begin{align*}
  0 < c_- < \frac{\nu}{b},
  \qquad
  c_+ > \frac{\nu}{a}.
\end{align*}
Set
\begin{align*}
  q_K^- := c_- \log K,
  \qquad
  q_K^+ := c_+ \log K.
\end{align*}
For sufficiently large $K$, both $q_K^-$ and $q_K^+$ exceed $t_0$, so
the tail bounds apply. By the lower tail bound,
\begin{align*}
  \Pr(A_K < q_K^-)
  &= \Pr(m_1 \le q_K^-)^K\\
  &\le \left(1 - \exp\left(-\frac{b q_K^-}{\nu}\right)\right)^K\\
  &\le \exp\left(-K^{1 - b c_-/\nu}\right)
  = o((\log K)^{-1}).
\end{align*}
Therefore
\begin{align*}
  \E\left[\frac{\bar m}{A_K}\right]
  &\le \E\left[\frac{\bar m}{q_K^-}\mathbf{1}_{\{A_K \ge q_K^-\}}\right]
   + \Pr(A_K < q_K^-)\\
  &\le \frac{\E[\bar m]}{q_K^-}
   + o((\log K)^{-1})\\
  &= O((\log K)^{-1}).
\end{align*}

For the lower bound, the upper tail bound gives
\begin{align*}
  \Pr(A_K > q_K^+)
  &\le K \Pr(m_1 > q_K^+)\\
  &\le K^{1 - a c_+/\nu}
  = o(1).
\end{align*}
Also $\bar m \to \mu_m$ in probability by the law of large numbers, so
\begin{align*}
  \Pr\left(\bar m \ge \frac{\mu_m}{2}\right) \to 1.
\end{align*}
Hence, by the union bound,
\begin{align*}
  \Pr\left(\bar m \ge \frac{\mu_m}{2}, A_K \le q_K^+\right)
  \ge 1 - \Pr\left(\bar m < \frac{\mu_m}{2}\right) - \Pr(A_K > q_K^+)
  = 1 - o(1).
\end{align*}
Hence
\begin{align*}
  \E\left[\frac{\bar m}{A_K}\right]
  &\ge \frac{\mu_m}{2 q_K^+}
     \Pr\left(\bar m \ge \frac{\mu_m}{2}, A_K \le q_K^+\right)\\
  &= \Omega((\log K)^{-1}).
\end{align*}
Combining the two bounds with $\mu_d = \Theta(d_{\mathrm{eff}}^{-1/2})$
proves
\begin{align*}
  \E[\mathcal U_K]
  = \Theta(d_{\mathrm{eff}}^{-1/2} / \log K).
\end{align*}

\paragraph{Pareto-tail magnitudes.} Let $A_K = \max_{1\le k\le K} m_k$ and fix any $q \in (1,\alpha)$.
Write $r := q/(q-1)$. By H\"older's inequality,
\begin{align*}
  \E\left[\frac{\bar m}{A_K}\right]
  \le \E[\bar m^q]^{1/q}
      \E[A_K^{-r}]^{1/r}.
\end{align*}
Since $q < \alpha$, we have $\E[m_1^q] < \infty$. By the convexity of
$t \mapsto t^q$ for $q \ge 1$,
\begin{align*}
  \E[\bar m^q]
  \le \frac{1}{K}\sum_{k=1}^K \E[m_k^q]
  = \E[m_1^q]
  < \infty.
\end{align*}
It remains to evaluate $\E[A_K^{-r}]$. For Pareto tails,
\begin{align*}
  \Pr(A_K \le t)
  = \left(1 - \left(\frac{t_0}{t}\right)^\alpha\right)^K,
  \qquad
  t \ge t_0,
\end{align*}
so $A_K$ has density
\begin{align*}
  f_{A_K}(t)
  = K \alpha t_0^\alpha t^{-\alpha-1}
    \left(1 - \left(\frac{t_0}{t}\right)^\alpha\right)^{K-1},
  \qquad
  t \ge t_0.
\end{align*}
Therefore
\begin{align*}
  \E[A_K^{-r}]
  &= K \alpha t_0^\alpha \int_{t_0}^{\infty}
      t^{-r-\alpha-1}
      \left(1 - \left(\frac{t_0}{t}\right)^\alpha\right)^{K-1}
      dt.
\end{align*}
With the substitution $y = (t_0/t)^\alpha$, the integral becomes
\begin{align*}
  \E[A_K^{-r}]
  &= K t_0^{-r}
     \int_0^1 y^{r/\alpha} (1-y)^{K-1} dy\\
  &= K t_0^{-r} B\left(1 + \frac{r}{\alpha}, K\right)\\
  &= t_0^{-r}\Gamma\left(1 + \frac{r}{\alpha}\right)
     \frac{\Gamma(K+1)}{\Gamma(K+1+r/\alpha)}
   = \Theta(K^{-r/\alpha}).
\end{align*}
Hence
\begin{align*}
  \E\left[\frac{\bar m}{A_K}\right]
  = O(K^{-1/\alpha}).
\end{align*}

For the lower bound, fix any constant $C > t_0$. Then
\begin{align*}
  \Pr(A_K \le C K^{1/\alpha})
  &= \left(1 - \frac{t_0^\alpha}{C^\alpha K}\right)^K
   \longrightarrow \exp\left(-\frac{t_0^\alpha}{C^\alpha}\right)
   > 0.
\end{align*}
Also $\Pr(\bar m \ge \mu_m/2) \to 1$ by the law of large numbers. Hence
\begin{align*}
  \Pr\left(\bar m \ge \frac{\mu_m}{2},
             A_K \le C K^{1/\alpha}\right)
  &\ge \Pr(A_K \le C K^{1/\alpha})
     - \Pr\left(\bar m < \frac{\mu_m}{2}\right)\\
  &\longrightarrow \exp\left(-\frac{t_0^\alpha}{C^\alpha}\right)
   > 0.
\end{align*}
Thus
\begin{align*}
  \E\left[\frac{\bar m}{A_K}\right]
  &\ge \frac{\mu_m}{2 C K^{1/\alpha}}
     \Pr\left(\bar m \ge \frac{\mu_m}{2},
                 A_K \le C K^{1/\alpha}\right)\\
  &= \Omega(K^{-1/\alpha}).
\end{align*}
Combining upper and lower bounds with
$\mu_d = \Theta(d_{\mathrm{eff}}^{-1/2})$ yields
\begin{align*}
  \E[\mathcal U_K]
  = \Theta(d_{\mathrm{eff}}^{-1/2} K^{-1/\alpha}).
\end{align*}
\end{proof}

\begin{proof}[Proof of Corollary~\ref{cor:finite_kstar}]
Under the exponential-tail model,
\begin{align*}
  \E[\mathcal U_K]
  = \Theta(d_{\mathrm{eff}}^{-1/2} / \log K)
\end{align*}
as $K \to \infty$.
Hence, there exist constants $c_1, c_2 > 0$ and $K_0 \ge 2$ such that
for all $K \ge K_0$,
\begin{align*}
  c_1 \frac{d_{\mathrm{eff}}^{-1/2}}{\log K}
  \le \E[\mathcal U_K]
  \le c_2 \frac{d_{\mathrm{eff}}^{-1/2}}{\log K}.
\end{align*}
For sufficiently small $\tau$, define
\begin{align*}
  K_-(\tau)
  &:=
  \left\lfloor
    \exp\left(\frac{c_1 d_{\mathrm{eff}}^{-1/2}}{2\tau}\right)
  \right\rfloor,\\
  K_+(\tau)
  &:=
  \left\lceil
    \exp\left(\frac{2 c_2 d_{\mathrm{eff}}^{-1/2}}{\tau}\right)
  \right\rceil.
\end{align*}
When $\tau$ is small enough, $K_-(\tau) \ge K_0$, and the lower bound
gives $\E[\mathcal U_{K_-(\tau)}] \ge 2\tau > \tau$, so the defining
set for $K^\star_{\mathrm E}(\tau)$ is nonempty. Likewise, for every
$K \ge K_+(\tau)$, the upper bound gives $\E[\mathcal U_K] \le \tau/2$,
so no such $K$ belongs to the set. Therefore
\begin{align*}
  K_-(\tau) \le K^\star_{\mathrm E}(\tau) < K_+(\tau),
\end{align*}
which implies
\begin{align*}
  K^\star_{\mathrm E}(\tau)
  = \exp(\Theta(d_{\mathrm{eff}}^{-1/2} / \tau))
\end{align*}
as $\tau \to 0^+$.

Under the Pareto-tail model,
\begin{align*}
  \E[\mathcal U_K]
  = \Theta(d_{\mathrm{eff}}^{-1/2} K^{-1/\alpha})
\end{align*}
as $K \to \infty$.
Therefore, there exist constants $c_1', c_2' > 0$ and $K_0' \ge 2$ such that
for all $K \ge K_0'$,
\begin{align*}
  c_1' d_{\mathrm{eff}}^{-1/2} K^{-1/\alpha}
  \le \E[\mathcal U_K]
  \le c_2' d_{\mathrm{eff}}^{-1/2} K^{-1/\alpha}.
\end{align*}
For sufficiently small $\tau$, define
\begin{align*}
  \widetilde K_-(\tau)
  &:=
  \left\lfloor
    \left(\frac{c_1' d_{\mathrm{eff}}^{-1/2}}{2\tau}\right)^\alpha
  \right\rfloor,\\
  \widetilde K_+(\tau)
  &:=
  \left\lceil
    \left(\frac{2 c_2' d_{\mathrm{eff}}^{-1/2}}{\tau}\right)^\alpha
  \right\rceil.
\end{align*}
When $\tau$ is small enough, $\widetilde K_-(\tau) \ge K_0'$, and the
lower bound gives $\E[\mathcal U_{\widetilde K_-(\tau)}] \ge 2\tau$.
For every $K \ge \widetilde K_+(\tau)$, the upper bound gives
$\E[\mathcal U_K] \le \tau/2$. Therefore
\begin{align*}
  \widetilde K_-(\tau)
  \le K^\star_{\mathrm E}(\tau)
  < \widetilde K_+(\tau),
\end{align*}
and hence
\begin{align*}
  K^\star_{\mathrm E}(\tau)
  = \Theta((\tau d_{\mathrm{eff}}^{1/2})^{-\alpha})
\end{align*}
as $\tau \to 0^+$.
The preceding endpoint bounds prove the two asymptotic forms.
\end{proof}

\paragraph{Empirical interpretation.} Theorem~\ref{thm:regime_transition} should be read as a two-model rate
statement in a fixed stylized model, and
Corollary~\ref{cor:finite_kstar} is an expectation-level consequence of
that model, not a deterministic threshold for every realization. The
theorem is not an exhaustive taxonomy of all light-tailed distributions. The
relevant empirical issue is whether the profiled per-loss gradient
magnitudes behave closer to a logarithmic-growth maximum regime or to a
power-growth maximum regime over the observed $K$ range. If neither
model fits the data well, the qualitative monotonicity conclusion may
still be useful, but the explicit rates should not be over-interpreted.

\section{Justification of the PDE-Structure Remarks}
\label{app:proof_pde_conflict}

We justify Remarks~\ref{thm:pde_conflict} and~\ref{cor:highdim_elliptic} by formalizing how the PDE operator type constrains the gradient-coupling structure between $\mathcal{L}_{\mathrm{PDE}}$ and a Dirichlet BC loss.
This appendix therefore specializes the general constraint-residual notation introduced in Section~\ref{sec:problem} to the Dirichlet BC case $\Gamma_m=\partial\Omega$ with $\mathcal{R}_m[u](x)=u(x)-g(x)$ so that the BC residual is explicit.
Throughout, let $u_\theta : \Omega \to \mathbb{R}$ be a neural network parameterized by $\theta \in \mathbb{R}^p$, and define the Jacobian $J_\theta(x) = \partial u_\theta(x) / \partial \theta \in \mathbb{R}^{1 \times p}$.
For the pointwise half-squared losses
\begin{align*}
    \ell_{\mathrm{PDE},i} &:= \frac12(\mathcal{F}u_\theta(x_i) - f(x_i))^2, \\
    \ell_{\mathrm{BC},j} &:= \frac12(u_\theta(x_j') - g(x_j'))^2,
\end{align*}
the corresponding gradients are:
\begin{align}
\nabla_\theta \ell_{\mathrm{PDE},i} &= \underbrace{(\mathcal{F}u_\theta(x_i) - f(x_i))}_{r_i} \cdot \nabla_\theta [\mathcal{F}u_\theta(x_i)], \label{eq:grad_pde} \\
\nabla_\theta \ell_{\mathrm{BC},j} &= \underbrace{(u_\theta(x_j') - g(x_j'))}_{e_j} \cdot J_\theta(x_j')^\top, \label{eq:grad_bc}
\end{align}
where $\{x_i\}_{i=1}^{N_r} \subset \Omega$ and $\{x_j'\}_{j=1}^{N_b} \subset \partial\Omega$ are interior and boundary collocation points, and the aggregated losses are $\mathcal{L}_{\mathrm{PDE}} = \sum_i \ell_{\mathrm{PDE},i}$ and $\mathcal{L}_{\mathrm{BC}} = \sum_j \ell_{\mathrm{BC},j}$ up to any common positive averaging factors.

\begin{lemma}[Elliptic PDE and Dirichlet-boundary gradient coupling]
\label{lem:elliptic_coupling}
Let $\mathcal{F} = -\nabla \cdot (A(x) \nabla)$ be uniformly elliptic on a bounded domain $\Omega \subset \mathbb{R}^d$ with smooth boundary.
If $u_\theta$ is twice differentiable in $x$ and $\theta$, then for any interior point $x \in \Omega$ and boundary point $x' \in \partial\Omega$:
\begin{align*}
\nabla_\theta [\mathcal{F}u_\theta(x)] = \mathcal{F}_x [J_\theta(x)^\top],
\end{align*}
where $\mathcal{F}_x$ applies the spatial differential operator to each component of $J_\theta$.
Moreover, provided $\nabla_\theta \mathcal{L}_{\mathrm{PDE}} \neq 0$ and $\nabla_\theta \mathcal{L}_{\mathrm{BC}} \neq 0$, the cosine similarity between the aggregated PDE and Dirichlet-boundary gradients satisfies:
\begin{align*}
\cos(\nabla_\theta \mathcal{L}_{\mathrm{PDE}}, \nabla_\theta \mathcal{L}_{\mathrm{BC}}) = \frac{\sum_i \sum_j r_i e_j \langle \mathcal{F}_x[J_\theta(x_i)^\top], J_\theta(x_j')^\top \rangle}{\|\nabla_\theta \mathcal{L}_{\mathrm{PDE}}\| \cdot \|\nabla_\theta \mathcal{L}_{\mathrm{BC}}\|}.
\end{align*}
\end{lemma}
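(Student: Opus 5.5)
The plan has two parts: first prove the commutation identity $\nabla_\theta[\mathcal{F}u_\theta(x)] = \mathcal{F}_x[J_\theta(x)^\top]$, then substitute it into the aggregated gradients \eqref{eq:grad_pde}–\eqref{eq:grad_bc} and expand the inner product bilinearly.

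\textbf{Step 1 (the identity).} Write $\mathcal{F}u = -\sum_{a,b}\partial_{x_a}(A_{ab}(x)\partial_{x_b}u)$. Expanding gives
\begin{align*}
\mathcal{F}u = -\sum_{a,b}\bigl(\partial_{x_a}A_{ab}\,\partial_{x_b}u + A_{ab}\,\partial_{x_a}\partial_{x_b}u\bigr).
\end{align*}
This is a linear combination of first and second spatial derivatives of $u$, with coefficients that depend on $x$ but not on $\theta$. Differentiating in $\theta$ therefore passes through these coefficients by linearity. The only nontrivial step is to exchange $\partial_\theta$ with $\partial_{x_a}\partial_{x_b}$ (and with $\partial_{x_b}$) applied to $u_\theta$.

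I would justify this exchange by Schwarz–Clairaut: the mixed partials must be continuous. I read the hypothesis ``twice differentiable in $x$ and $\theta$'' as giving $u_\theta(x)$ continuous mixed derivatives up to order three, i.e.\ two in $x$ and one in $\theta$. This is automatic for the paper's tanh/Fourier-feature networks, which are real-analytic in $(x,\theta)$. I would also assume $A\in C^1$, since the divergence form requires $\partial_{x_a}A_{ab}$ to exist.

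Under these assumptions, $\partial_{\theta_p}\partial_{x_a}\partial_{x_b}u_\theta = \partial_{x_a}\partial_{x_b}\partial_{\theta_p}u_\theta$ for each coordinate $p$. Assembling over $p$ yields the claimed identity, with $\mathcal{F}_x$ acting componentwise on $J_\theta^\top$. Uniform ellipticity and the boundedness and smoothness of $\Omega$ play no role in this algebra; they matter only for the interpretive Green-function discussion in Remark~\ref{thm:pde_conflict}.

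\textbf{Step 2 (the cosine formula).} Sum the pointwise gradients to obtain
\begin{align*}
\nabla_\theta\mathcal{L}_{\mathrm{PDE}} = \sum_i r_i\,\mathcal{F}_x[J_\theta(x_i)^\top], \qquad \nabla_\theta\mathcal{L}_{\mathrm{BC}} = \sum_j e_j\,J_\theta(x_j')^\top.
\end{align*}
The Euclidean inner product is bilinear, so
\begin{align*}
\langle\nabla_\theta\mathcal{L}_{\mathrm{PDE}},\nabla_\theta\mathcal{L}_{\mathrm{BC}}\rangle = \sum_i\sum_j r_i e_j\,\langle\mathcal{F}_x[J_\theta(x_i)^\top],J_\theta(x_j')^\top\rangle.
\end{align*}
Both gradients are assumed nonzero, so we may divide by the product of their norms. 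This is Definition~\ref{def:conflict}'s cosine and gives the stated expression.

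Common positive averaging factors cancel in the cosine, so the formula is insensitive to them. For the same reason, the formula does not depend on whether the losses are sums or means.

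\textbf{Main obstacle.} The only delicate point is the regularity needed to commute the $\theta$-gradient with the spatial derivatives and to make sense of $\mathcal{F}_x[J_\theta^\top]$. I would handle it by making the $C^{2,1}$ joint-regularity assumption on $u_\theta$ and the $C^1$ assumption on $A$ explicit, and by noting that the paper's architectures satisfy both.
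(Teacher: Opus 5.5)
Your proposal is correct and follows the paper's proof: you get the identity from linearity of $\mathcal{F}$ plus interchanging $\partial_\theta$ with the spatial derivatives, and the cosine formula by bilinearly expanding $\nabla_\theta\mathcal{L}_{\mathrm{PDE}}=\sum_i r_i\,\mathcal{F}_x[J_\theta(x_i)^\top]$ against $\nabla_\theta\mathcal{L}_{\mathrm{BC}}=\sum_j e_j\,J_\theta(x_j')^\top$, with averaging factors cancelling. Your explicit regularity assumptions (continuous mixed derivatives of order two in $x$ and one in $\theta$, and $A\in C^1$) are a genuine sharpening: the literal ``twice differentiable'' hypothesis does not by itself justify the third-order interchange, which the paper's one-line appeal to smoothness passes over.
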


\begin{proof}
The first claim follows from the linearity of $\mathcal{F}$ in $u$ and the smoothness of $u_\theta$ in $(x, \theta)$, which permits interchange of $\mathcal{F}_x$ and $\partial/\partial\theta$.
The cosine similarity expression follows by expanding $\nabla_\theta \mathcal{L}_{\mathrm{PDE}} = \sum_i r_i \mathcal{F}_x[J_\theta(x_i)^\top]$ and $\nabla_\theta \mathcal{L}_{\mathrm{BC}} = \sum_j e_j J_\theta(x_j')^\top$; any common positive averaging factors cancel from the cosine.
\end{proof}

\begin{proof}[Justification of Remark~\ref{thm:pde_conflict}]
For elliptic operators, a uniformly elliptic Dirichlet problem has a solution operator that is compatible with domain-wide state-level influence through Green-function or Poisson-kernel representations whose kernels have global spatial support.
At the gradient level, Lemma~\ref{lem:elliptic_coupling} gives
\begin{align*}
  \nabla_\theta[\mathcal F u_\theta(x)] = \mathcal F_x[J_\theta(x)^\top].
\end{align*}
Therefore, the cosine between PDE and boundary gradients is controlled by the bilinear terms
\begin{align*}
  \left\langle \mathcal F_x[J_\theta(x_i)^\top], J_\theta(x_j')^\top \right\rangle.
\end{align*}
Ellipticity places no locality restriction on these couplings, but their sign and magnitude still depend on the learned Jacobian correlations rather than on ellipticity alone.
This elliptic coupling argument proves the qualitative compatible-with-widespread-coupling claim.

For parabolic operators, scalar linear equations satisfying a comparison principle or a maximum principle have order-preserving state evolution. Larger BC or IC data cannot create a smaller solution later in time.
Hence, the operator itself does not determine the sign of the parameter-gradient cosine.
Negative alignment can already arise through the residual and error signs together with the Jacobian-correlation terms in the cosine formula, and can be further amplified by nonlinear advection, reaction terms, discretization artifacts, or non-monotone parameterizations of $u_\theta$.
This parabolic conclusion is exactly the qualitative statement in the remark.

For hyperbolic operators, the domain-of-dependence theorem implies finite propagation speed. The state at $(x,t)$ depends only on data inside the backward characteristic cone.
Consequently, the state-level influence pattern is localized to causally connected regions.
This finite-propagation structure constrains where PDE and boundary information can interact, unlike the elliptic case, but it does not by itself determine the sign of the parameter-gradient cosine.
Stiff or highly oscillatory terms do not remove finite propagation, but they can shorten the relevant physical length scales and thereby change the effective coupling range seen by a finite-width network.
This finite-propagation argument proves the qualitative hyperbolic claim.
\end{proof}

\begin{proof}[Justification of Remark~\ref{cor:highdim_elliptic}]
For the $d$-dimensional Poisson equation with $u^*(x) = \prod_{i=1}^d \sin(\pi x_i)$, the forcing term is $f(x) = d\pi^2 u^*(x)$.

We begin with the residual-gradient decomposition.
At a collocation point $x_j$,
\begin{align*}
  r_\theta(x_j) = -\sum_{i=1}^d \partial_{x_i}^2 u_\theta(x_j) - f(x_j),
\end{align*}
and because $f$ is fixed data,
\begin{align*}
  \nabla_\theta r_\theta(x_j)
  = -\sum_{i=1}^d \nabla_\theta \partial_{x_i}^2 u_\theta(x_j).
\end{align*}
Thus, the direction of the PDE residual gradient is determined by a sum of $d$ parameter-sensitivity terms, while the residual formula also contains the forcing term $f(x) = d\pi^2 u^*(x)$, which scales linearly with $d$.

Two competing effects then matter.
If the vectors $\nabla_\theta \partial_{x_i}^2 u_\theta(x_j)$ are weakly correlated across coordinates, then the normalized sum $d^{-1}\sum_i \nabla_\theta \partial_{x_i}^2 u_\theta(x_j)$ can become more concentrated as $d$ grows, which tends to stabilize the residual-gradient direction after rescaling.
At the same time, the scalar residual multiplier $r_\theta(x_j)$ depends on the competition between $-\sum_i \partial_{x_i}^2 u_\theta(x_j)$ and the explicit $O(d)$ forcing term, so its magnitude can either grow, shrink, or partially cancel depending on the approximation error.
Since the PDE-loss gradient has the form
\begin{align*}
  \nabla_\theta \mathcal L_{\mathrm{PDE}}
  \propto
  \sum_j r_\theta(x_j)\nabla_\theta r_\theta(x_j),
\end{align*}
its cosine with the boundary gradient depends on both effects simultaneously.

Taken together, the conclusion is as follows.
The concentration effect can push the coupling toward more stable alignment, while the residual-amplitude term can amplify or damp whatever misalignment remains.
Because these effects act in opposite directions, the residual-gradient decomposition by itself does not fix the net $d$-dependence of the instantaneous PDE and BC cosine.
Without further control of the boundary-gradient scaling or of the training trajectory that defines profiled quantities, this decomposition-level argument also does not yield a monotone law for $f_{\mathrm{neg}}(d)$.
Determining whether the trajectory is decreasing, increasing, or non-monotone requires additional assumptions or explicit constructions tied to the forcing family and parameter-sensitivity structure.
\end{proof}

\section{Experimental Setup}
\label{app:exp_setup}

\subsection{PDE Benchmarks}

We evaluate on 60+ PDE configurations, counting distinct PDE families together with parameter, BC, dimension, inverse, and multi-physics variants:

\paragraph{Forward problems ($K=3$).}
Burgers, Allen--Cahn, Helmholtz, Conv-Diff, Klein--Gordon, and Navier--Stokes make up this six-PDE set.
Each PDE uses PDE residual, BC, and IC losses.
The additional forward set includes 3D Helmholtz, reaction-diffusion, Taylor--Green vortex, and Darcy flow.

\paragraph{Expanded criterion validation for $K=3$ and $K=4$.}
Twenty additional forward-PDE configurations broaden the empirical coverage beyond the main forward suite.
The twelve parameter variations are Helmholtz high-frequency (Helmholtz-HF, $\omega=10$), Burgers high-viscosity (Burgers-HV, $\nu=0.1$), two-dimensional (2D) Poisson (Poisson-2D), one-dimensional (1D) heat (Heat-1D), Korteweg--de Vries (KdV), 2D Laplace (Laplace-2D), sharp Allen--Cahn (Allen--Cahn-Sharp), heavy Klein--Gordon (KG-Heavy, $m=2$), Fisher--Kolmogorov--Petrovsky--Piskunov (Fisher--KPP), Telegrapher, Fokker--Planck, and Burgers-Neumann.
The eight new operator families are Cahn--Hilliard with fourth-order parabolic structure, Kuramoto--Sivashinsky with chaotic fourth-order structure, Sine--Gordon as a nonlinear wave equation, Gray--Scott as a two-component reaction-diffusion system with $K=4$, Porous Medium as nonlinear diffusion, Burgers--Fisher as reaction--convection--diffusion, 2D convection (Convection-2D) as a steady problem, and Biharmonic as a fourth-order elliptic problem with $\nabla^4 u = f$.
These span all major operator classes, namely elliptic, parabolic, hyperbolic, fourth-order, and multi-component settings, and they substantially broaden forward-problem coverage.

\paragraph{Inverse problems with $K=3$ and $K=4$.}
Inverse Burgers with parameter $\nu$, inverse heat with parameter $\kappa$ and $K=4$, and inverse Poisson with parameter $\alpha$ include learnable physical parameters that create the gradient heterogeneity needed to understand surgery failures.

\paragraph{Multi-physics with $K$ from 4 to 6.}
The multi-physics suite includes thermoelastic coupling with $K=4$ terms for the heat PDE, wave PDE, BC, and IC; a $K=5$ heat--elasticity variant with an explicit coupling loss; reactive transport with $K=5$ losses for two-species advection--diffusion--reaction and genuine coupling conflict from the $k_r c_1 c_2$ reaction term; and simplified MHD with $K=6$ losses for mass conservation, momentum, induction, the divergence-free constraint, BC, and IC.

\subsection{\texorpdfstring{$K$-Scaling Experiments}{K-Scaling Experiments}}
\label{sec:kscaling-setup}

To evaluate how performance changes with $K$ in the reported results, we focus on natural multi-loss systems: $K=4$ thermoelastic coupling, $K=5$ heat--elasticity and reactive transport, and $K=6$ MHD.
These systems represent genuinely distinct physics at each loss term rather than artificial residual splitting, directly addressing the concern that $K$-scaling validation may rely on constructed decompositions.
The regime-transition framework is used empirically. Once profiling shows that the surrogate $\widehat R$ has dropped below the adapter-favoring range, scalar reweighting is expected empirically to dominate in these settings.

\subsection{Methods Compared}

We compare a broad method suite that includes Vanilla with 140K parameters and Vanilla-Wide with 162K capacity-matched parameters as baselines; gradient-surgery and gradient-alignment methods PCGrad, CAGrad, NashMTL, ConFIG~\citep{liu2025config}, and AlignedMTL~\citep{senushkin2023independent}; loss-weighting methods GradNorm, FAMO, uncertainty weighting~\citep{kendall2018multi}, causal training~\citep{wang2024respecting}, and self-adaptive weights~\citep{mcclenny2023self}; and domain decomposition through XPINN~\citep{jagtap2020extended}.
ConFIG constructs conflict-free updates, whereas AlignedMTL performs gradient alignment based on SVD or eigendecomposition.
The XPINN baselines use 4 and 8 subdomains with roughly comparable capacity, with interface and flux continuity constraints, providing a direct comparison between gradient-space and input-space decomposition.
We also compare our architectural variants \FAMOUAM{}, which is the recommended default with 161K parameters and a 15\% parameter overhead, and \GNUAM{}.

Additional variants such as \UAM{}, \CAM{}, \LCAM{}, and their loss-weighting combinations serve as comparison controls. We also test architecture generality by adding per-loss adapters to Modified MLP~\citep{wang2021understanding} on 6 PDEs and to a Transformer backbone~\citep{zhao2023pinnsformer} on 4 PDEs, as discussed in Appendix~\ref{sec:arch_generality}.

Causal training~\citep{wang2024respecting} enforces temporal causality via exponentially weighted residual chunks, and self-adaptive weights~\citep{mcclenny2023self} learn loss weights as differentiable parameters. NTK weighting~\citep{wang2022and} is evaluated on a representative subset of four PDEs described in Section~\ref{sec:ntk_comparison} using 3 seeds, with reduced collocation points for computational feasibility.

For inverse problems, we additionally report PCGrad-Grouped and CAGrad-Grouped, which apply surgery only to network parameters while updating learnable physical parameters with vanilla updates.

\subsection{Evaluation and Reproducibility Protocol}

\paragraph{Evaluation.}
We report relative $L_2$ error, namely, $\|u_\theta - u_{\text{ref}}\|_2 / \|u_{\text{ref}}\|_2$, on a held-out grid.
Methods are ranked from 1 to $M$ per PDE, where 1 is best and failed methods receive the worst rank.
Forward-benchmark comparisons are additionally summarized with Bayesian bootstrap comparisons using a Dirichlet prior, which quantify the probability of improvement $P(A < B)$.
In the aggregate significance tables, effect sizes are reported as median relative improvement: $\Delta = \text{median}\left((L_2^{\text{base}} - L_2^{\text{method}})/L_2^{\text{base}}\right)$. For a small number of headline pairwise summaries in the prose, we additionally report Cohen's $d$ on the seed-level $L_2$ values.

\paragraph{Reproducibility Protocol.}
Seed counts vary by suite and are stated in each table or caption. The main 10K forward benchmarks use 5 random seeds from 0 to 4. Many supplementary or higher-cost studies use 3 seeds, and targeted sensitivity checks use fewer when explicitly noted. The selected forward comparisons in Table~\ref{tab:seed_ext} rerun seeds 5 to 7 in addition to the default seeds 0 to 4, giving 8 total seeds for those entries.
All experiments run from a single PyTorch 2.8 codebase on graphics processing unit (GPU) hardware, with suite-specific fixed seeds and deterministic collocation sampling.
The default training setup uses 10,000 epochs, the AdamW optimizer with a learning rate of $10^{-3}$ and a weight decay of $10^{-5}$, cosine annealing with a 200-epoch warm-up, and gradient clipping at a norm of 1.0.
The implementation is hardware-agnostic across standard PyTorch GPU backends.
PDE-specific settings and all hyperparameters are listed in Appendix~\ref{app:details}.

\section{Architecture and Hyperparameter Details}
\label{app:details}

\subsection{Default Hyperparameters}
Unless otherwise stated, all experiments use a hidden dimension $d_h=128$, $L=4$ residual blocks, $F=10$ Fourier bands with maximum frequency $10\pi$, and an adapter rank $r=16$.
We train with AdamW at a learning rate of $10^{-3}$ and a weight decay of $10^{-5}$, with 200 warm-up epochs.
Klein--Gordon uses a higher learning rate of $5\times 10^{-3}$.
Adapter-specific settings were determined via a grid search on Burgers and then fixed across all other benchmarks.
The settings are EMA momentum $\beta_{\mathrm{ema}}=0.95$, mixing update frequency $\tau=20$, mixing bias $a_0=1.0$, mixing sensitivity $b=2.0$, latent mixing score range $[\rho_{\min}, \rho_{\max}] = [0.15, 0.95]$, depth prior range $[\delta_1, \delta_L] = [+0.3, -0.3]$, orthogonality weight $\lambda_{\text{ortho}}=0.01$, cross-orthogonality weight $\lambda_{\text{cross}}=0.001$, a gradient clip norm of $1.0$, and magnitude-aware conflict scoring enabled.
Per-layer methods use a gradient clip norm of $0.8$.

\subsection{PDE-Specific Settings}
The default training budget is 10,000 epochs for the main forward suite and most supplementary evaluations unless otherwise stated.
Klein--Gordon requires a higher learning rate of $5\times 10^{-3}$ because its $u_{tt}$ term produces initial PDE residuals $\sim 10^4$.
For per-layer methods on Klein--Gordon, we use a learning rate of $10^{-3}$ with gradient clipping at a norm of $0.5$ and warm-up extended to 500 epochs.
3D problems use 10,000 epochs with 2,000 collocation points unless otherwise stated. The dedicated Helmholtz-3D validation in Table~\ref{tab:3d_validation} uses 15,000 epochs.
Inverse problems use 10,000 epochs with learnable parameters initialized via log-parameterization.
Multi-physics and $K$-scaling suites use experiment-specific budgets. The corresponding tables cover 5,000 to 15,000 epochs, so their captions should be treated as definitive for those experiments.

\subsection{Experiment-wise Dataset, Task, and Budget Summary}
Main forward benchmarks are Burgers, Helmholtz, Allen--Cahn, Conv-Diff, and Klein--Gordon. These synthetic forward-PDE benchmarks recover the solution field using only physics losses. The runs use 5 seeds, 10K epochs, $n_{\mathrm{coll}}=2000$, $n_{\mathrm{bc}}=n_{\mathrm{ic}}=400$, $10^4$ test points, and the default backbone with $d_h=128$, $L=4$, $F=10$, and $r=16$. Klein--Gordon uses a learning rate of $5\times 10^{-3}$, whereas the others use a learning rate of $10^{-3}$.

Inverse problems include inverse Burgers, inverse heat, and inverse Poisson. These settings require joint recovery of the state and one unknown coefficient, namely, $\nu$, $\kappa$, or $\alpha$. We keep the same network and sampling budget, use 10K epochs with log-parameterized scalar initialization, and report 3 to 5 seeds depending on the method.

The natural multi-physics and $K$-scaling suite includes the natural $K=4$ thermoelastic benchmark, a coupled heat--wave problem on $[-1,1]\times[0,1]$ with a finite-difference reference solution on a $200\times200$ grid. These runs use 2K epochs, 1.5K collocation points, and 5 seeds. The natural $K=5$ and $K=6$ studies use reactive transport and simplified MHD with finite-difference references, 15K epochs, $n_{\mathrm{coll}}=2000$, $n_{\mathrm{bc}}=n_{\mathrm{ic}}=400$, and 5 seeds.

The expanded forward validation adds 20 forward PDE configurations and parameter variations, including Poisson-2D, Heat-1D, KdV, Laplace-2D, 1D wave (Wave-1D), Fisher--KPP, Telegrapher, Fokker--Planck, Cahn--Hilliard, Kuramoto--Sivashinsky, Sine--Gordon, Gray--Scott, Porous Medium, Burgers--Fisher, Convection-2D, and Biharmonic. Unless a table caption states otherwise, these runs use 3 seeds, 10K epochs, $d_h=128$, $r=16$, and the same collocation budget as the main suite.

Architecture transfer evaluates the generality of Modified MLP across 6 PDEs, namely, Helmholtz, Burgers, Klein--Gordon, Conv-Diff, Allen--Cahn, and Wave-1D, with 3 seeds and 10K epochs. It evaluates the generality of the Transformer backbone across 4 PDEs, namely, Helmholtz, Burgers, Allen--Cahn, and Klein--Gordon, with 3 seeds, 15K epochs, $d_h=128$, $r=16$, and $n_{\mathrm{coll}}=2000$.

MTL transfer uses two-task benchmarks built from the Modified National Institute of Standards and Technology (MNIST) and Fashion-MNIST datasets. The benchmarks are Multi-MNIST and Multi-FashionMNIST, respectively. Both overlay two digits or garments in a single $36\times36$ image and predict the left and right labels. They use 60K training samples, 10K test samples, 5 seeds, 100 epochs, a batch size of 256, a hidden width of 256, an adapter rank of 16, and Adam with a learning rate of $10^{-3}$. The Canadian Institute for Advanced Research 100-class dataset (CIFAR-100) transfer experiment uses fine-label and coarse-superclass prediction as a two-task classification problem with a shared residual network (ResNet)-style encoder. It uses 3 seeds, 50 epochs, a batch size of 128, a hidden width of 512, an adapter rank of 32, and Adam with a learning rate of $10^{-3}$.

\subsection{Baseline Implementation Details}
Unless otherwise stated, all baselines use the same trunk architecture, data sampling, outer AdamW optimizer, scheduler, and training budget. Method-specific inner updates, auxiliary optimizers, parameter grouping, and learning-rate sweeps are stated here or in the corresponding sensitivity sections.

FAMO~\citep{liu2024famo} maintains logits $z \in \mathbb{R}^K$ updated via dual ascent. The model loss is $\sum_k w_k \mathcal{L}_k$, where $w = \mathrm{softmax}(z)$ with a lower clamp of $0.01$ followed by renormalization. After each step, the logits are updated componentwise as $z_k \leftarrow z_k + \gamma (\mathcal{L}_k^{(t)} - \mathcal{L}_k^{(t-1)})$ with $\gamma=0.01$, which focuses weight on losses that are worsening.

GradNorm follows the reference algorithm~\citep{chen2018gradnorm}. Task weights $w \in \mathbb{R}^K$, initialized to 1, are represented as an \texttt{nn.Parameter} and optimized by a separate Adam optimizer with $\eta_w=0.025$. At each step, gradient norms $G_W^{(k)} = w_k \| \nabla_{W_\ell} \mathcal{L}_k \|_2$ are computed with respect to the last shared layer $W_\ell$. Their average is $\bar G = K^{-1}\sum_k G_W^{(k)}$, the normalized loss ratios are $\tilde{\mathcal L}_k(t) = \mathcal L_k(t)/\mathcal L_k(0)$ and $\xi_k(t) = \tilde{\mathcal L}_k(t) / ((1/K)\sum_j \tilde{\mathcal L}_j(t))$, and the targets are $G_{\mathrm{target}}^{(k)} = \bar G \xi_k(t)^\alpha$ with $\alpha=1.5$. The GradNorm loss $L_{\mathrm{grad}} = \sum_k | G_W^{(k)} - G_{\mathrm{target}}^{(k)} |$ is then minimized by backpropagation through $w$.

PCGrad~\citep{yu2020gradient} and CAGrad~\citep{liu2021conflict} use per-loss gradients over the same trainable parameter vector for the forward benchmarks. For inverse-problem grouped variants, projection or conflict-averse updates are applied only to network parameters, while learnable physical scalars receive vanilla AdamW updates as described in the parameter-grouped surgery section.

NashMTL uses projected gradient ascent in log-space to maximize the Nash product~\citep{navon2022multi}. The Gram matrix $G_{ij} = \langle g_i, g_j \rangle$ is regularized as $G + 0.01 \cdot \bar{G}_{ii} \cdot I$. We update the weights through $\alpha \leftarrow \alpha \cdot \exp(\eta_\alpha \nabla_\alpha)$ with clamped increments satisfying $|\Delta| \leq 0.5$, use 25 inner iterations, and warm-start from the previous solution. A numerical-instability fallback reverts to uniform weights. This is a GPU-native adaptation of NashMTL that uses gradient-based optimization rather than the original solver-based implementation by \citet{navon2022multi}, avoiding expensive host round-trips on GPU hardware.

AlignedMTL~\citep{senushkin2023independent} computes the top-$k$ SVD of the gradient matrix and projects the average gradient onto the principal subspace.
ConFIG uses iterative projection to find a conflict-free direction, with geometric mean scaling~\citep{liu2025config}.

Uncertainty weighting~\citep{kendall2018multi} learns per-loss log-variance parameters with the standard $\exp(-s_k)\mathcal{L}_k + s_k$ formulation.
Causal training~\citep{wang2024respecting} uses exponentially weighted residual chunks, and self-adaptive weighting~\citep{mcclenny2023self, xiang2022selfadaptive} learns differentiable loss weights. NTK weighting and XPINN settings are described in their dedicated comparison sections. These shared defaults and explicitly stated method-specific deviations define the reproducibility protocol used for the comparisons.

\subsection{Model Sizes}
Vanilla has 140,033 parameters.
Vanilla-Wide has 162,013 parameters with $d_h \approx 138$ to match adapter capacity.
\UAM{} has 160,772 parameters with fixed adapters and a 15\% parameter overhead.
\CAM{} has 160,776 parameters with global conflict-aware mixing and a 15\% parameter overhead.
\LCAM{} has 197,661 parameters with per-layer conflict-aware mixing scores and a 41\% parameter overhead.
These adapter parameters remain active in the forward pass at inference, so the listed parameter overhead is not a training-only cost.
For higher-$K$ settings, adapter count scales linearly with $K$.

\subsection{\texorpdfstring{Natural $K=4$ Thermoelastic System}{Natural K=4 Thermoelastic System}}
The natural $K=4$ benchmark couples a heat equation with a wave equation on $[-1,1] \times [0,1]$:
\begin{align}
u_t &= D u_{xx} + \alpha v, \\
v_{tt} &= c^2 v_{xx} + \beta u,
\end{align}
where $D = 0.01$, $\alpha = 0.5$, $c = 1.0$, and $\beta = 0.3$.
The system uses ICs $u(x,0) = \sin(\pi x)$, $v(x,0) = \cos(\pi x/2)$, $v_t(x,0) = 0$, and Dirichlet BCs $u(\pm 1, t) = v(\pm 1, t) = 0$.
The reference solution is computed by finite differences on a $200 \times 200$ grid, using Crank--Nicolson for the heat equation and Verlet integration for the wave equation.
This thermoelastic formulation produces four natural loss terms. These are $\mathcal{L}_{\text{heat}}$ for the heat PDE residual, $\mathcal{L}_{\text{wave}}$ for the wave PDE residual, $\mathcal{L}_{\text{BC}}$ for the BCs, and $\mathcal{L}_{\text{IC}}$ for the ICs of both fields.
Training uses 2,000 epochs with 1,500 collocation points.

\subsection{Parameter-Grouped Gradient Surgery}
The PCGrad-Grouped and CAGrad-Grouped variants partition the parameter vector $\theta$ into network parameters $\theta_{\text{net}}$, which include hidden weights, biases, and embeddings, and physical parameters $\theta_{\text{phys}}$, which include learnable scalars such as viscosity $\nu$, diffusivity $\kappa$, and source coefficient $\alpha$.
Gradient projection with PCGrad or the conflict-aware update with CAGrad is applied only to $\nabla_{\theta_{\text{net}}} \mathcal{L}_k$.
Physical parameters receive vanilla gradient updates.
This grouped-update scheme avoids the catastrophic scale mismatch between $\|\nabla_{\theta_{\text{net}}} \mathcal{L}_k\| \sim 10^{-2}$ and $\|\nabla_{\theta_{\text{phys}}} \mathcal{L}_k\| \sim 10^{2}$.

\subsection{ConFIG Baseline}
ConFIG~\citep{liu2025config} computes a conflict-free gradient direction by iterative pairwise projections with 20 iterations and scales the update by the geometric mean of per-loss gradient norms.
It uses the same learning rate, weight decay, and architecture as the vanilla baseline.
ConFIG incurs roughly $1.7\times$ to $1.9\times$ empirical training-time overhead in our setup, as Table~\ref{tab:efficiency} shows, due to the extra per-loss gradient computation and iterative projection.


\section{Supplementary Cost, Scalability, and Diagnostic Analyses}
\label{app:main_text_diagnostics}

\subsection{Regime-Aware Method Selection}

\begin{algorithm}[t!]
\caption{Regime-aware method selection}
\label{alg:selection}
\begin{algorithmic}[1]
\REQUIRE A PDE problem with $K$ loss terms, a budget constraint $B_{\max}$
\IF{problem has learnable physical parameters}
  \IF{$K = 3$}
    \RETURN FAMO or GradNorm without adapters
  \ELSIF{$K = 4$}
    \RETURN \FAMOUAM{} \COMMENT{$K = 4$ inverse benefits empirically}
  \ENDIF
\ENDIF
\STATE Run $T_{\mathrm{prof}} = 1000$ steps of Vanilla profiling
  \STATE From $\{f_{\mathrm{neg}}^{(t)}\}_{t=1}^{T_{\mathrm{prof}}}$, compute $\widehat f_{\mathrm{neg}} = (1/T_{\mathrm{prof}})\sum_t f_{\mathrm{neg}}^{(t)}$, $\bar f_{\mathrm{neg}}^{\mathrm{early}}$, and $\bar f_{\mathrm{neg}}^{\mathrm{late}}$ as the means over the first third and last third. Also compute $P = \bar f_{\mathrm{neg}}^{\mathrm{late}} / \max(\bar f_{\mathrm{neg}}^{\mathrm{early}}, \varepsilon_P)$ with $\varepsilon_P = 10^{-8}$, together with the least-squares slope of $t \mapsto f_{\mathrm{neg}}^{(t)}$
\STATE Evaluate Vanilla profiling $L_2$ error $\epsilon_{\mathrm{van}}$
\IF{$\epsilon_{\mathrm{van}} < 10^{-3}$}
  \RETURN FAMO alone \COMMENT{an easy problem, adapters unnecessary}
\ENDIF
  \IF{$\widehat f_{\mathrm{neg}} < 0.05$}
  \RETURN FAMO alone \COMMENT{negligible directional conflict}
\ELSIF{$P > 0.8$}
  \RETURN \FAMOUAM{} with a $1.3\times$ training-time cost \COMMENT{persistent conflict}
\ELSIF{$P < 0.5$ \AND slope $< -0.02$}
  \RETURN FAMO alone \COMMENT{transient conflict resolves}
\ELSE
  \RETURN \FAMOUAM{} \COMMENT{default when conflict detected}
\ENDIF
\end{algorithmic}
\end{algorithm}

\begin{table}[t!]
\centering
\caption{Diagnostic regimes and recommended interventions.}
\label{tab:diagnostic_regimes}
\small
\begin{tabular}{lll}
\toprule
Profile & Choice & Appropriate Methods \\
\midrule
Low conflict or easy profile & FAMO only & \citep{wang2021understanding,liu2024famo} \\
High $\widehat M$, low $\widehat R$ & FAMO or GradNorm & \citep{chen2018gradnorm,liu2024famo} \\
Transient: $P<0.5$, slope $<-0.02$ & Reweight only & \citep{chen2018gradnorm,liu2024famo} \\
Persistent: $P>0.8$ & \FAMOUAM{} & \citep{hu2022lora,hwang2024dualcone,wang2025gradientalignment} \\
Inverse or heterogeneous & Avoid full-space surgery & \citep{yu2020gradient,liu2021conflict,navon2022multi} \\
\bottomrule
\end{tabular}
\end{table}

Algorithm~\ref{alg:selection} therefore does not treat $K$ alone as sufficient evidence for method selection; the natural higher-$K$ multi-physics results provide empirical guidance, while the final choice is made by the profiling diagnostics.

\paragraph{Profiling criterion.}
The primary discriminator is the window-average negative-pair fraction $\widehat f_{\mathrm{neg}} := T_{\mathrm{prof}}^{-1}\sum_t f_{\mathrm{neg}}^{(t)}$ from a 1000-step Vanilla profile: $\widehat f_{\mathrm{neg}} < 0.05$ indicates negligible conflict.
We write $\bar f_{\mathrm{neg}}^{\mathrm{early}}$ and $\bar f_{\mathrm{neg}}^{\mathrm{late}}$ for the averages over the first and last thirds of that profile, define the persistence ratio $P = \bar f_{\mathrm{neg}}^{\mathrm{late}} / \max(\bar f_{\mathrm{neg}}^{\mathrm{early}}, \varepsilon_P)$ with $\varepsilon_P = 10^{-8}$, and define the slope as the least-squares slope of $t \mapsto f_{\mathrm{neg}}^{(t)}$.
We use these quantities as a deployment heuristic for choosing between FAMO and \FAMOUAM{}.
Broader comparisons against Vanilla, \UAM{}, or surgery baselines are reported separately in the main result tables.
When conflict is present, $P > 0.8$ is treated as evidence of persistent conflict, while $P < 0.5$ with negative slope indicates transient conflict likely to resolve without architectural separation.
The full procedure in Algorithm~\ref{alg:selection} should therefore be read as a practical decision rule, not as a guarantee of global optimality.
It combines an easy-problem filter with persistence-based classification.

\subsection{Profiling-to-Outcome Bridge}
\label{sec:profile_outcome_bridge}

Figure~\ref{fig:qual_profile_to_outcome_regime} connects the 1000-step profiling signal used by Algorithm~\ref{alg:selection} to the final local error difference between FAMO and \FAMOUAM{} on representative PDEs.
It shows why the selection rule depends on persistence rather than merely detecting negative cosine pairs. Persistent conflict yields spatially structured regions where adapters help, while transient or low-impact conflict leaves little qualitative change.

\begin{figure*}[t!]
  \centering
  \includegraphics[width=\linewidth]{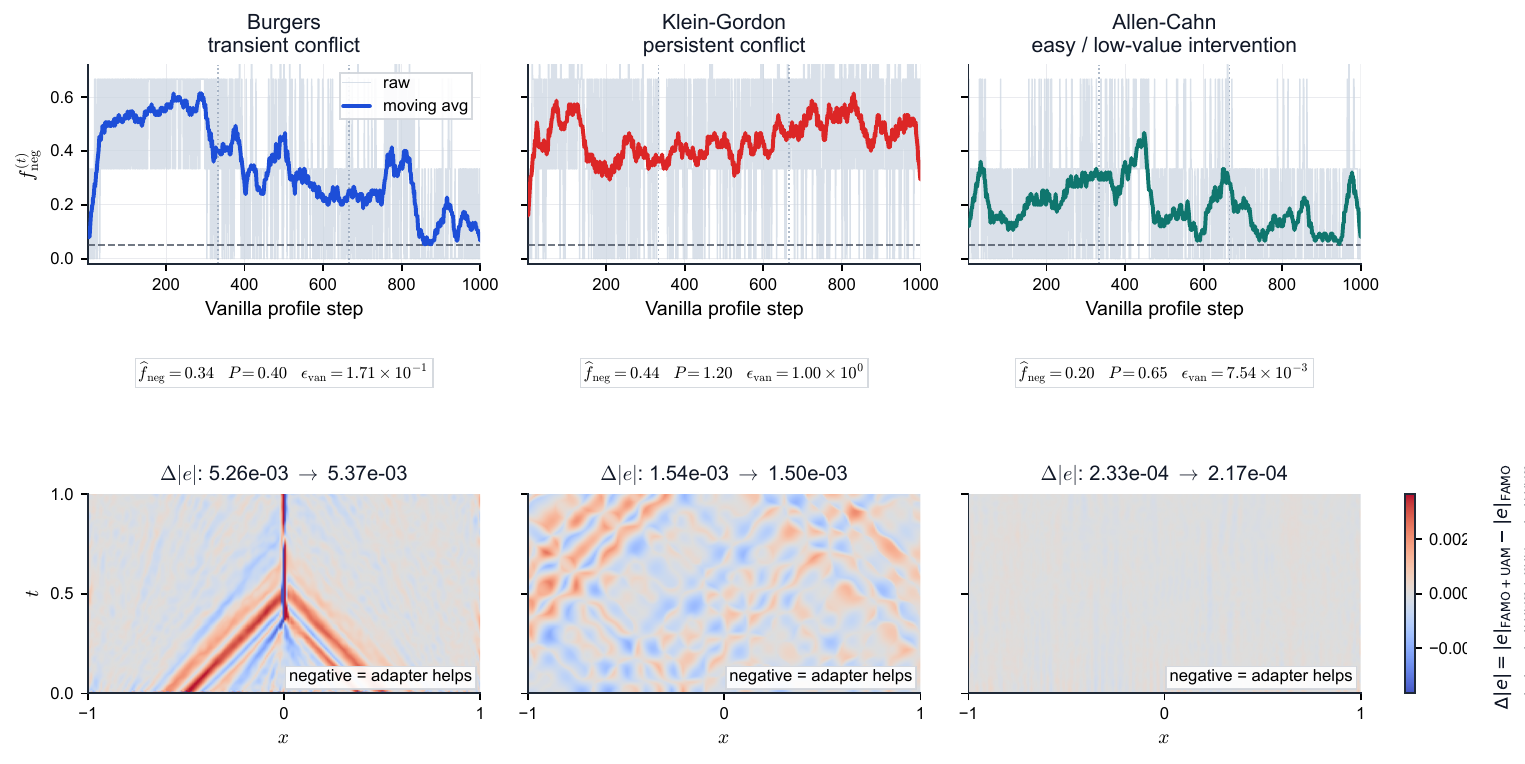}
  \caption{%
    Profiling-to-outcome bridge on three representative forward PDEs.
    The top row shows the 1000-step Vanilla negative-pair trajectory $f_{\mathrm{neg}}^{(t)}$ together with the profile summary $(\widehat f_{\mathrm{neg}}, P, \epsilon_{\mathrm{van}})$, and the bottom row visualizes the local error change $\Delta |e| = |e|_{\text{\FAMOUAM}} - |e|_{\mathrm{FAMO}}$, where negative values indicate regions in which the adapter variant helps.
    Burgers exhibits transient conflict with $P=0.40$, and the adapter effect stays confined to a thin shock-adjacent band, yielding essentially tied final errors of $5.26\times 10^{-3}$ and $5.37\times 10^{-3}$.
    Klein--Gordon maintains persistent conflict with $P=1.20$ and a poor profiled Vanilla error $\epsilon_{\mathrm{van}}=1.00$, and the adapter-induced change concentrates along oscillatory ridges, improving the final error from $1.54\times 10^{-3}$ to $1.50\times 10^{-3}$.
    Allen--Cahn shows weaker conflict with $\widehat f_{\mathrm{neg}}=0.20$ and a much smaller profiled Vanilla error $\epsilon_{\mathrm{van}}=7.54\times 10^{-3}$, and correspondingly the error-difference map is visually negligible.
  }
  \label{fig:qual_profile_to_outcome_regime}
\end{figure*}

\subsection{Adapter Specialization Visualization}
\label{sec:adapter_specialization_visualization}

Figure~\ref{fig:qual_adapter_specialization_regime} complements the shared-output adapter construction in Section~\ref{sec:adapters} and the blockwise NTK interpretation in Appendix~\ref{app:theory_details}.
It shows that, in the adapter-beneficial regime, different loss-indexed adapters learn spatially distinct corrections, whereas the low-conflict control collapses toward a dominant branch; this supports interpreting the gains as conflict-dependent specialization rather than merely added parameter capacity.

\begin{figure*}[t!]
  \centering
  \includegraphics[width=\linewidth]{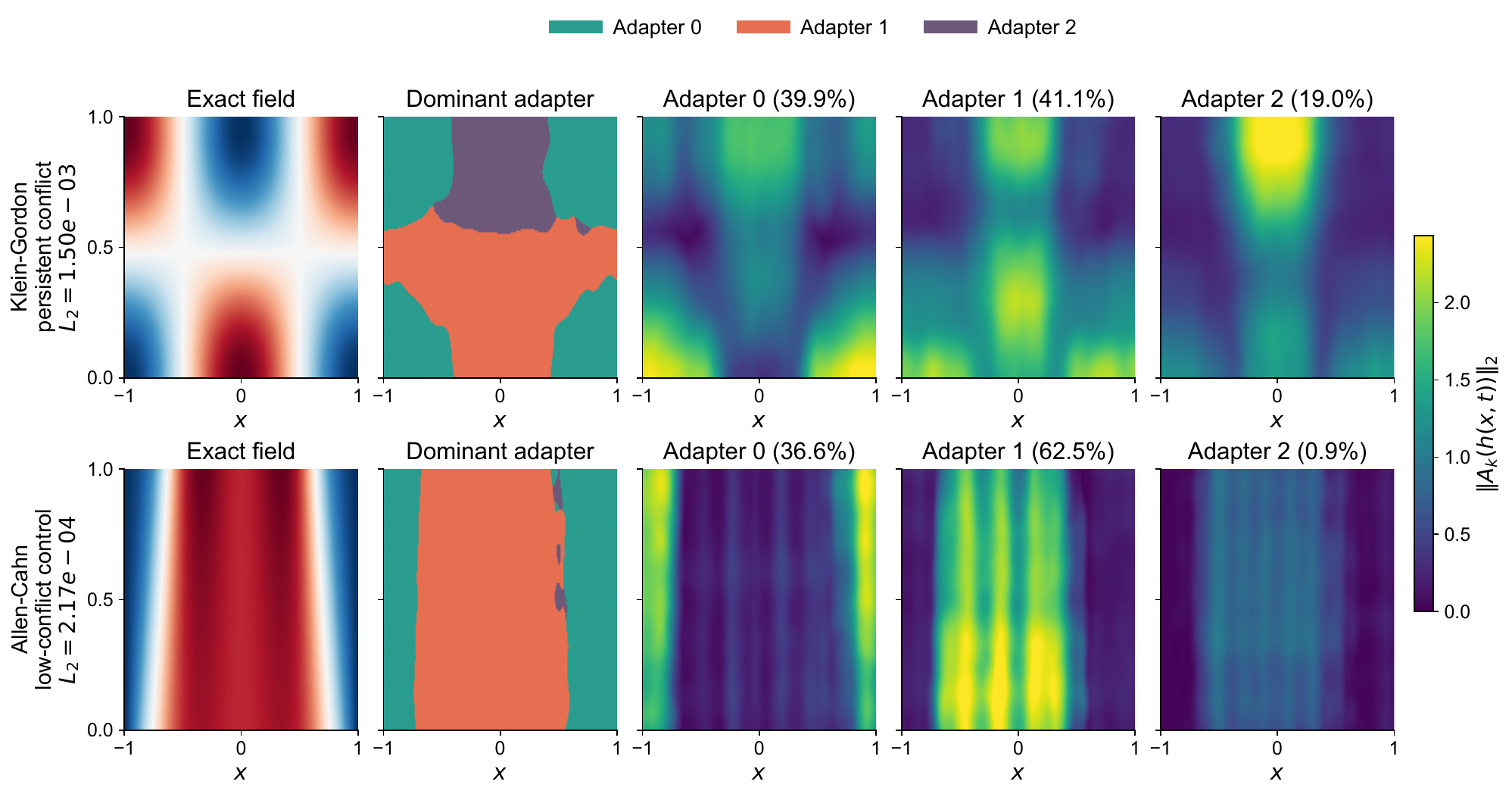}
  \caption{%
    Spatial specialization of the learned per-loss adapters on a persistent-conflict case and a low-conflict control.
    Each row shows the exact field, the dominant adapter index at each space-time point, and the contribution norms $\|A_k(h(x,t))\|_2$ of the three final adapters in the trained \FAMOUAM{} model.
    On Klein--Gordon, all three adapters occupy non-trivial regions of the slab, with shares of 39.9\%, 41.1\%, and 19.0\%, and split the dynamics into boundary- and interior-sensitive correction zones.
    On Allen--Cahn, by contrast, one adapter dominates 62.5\% of the domain and the third adapter is almost unused at 0.9\%.
    Thus, in the adapter-beneficial regime, the additional low-rank branches do more than uniformly rescale the shared trunk; they learn spatially distinct corrections for the same physical field.%
  }
  \label{fig:qual_adapter_specialization_regime}
\end{figure*}

\subsection{Computational Cost and Scalability}
\label{sec:cost_ablation}

\begin{table}[t!]
\centering
\caption{Computational cost comparison at $d_h=128$, 10K epochs.
Wall-clock time (seconds) and overhead relative to Vanilla are shown
separately for each PDE.  All runs use 5 seeds except where noted.
Parameter counts reflect the shared trunk plus any per-loss adapter modules.
GradNorm timings reflect the reference algorithm~\citep{chen2018gradnorm} with gradient-norm computation with respect to the last shared layer.}
\label{tab:efficiency}
\begin{tabular}{l r rr rr}
\toprule
 & & \multicolumn{2}{c}{Helmholtz} & \multicolumn{2}{c}{Burgers} \\
\cmidrule(lr){3-4} \cmidrule(lr){5-6}
Method & \# Params. & Time (s) & Overhead & Time (s) & Overhead \\
\midrule
\multicolumn{6}{l}{Shared-output baselines} \\
Vanilla          & 140,033 &  697 & $1.00\times$ &  440 & $1.00\times$ \\
FAMO             & 140,033 &  692 & $0.99\times$ &  456 & $1.04\times$ \\
GradNorm         & 140,033 &  929 & $1.33\times$ &  613 & $1.39\times$ \\
ConFIG           & 140,033 & 1201 & $1.72\times$ &  848 & $1.93\times$ \\
\midrule
\multicolumn{6}{l}{Uniform adapter mixing (ours)} \\
\UAM        & 160,772 &  933 & $1.34\times$ &  607 & $1.38\times$ \\
\FAMOUAM          & 160,772 &  929 & $1.33\times$ &  615 & $1.40\times$ \\
\GNUAM            & 160,772 & 1262 & $1.81\times$ &  820 & $1.86\times$ \\
\midrule
\multicolumn{6}{l}{Layerwise conflict-aware adapter mixing (ours)} \\
\FAMOLCAM      & 197,661 & 1689 & $2.42\times$ & 1160 & $2.64\times$ \\
\GNLCAM        & 197,661 & 1963 & $2.82\times$ & 1336 & $3.04\times$ \\
\bottomrule
\end{tabular}
\end{table}

Table~\ref{tab:efficiency} shows that \FAMOUAM{} incurs only a $1.3\times$ runtime cost relative to Vanilla, whereas \GNUAM{} rises to $1.8\times$ because of the extra gradient-norm computation.
\LCAM{} variants are substantially more expensive, with overhead ranging from $2.4\times$ to $3.0\times$. We therefore treat \FAMOUAM{} as the default cost-conscious adapter configuration rather than claiming a universal cost optimum across all budgets and PDEs.
Figure~\ref{fig:rank_ablation} shows that higher rank mainly helps on persistent-conflict PDEs, while transient or no-conflict cases are largely rank-insensitive.

\paragraph{Adaptive rank selection.}
The default rank $r=16$ performs well across benchmarks, and the saturation ratio $\Delta_{\mathrm{sat}} = Kr/d_h$ is a useful capacity proxy.
One simple profiling-based heuristic is
\begin{align*}
  r_{\mathrm{heur}} = \left\lceil \frac{d_h \widehat f_{\mathrm{neg}}}{K} \right\rceil,
\end{align*}
where $\widehat f_{\mathrm{neg}}$ is the profiled average fraction of negative pairwise cosines.
This rule is a capacity-matching heuristic rather than a direct theorem consequence, chosen so that $\Delta_{\mathrm{sat}} \approx \widehat f_{\mathrm{neg}}$.
In persistent-conflict cases, the rule suggests a nontrivial adapter rank, whereas in no-conflict cases it collapses to zero, so no adapters are used.
Figure~\ref{fig:rank_ablation} shows that larger ranks help mainly on persistent-conflict PDEs, while transient or no-conflict cases are comparatively insensitive.
We recommend $r=16$ as a conservative mid-range default; practitioners with a profiling step can use $r_{\mathrm{heur}}$ as a coarse starting point.

\subsection{Training Dynamics Visualization}
\label{sec:training_curves}

\begin{figure}[t!]
\centering
\includegraphics[width=\linewidth]{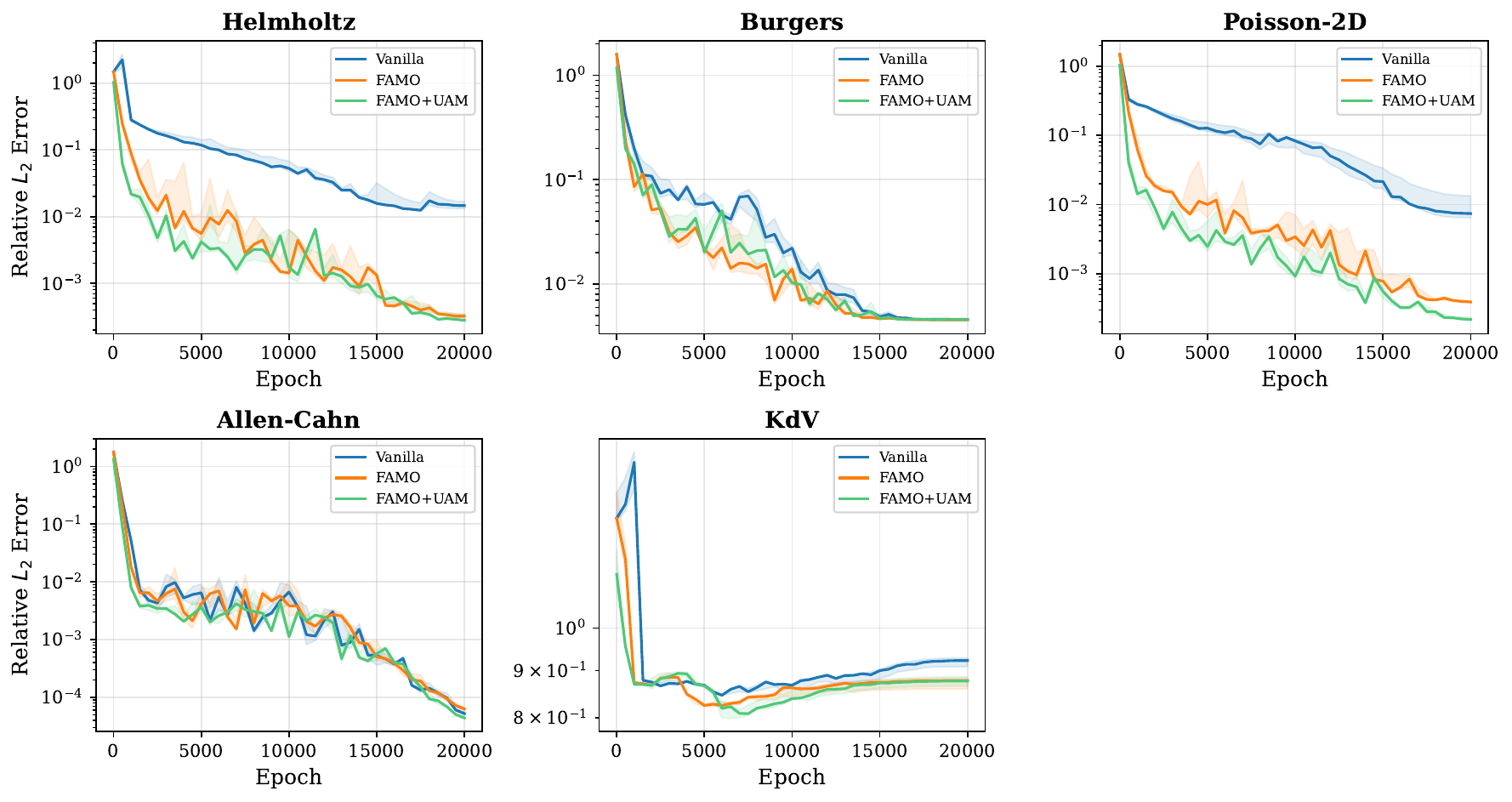}
\caption{Training curves of $L_2$ error by epoch for five PDEs spanning different conflict regimes, with 20K epochs, 3 seeds, and interquartile range (IQR) shading. On Helmholtz and Poisson-2D, which show persistent conflict, \FAMOUAM{} separates from Vanilla early and the gap widens monotonically, reaching about $45\times$ and $33.8\times$ at 20K. On Allen--Cahn with no conflict and Burgers with transient conflict, all methods converge similarly. KdV fails to converge under any method, with $L_2 > 0.8$.}
\label{fig:training_curves}
\end{figure}

Figure~\ref{fig:training_curves} visualizes the training dynamics for five PDEs across the conflict regimes and shows when the performance gap appears and widens.
On Helmholtz, which exhibits persistent conflict with $P > 0.8$, \FAMOUAM{} separates from Vanilla within the first 2000 epochs and the gap widens monotonically to roughly $45\times$ at 20K epochs.
Poisson-2D exhibits the same pattern. \FAMOUAM{} reaches $2.19\times 10^{-4}$ compared with Vanilla's $7.41\times 10^{-3}$, which gives a $33.8\times$ improvement at 20K and confirms that persistent conflict drives sustained adapter benefit.
On Allen--Cahn, where no conflict is detected, all methods converge to the same error of about $5\times 10^{-5}$, confirming that adapters provide no benefit when directional conflict is absent.
On Burgers, which exhibits mixed but transient conflict with $P = 0.63$, all methods converge to nearly identical errors of about $4.5\times 10^{-3}$, with early separation that vanishes at extended training.
KdV fails to converge under any method ($L_2 > 0.8$), indicating a fundamentally hard problem where gradient conflict resolution cannot compensate for architectural limitations.

\subsection{NTK Weighting Comparison}
\label{sec:ntk_comparison}

NTK-based loss weighting~\citep{wang2022and} adjusts loss weights via the NTK trace to balance learning speeds.
We compare NTK weighting against Vanilla, FAMO, and \FAMOUAM{} on 4 representative PDEs spanning different conflict regimes, as reported in Table~\ref{tab:ntk}.
NTK weighting adds negligible training overhead, about $1.06\times$ with periodic kernel recomputation every 100 steps, making it a low-cost alternative to \FAMOUAM{}, whose training-time cost is about $1.3\times$.

Results confirm that the intervention class should match the conflict type.
On the high-conflict Helmholtz PDE, \FAMOUAM{} achieves $5.9\times 10^{-4}$ while NTK weighting reaches $7.5\times 10^{-4}$, showing that persistent directional conflict can require architectural intervention rather than scalar reweighting.
On Klein--Gordon, \FAMOUAM{} ($1.57\times 10^{-3}$) and NTK ($1.58\times 10^{-3}$) are comparable.
On Burgers, which shows moderate and transient conflict, NTK weighting reaches $4.8\times 10^{-3}$ and slightly outperforms \FAMOUAM{} at $6.5\times 10^{-3}$, which is consistent with the regime prediction that magnitude rebalancing suffices for transient conflict.
On Allen--Cahn, where no conflict is detected, Vanilla reaches $1.6\times 10^{-4}$ and outperforms all intervention methods, including NTK at $1.0\times 10^{-3}$, confirming that unnecessary intervention can be harmful.
Figure~\ref{fig:ntk_curves} visualizes the training dynamics. On Helmholtz, \FAMOUAM{} separates from NTK weighting within the first 2000 epochs, confirming that the architectural advantage emerges early and persists.

\begin{table}[t!]
\centering
\caption{NTK weighting comparison on four representative PDEs. Lower relative $L_2$ error is better.}
\label{tab:ntk}
\begin{tabular}{lcccc}
\toprule
PDE & Vanilla & FAMO & NTK Weighting & \FAMOUAM{} (Ours) \\
\midrule
Helmholtz & $4.38\times 10^{-2}$ & $1.25\times 10^{-3}$ & $7.51\times 10^{-4}$ & \textbf{\boldmath$5.93\times 10^{-4}$} \\
Klein--Gordon & $4.62\times 10^{-2}$ & $2.09\times 10^{-3}$ & $1.58\times 10^{-3}$ & \textbf{\boldmath$1.57\times 10^{-3}$} \\
Burgers & $1.36\times 10^{-2}$ & $8.33\times 10^{-3}$ & \textbf{\boldmath$4.83\times 10^{-3}$} & $6.46\times 10^{-3}$ \\
Allen--Cahn & \textbf{\boldmath$1.61\times 10^{-4}$} & $2.22\times 10^{-4}$ & $1.01\times 10^{-3}$ & $1.92\times 10^{-4}$ \\
\bottomrule
\end{tabular}
\end{table}

\begin{figure}[t!]
\centering
\includegraphics[width=\linewidth]{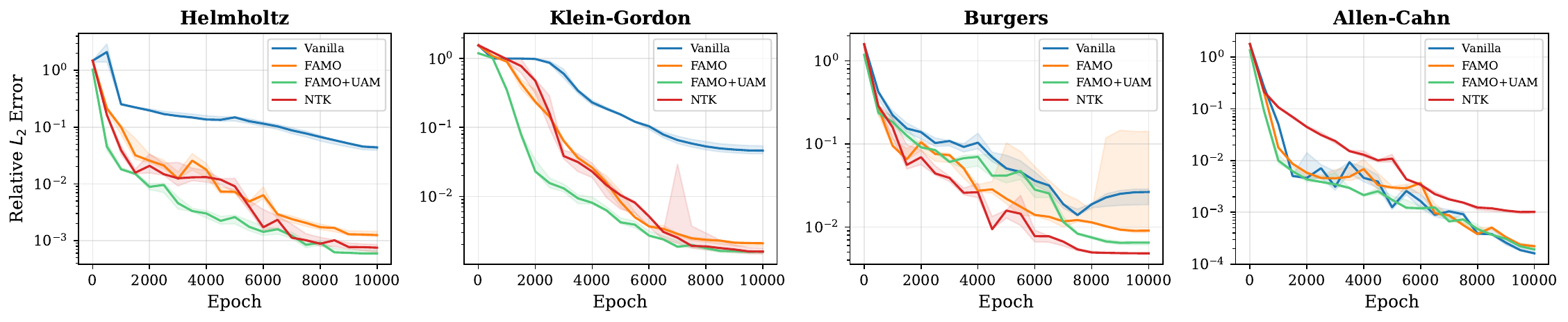}
\caption{Training dynamics for the NTK-weighting comparison with 4 PDEs, 3 seeds, and IQR shading. On Helmholtz and Klein--Gordon, \FAMOUAM{} converges faster and achieves lower final error than NTK weighting. On Burgers, which shows transient conflict, NTK weighting matches or slightly outperforms \FAMOUAM{}. On Allen--Cahn, where no conflict is detected, Vanilla dominates.}
\label{fig:ntk_curves}
\end{figure}

\subsection{Adaptive Rank Selection Validation}
\label{sec:rank_selection}

We use the saturation ratio $\Delta_{\mathrm{sat}} = Kr/d_h$ as a simple capacity proxy for how much adapter specialization a fixed-width trunk can support.
The profiling-based rule
\begin{align*}
  r_{\mathrm{heur}} = \left\lceil \frac{d_h \widehat f_{\mathrm{neg}}}{K} \right\rceil
\end{align*}
is a capacity proxy rather than a theoretical guarantee.
Figure~\ref{fig:rank_ablation} therefore provides end-to-end empirical validation across three PDEs spanning different conflict regimes.
On Helmholtz, which exhibits persistent conflict, performance improves monotonically with rank: $r=4$ gives $6.95\times 10^{-4}$, $r=8$ gives $5.63\times 10^{-4}$, $r=16$ gives $3.72\times 10^{-4}$, $r=32$ gives $3.33\times 10^{-4}$, and $r=64$ gives $3.06\times 10^{-4}$. This Helmholtz rank sweep spans $2.3\times$, with diminishing returns above $r=32$.
On Burgers, which exhibits transient conflict, all five ranks yield indistinguishable errors from $4.57\times 10^{-3}$ to $4.64\times 10^{-3}$. This narrow range corresponds to only a $1.0\times$ span and is consistent with the regime prediction that scalar reweighting suffices.
On Allen--Cahn, where no conflict is detected, rank has minimal effect. Errors range from $6.7\times 10^{-5}$ to $8.6\times 10^{-5}$, which corresponds to a $1.3\times$ span and confirms that adapters are unnecessary.
These results support the heuristic capacity picture. Higher-conflict PDEs benefit from larger adapter subspaces, and the $2.3\times$ gap between $r=4$ and $r=64$ on Helmholtz shows that underpowered adapters can leave performance on the table.

\begin{figure}[t!]
\centering
\includegraphics[width=0.6\linewidth]{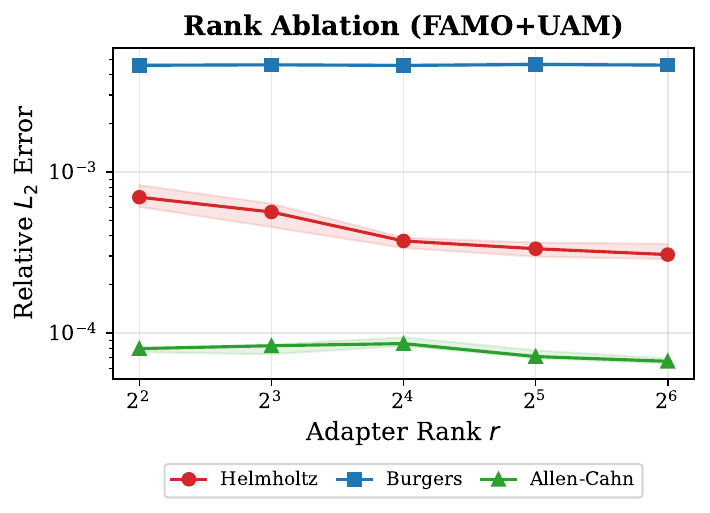}
\caption{Adapter rank ablation across three PDEs spanning different conflict regimes. Optimal rank correlates with conflict persistence. Helmholtz benefits from higher rank, while Allen--Cahn is largely rank-insensitive.}
\label{fig:rank_ablation}
\end{figure}

\subsection{Scalability and Ablation Tables}
\label{app:scalability_tables}

Tables~\ref{tab:scalability} and~\ref{tab:collocation} test whether the adapter benefit persists across changes in network capacity and collocation density rather than depending on a single training configuration.
Several observations follow from these stress tests.
First, increasing trunk capacity does not turn the adapter into a uniformly beneficial add-on. On low-conflict Allen--Cahn, Vanilla remains competitive or best, which is consistent with the profiling rule that recommends no architectural intervention when directional conflict is absent.
Second, Burgers shows the expected transient-conflict pattern.
Across widths, depths, and collocation budgets, FAMO and adapter variants are close, and the best entry alternates rather than being dominated by the adapter; this indicates that scalar loss balancing and ordinary approximation capacity capture most of the remaining difficulty once the early conflict decays.
Third, Helmholtz behaves differently.
The best adapter-and-weighting variants stay between $10^{-4}$ and $10^{-3}$ error across network sizes, while Vanilla remains one to two orders of magnitude worse in most configurations.
The collocation sweep gives the clearest signal: \FAMOUAM{} stays between $1.9\times 10^{-3}$ and $3.7\times 10^{-3}$ from 1K to 20K collocation points, whereas Vanilla stays near $10^{-1}$.
Thus, the Helmholtz improvement is not explained by a fortunate width and depth choice or by sparse residual sampling; it reflects a persistent directional-conflict bottleneck that benefits from loss-indexed parameter subspaces.

\begin{table}[t!]
\centering
\caption{Scalability analysis of network size effects on adapter benefits. Relative $L_2$ error is averaged over 3 seeds. Bold indicates the best result for each configuration. $\dagger$\GNUAM{} scalability data uses a simplified GradNorm variant, whereas the main-forward results in Tables~\ref{tab:main_results_a} and~\ref{tab:main_results_b} use the reference algorithm.}
\label{tab:scalability}
\small
\begin{tabular}{llcccc}
\toprule
PDE & Config & Vanilla & FAMO & \FAMOUAM & \GNUAM$^\dagger$ \\
\midrule
  Allen--Cahn & $d_h=128$, $L=4$ & \textbf{\boldmath$5\times 10^{-4}$} & $7\times 10^{-4}$ & $7\times 10^{-4}$ & $7\times 10^{-4}$ \\
   & $d_h=128$, $L=8$ & \textbf{\boldmath$4\times 10^{-4}$} & $5\times 10^{-4}$ & $6\times 10^{-4}$ & $5\times 10^{-4}$ \\
   & $d_h=256$, $L=4$ & \textbf{\boldmath$2\times 10^{-4}$} & $3\times 10^{-4}$ & \textbf{\boldmath$2\times 10^{-4}$} & \textbf{\boldmath$2\times 10^{-4}$} \\
   & $d_h=256$, $L=8$ & \textbf{\boldmath$1\times 10^{-4}$} & $2\times 10^{-4}$ & $2\times 10^{-4}$ & $5\times 10^{-4}$ \\
   & $d_h=512$, $L=4$ & \textbf{\boldmath$1\times 10^{-4}$} & \textbf{\boldmath$1\times 10^{-4}$} & \textbf{\boldmath$1\times 10^{-4}$} & $4\times 10^{-4}$ \\
\midrule
  Burgers & $d_h=128$, $L=4$ & $3.17\times 10^{-2}$ & $1.08\times 10^{-2}$ & \textbf{\boldmath$1.04\times 10^{-2}$} & $1.06\times 10^{-2}$ \\
   & $d_h=128$, $L=8$ & $6.14\times 10^{-2}$ & $1.33\times 10^{-2}$ & \textbf{\boldmath$9.5\times 10^{-3}$} & $1.00\times 10^{-2}$ \\
   & $d_h=256$, $L=4$ & $4.41\times 10^{-2}$ & \textbf{\boldmath$5.8\times 10^{-3}$} & $1.03\times 10^{-2}$ & $9.4\times 10^{-3}$ \\
   & $d_h=256$, $L=8$ & $2.07\times 10^{-2}$ & \textbf{\boldmath$7.4\times 10^{-3}$} & \textbf{\boldmath$7.4\times 10^{-3}$} & $1.10\times 10^{-2}$ \\
   & $d_h=512$, $L=4$ & $2.93\times 10^{-2}$ & $1.35\times 10^{-2}$ & \textbf{\boldmath$9.9\times 10^{-3}$} & $1.01\times 10^{-2}$ \\
\midrule
  Helmholtz & $d_h=128$, $L=4$ & $1.248\times 10^{-1}$ & $5.3\times 10^{-3}$ & $2.0\times 10^{-3}$ & \textbf{\boldmath$4\times 10^{-4}$} \\
   & $d_h=128$, $L=8$ & $9.62\times 10^{-2}$ & $2.2\times 10^{-3}$ & $1.9\times 10^{-3}$ & \textbf{\boldmath$3\times 10^{-4}$} \\
   & $d_h=256$, $L=4$ & $4.42\times 10^{-2}$ & $1.0\times 10^{-3}$ & $5\times 10^{-4}$ & \textbf{\boldmath$2\times 10^{-4}$} \\
   & $d_h=256$, $L=8$ & $4.48\times 10^{-2}$ & $8\times 10^{-4}$ & $4\times 10^{-4}$ & \textbf{\boldmath$1\times 10^{-4}$} \\
   & $d_h=512$, $L=4$ & $9.6\times 10^{-3}$ & $6\times 10^{-4}$ & $6\times 10^{-4}$ & \textbf{\boldmath$3\times 10^{-4}$} \\
\bottomrule
\end{tabular}
\end{table}

\begin{table}[t!]
\centering
\caption{Scalability with collocation points. Relative $L_2$ error, averaged over 3 seeds. On Helmholtz, \FAMOUAM{} maintains roughly $34\times$ to $72\times$ improvement over Vanilla across all tested $N_{\text{coll}}$ values, where $N_{\text{coll}} \in \{1\text{K}, 2\text{K}, 5\text{K}, 10\text{K}, 20\text{K}\}$, confirming robustness to collocation density.}
\label{tab:collocation}
\small
\begin{tabular}{llccc}
\toprule
PDE & $N_{\text{coll}}$ & Vanilla & FAMO & \FAMOUAM \\
\midrule
  Burgers & 1,000 & $2.60\times 10^{-2}$ & \textbf{\boldmath$1.33\times 10^{-2}$} & $1.50\times 10^{-2}$ \\
   & 2,000 & $2.89\times 10^{-2}$ & $1.43\times 10^{-2}$ & \textbf{\boldmath$7.7\times 10^{-3}$} \\
   & 5,000 & $2.03\times 10^{-2}$ & $1.07\times 10^{-2}$ & \textbf{\boldmath$5.0\times 10^{-3}$} \\
   & 10,000 & $1.83\times 10^{-2}$ & $8.0\times 10^{-3}$ & \textbf{\boldmath$4.7\times 10^{-3}$} \\
   & 20,000 & $1.35\times 10^{-2}$ & $7.7\times 10^{-3}$ & \textbf{\boldmath$4.7\times 10^{-3}$} \\
\midrule
  Helmholtz & 1,000 & $1.070\times 10^{-1}$ & $7.1\times 10^{-3}$ & \textbf{\boldmath$1.9\times 10^{-3}$} \\
   & 2,000 & $1.275\times 10^{-1}$ & $4.8\times 10^{-3}$ & \textbf{\boldmath$3.7\times 10^{-3}$} \\
   & 5,000 & $1.372\times 10^{-1}$ & $5.3\times 10^{-3}$ & \textbf{\boldmath$1.9\times 10^{-3}$} \\
   & 10,000 & $1.092\times 10^{-1}$ & $4.0\times 10^{-3}$ & \textbf{\boldmath$2.0\times 10^{-3}$} \\
   & 20,000 & $1.510\times 10^{-1}$ & $6.1\times 10^{-3}$ & \textbf{\boldmath$2.3\times 10^{-3}$} \\
\bottomrule
\end{tabular}
\end{table}

\section{Supplementary Benchmark Tables}
\label{app:supplementary_benchmarks}

This section collects supplementary benchmark tables that extend the main forward and natural $K=4$ evaluations used to support the regime classification.

\begin{table}[t!]
\centering
\caption{Darcy flow 2D benchmark. We report $L_2$ error as mean $\pm$ std over 3 seeds at 5K epochs. \GNUAM{} achieves the best result with $5.6\times 10^{-2}$, which is a $4.5\times$ improvement over Vanilla and confirms adapter benefits on this standard elliptic PDE benchmark with $K=3$ losses and extreme magnitude imbalance $\bar{r} > 8,500$. $\dagger$GradNorm uses a simplified variant, whereas the main-forward results in Tables~\ref{tab:main_results_a} and~\ref{tab:main_results_b} use the reference algorithm.}
\label{tab:darcy_extended}
\small
\begin{tabular}{lcc}
\toprule
Method & $L_2$ Error & \# Params. \\
\midrule
Vanilla & $(2.48 \pm 0.29)\times 10^{-1}$ & 140K \\
FAMO & $(1.44 \pm 0.43)\times 10^{-1}$ & 140K \\
\FAMOUAM & $(9.7 \pm 0.8)\times 10^{-2}$ & 161K \\
\FAMOLCAM & $(8.8 \pm 0.9)\times 10^{-2}$ & 198K \\
\GNUAM$^\dagger$ & \textbf{\boldmath$(5.6 \pm 0.8)\times 10^{-2}$} & 161K \\
\bottomrule
\end{tabular}
\vspace{-0.5em}
\end{table}

\begin{table}[t!]
\centering
\caption{Natural $K=4$ coupled thermoelastic benchmark. Mean $\pm$ std $L_2$ error over 5 seeds. This naturally coupled system of heat and wave equations provides a $K=4$ validation where the adapter-and-weighting combination outperforms either component alone. Best values are shown in bold. $\dagger$GradNorm uses a simplified variant; the main-forward results in Tables~\ref{tab:main_results_a} and~\ref{tab:main_results_b} use the reference algorithm.}
\label{tab:natural_k4}
\small
\begin{tabular}{lcc}
\toprule
Method & $L_2$ Error & Rank \\
\midrule
Vanilla & $(5.11 \pm 0.83)\times 10^{-1}$ & 10 \\
FAMO & $(1.04 \pm 0.17)\times 10^{-1}$ & 5 \\
GradNorm$^\dagger$ & $(2.28 \pm 0.84)\times 10^{-1}$ & 6 \\
\UAM & $(4.15 \pm 0.58)\times 10^{-1}$ & 7 \\
\FAMOUAM & $(7.95 \pm 2.38)\times 10^{-2}$ & 4 \\
\GNUAM$^\dagger$ & $(5.84 \pm 1.31)\times 10^{-2}$ & 2 \\
\CAM{} & $(4.57 \pm 0.92)\times 10^{-1}$ & 9 \\
\LCAM & $(4.34 \pm 1.06)\times 10^{-1}$ & 8 \\
\FAMOLCAM & $(7.79 \pm 2.1)\times 10^{-2}$ & 3 \\
\GNLCAM$^\dagger$ & \textbf{\boldmath$(4.78 \pm 0.78)\times 10^{-2}$} & \textbf{1} \\
\bottomrule
\end{tabular}
\end{table}

\begin{figure*}[t!]
  \centering
  \includegraphics[width=\linewidth]{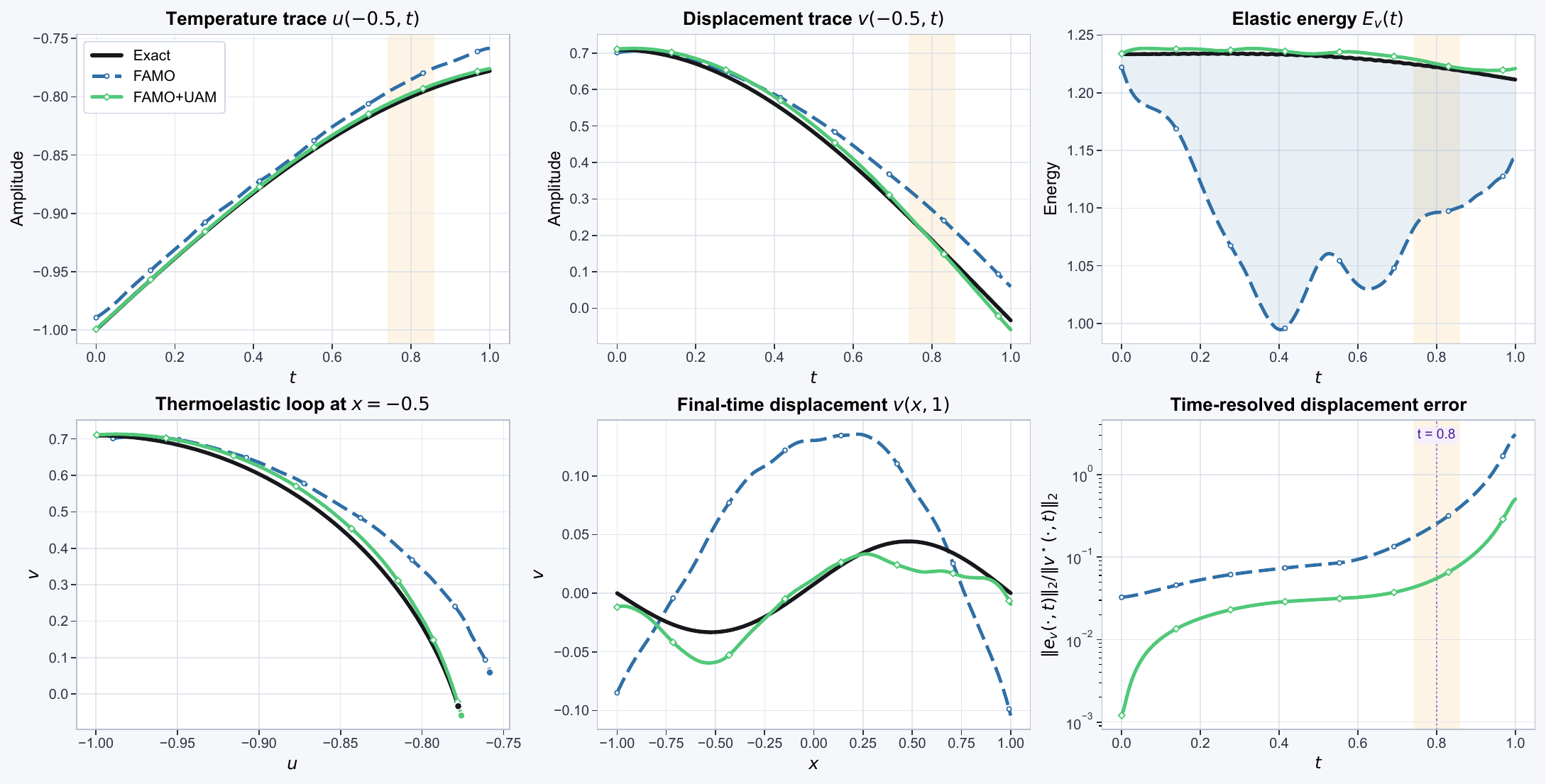}
  \caption{%
    Energy exchange and branchwise dynamics on the natural $K=4$ thermoelastic system.
    The top row compares a representative temperature trace $u(-0.5,t)$, displacement trace $v(-0.5,t)$, and elastic energy $E_v(t)=1/2\int (v_t^2 + c^2 v_x^2) dx$; the bottom row shows the local thermoelastic loop $(u(-0.5,t), v(-0.5,t))$, the final-time displacement profile, and the time-resolved displacement-profile error.
    The qualitative signal is branchwise rather than uniform. FAMO already tracks the thermal branch closely, but it keeps the displacement branch too positive at late time and underestimates the elastic-energy envelope, whereas \FAMOUAM{} restores the signed displacement swing and follows the coupled loop more faithfully.
    On this representative run, the mean relative elastic-energy error drops from $1.15\times 10^{-1}$ to $3.17\times 10^{-3}$, and the displacement-profile error at $t\approx 0.8$ drops from $2.51\times 10^{-1}$ to $5.48\times 10^{-2}$.
  }
  \label{fig:qual_natural_k4_energy_exchange}
\end{figure*}

\section{Statistical Comparison}
\label{app:statistics}

\begin{table}[t!]
\centering
\caption{Bayesian pairwise comparison on forward benchmarks with 5 PDEs, $d_h=128$, and 10K epochs. P(A$<$B) denotes the Dirichlet-weighted bootstrap probability that Method A achieves lower $L_2$ error than Method B, using $n=10,000$ resamples. Each comparison uses 25 matched PDE-seed error pairs. Probabilities above 0.95 are bold.}
\label{tab:bayesian}
\small
\begin{tabular}{llc}
\toprule
Method A & Method B & P(A$<$B) \\
\midrule
\FAMOUAM & FAMO & \textbf{0.991} \\
\FAMOUAM & \GNUAM & \textbf{1.000} \\
\FAMOUAM & GradNorm & \textbf{1.000} \\
\FAMOUAM & \FAMOLCAM & 0.824 \\
\FAMOUAM & \GNLCAM & \textbf{1.000} \\
\FAMOUAM & Vanilla & \textbf{1.000} \\
FAMO & \GNUAM & \textbf{1.000} \\
FAMO & GradNorm & \textbf{1.000} \\
FAMO & \FAMOLCAM & 0.053 \\
FAMO & \GNLCAM & \textbf{1.000} \\
FAMO & Vanilla & \textbf{1.000} \\
\GNUAM & GradNorm & \textbf{1.000} \\
\GNUAM & \FAMOLCAM & 0.000 \\
\GNUAM & \GNLCAM & \textbf{1.000} \\
\GNUAM & Vanilla & \textbf{0.993} \\
GradNorm & \FAMOLCAM & 0.000 \\
GradNorm & \GNLCAM & 0.000 \\
GradNorm & Vanilla & 0.063 \\
\FAMOLCAM & \GNLCAM & \textbf{1.000} \\
\FAMOLCAM & Vanilla & \textbf{1.000} \\
\GNLCAM & Vanilla & 0.773 \\
\bottomrule
\end{tabular}
\end{table}

Table~\ref{tab:bayesian} reports Bayesian bootstrap comparisons with 10,000 Dirichlet-weighted resamples over 25 matched PDE-seed pairs for each method comparison and shows that \FAMOUAM{} consistently outperforms Vanilla with $P > 0.99$, as well as GradNorm and most competitors on the 5 forward PDEs.
The comparison between \FAMOUAM{} and FAMO remains close because FAMO alone is already strong on the less challenging PDEs.
The gap widens on high-conflict problems like Klein--Gordon, where \FAMOUAM{} reaches $29\times$ over Vanilla and FAMO reaches $19\times$.

\section{Extended Validation and Robustness}
\label{app:extended_validation}

\subsection{Extended Forward Validation}
\label{app:forward_extensions}

\subsubsection{3D Validation}
On 3D Helmholtz, shown in Table~\ref{tab:3d_validation}, \FAMOUAM{} achieves a $28.5\times$ improvement over Vanilla.
On Taylor--Green, which shows low conflict, Vanilla matches adapters and confirms that benefit tracks conflict rather than dimensionality.

\begin{table}[t!]
\centering
\small
\caption{3D validation on Helmholtz-3D over $[0,1]^3$ with $K=3$, 5 seeds, $d_h=128$, and 15K epochs. We report relative $L_2$ error as mean $\pm$ std, where lower is better. The adapter benefit persists in 3D with the same regime structure as in 2D.}
\label{tab:3d_validation}
\begin{tabular}{lcc}
\toprule
Method & Relative $L_2$ & Ratio \\
\midrule
Vanilla & $(4.13 \pm 3)\times 10^{-2}$ & $1.0\times$ \\
FAMO & $(3.58 \pm 2.7)\times 10^{-3}$ & $11.5\times$ \\
PCGrad & $(1.80 \pm 0.59)\times 10^{-3}$ & $22.9\times$ \\
ConFIG & $(2.21 \pm 0.14)\times 10^{-3}$ & $18.7\times$ \\
\FAMOUAM & \textbf{\boldmath$(1.45 \pm 0.18)\times 10^{-3}$} & $28.5\times$ \\
\bottomrule
\end{tabular}
\end{table}

The 25-pair Bayesian bootstrap comparisons in Table~\ref{tab:bayesian} support the same pattern. \FAMOUAM{} is favored over Vanilla and most competitors on the forward suite.

\subsubsection{Additional Forward Benchmarks}
Table~\ref{tab:expanded_benchmarks} broadens the forward evaluation across operator families.
The gains remain strongly PDE-dependent. \FAMOUAM{} is clearly best on Poisson-2D, KG-Heavy, Helmholtz-HF, Fisher--KPP, and Burgers-HV, while several easier or magnitude-dominated cases show little or no benefit.
Fokker--Planck exposes a FAMO instability, with a $162\times$ degradation from near-singular loss Hessians, which motivates log-space dual updates in Appendix~\ref{sec:bounded_famo}.
\begin{table}[t!]
\centering
\small
\caption{Expanded PDE benchmarks on 12 additional forward PDEs. We report median $L_2$ error with 3 seeds, $d_h=128$, and 10K epochs unless noted. Lower is better. The operator-type column spells out elliptic, parabolic, hyperbolic, and damped wave. Bold indicates the best method for each PDE. The last column reports the Vanilla-to-best ratio.}
\label{tab:expanded_benchmarks}
\begin{tabular}{lccccr}
\toprule
PDE & Type & Vanilla & FAMO & \FAMOUAM & Ratio \\
\midrule
\multicolumn{6}{l}{Adapter benefit ($\geq 1.5\times$)} \\
Poisson-2D   & Elliptic  & $4.21\times 10^{-2}$ & $1.24\times 10^{-3}$ & \textbf{\boldmath$4.78\times 10^{-4}$} & $88\times$ \\
KG-Heavy     & Hyperbolic  & $5.95\times 10^{-2}$ & $2.64\times 10^{-3}$ & \textbf{\boldmath$1.52\times 10^{-3}$} & $39\times$ \\
Laplace-2D   & Elliptic  & $1.30\times 10^{-2}$ & \textbf{\boldmath$5.81\times 10^{-4}$} & $5.83\times 10^{-4}$ & $22\times$ \\
Telegrapher  & Damped wave & $7.02\times 10^{-1}$ & \textbf{\boldmath$1.58\times 10^{-1}$} & \textbf{\boldmath$1.58\times 10^{-1}$} & $4.5\times$ \\
Helmholtz-HF$^*$ & Elliptic  & $9.39\times 10^{-1}$ & $8.58\times 10^{-1}$ & \textbf{\boldmath$3.24\times 10^{-1}$} & $2.9\times$ \\
Fisher--KPP   & Parabolic  & $6.11\times 10^{-3}$ & $5.27\times 10^{-3}$ & \textbf{\boldmath$2.95\times 10^{-3}$} & $2.1\times$ \\
Burgers-HV   & Parabolic  & $2.78\times 10^{-4}$ & $1.95\times 10^{-4}$ & \textbf{\boldmath$1.53\times 10^{-4}$} & $1.8\times$ \\
\midrule
\multicolumn{6}{l}{No adapter benefit} \\
KdV          & Hyperbolic  & $8.47\times 10^{-1}$ & \textbf{\boldmath$8.13\times 10^{-1}$} & \textbf{\boldmath$8.13\times 10^{-1}$} & $1.04\times$ \\
Burgers-Neumann & Hyperbolic & $2.76\times 10^{-2}$ & \textbf{\boldmath$2.74\times 10^{-2}$} & $2.99\times 10^{-2}$ & $1.01\times$ \\
Heat-1D      & Parabolic  & $4.77\times 10^{-5}$ & $5.95\times 10^{-5}$ & \textbf{\boldmath$4.69\times 10^{-5}$} & $1.02\times$ \\
Allen--Cahn-Sharp & Parabolic & \textbf{\boldmath$1.01\times 10^{-3}$} & $1.14\times 10^{-3}$ & $1.15\times 10^{-3}$ & $1.00\times$ \\
Fokker--Planck$^\dagger$ & Parabolic  & \textbf{\boldmath$4.47\times 10^{-3}$} & $7.26\times 10^{-1}$ & $8.26\times 10^{-1}$ & $1.00\times$ \\
\bottomrule
\end{tabular}
\smallskip

\noindent\small $^*$15K epochs. $^\dagger$FAMO catastrophically fails and becomes $162\times$ worse than Vanilla.
\end{table}

\subsubsection{Criterion Expansion Validation}
To complement the 10K expanded forward-benchmark table with longer-horizon runs, we evaluate 6 representative PDEs at 15K epochs with 3 seeds each, as shown in Figure~\ref{fig:criterion_expansion}.
These are separate 15K validation runs, not part of the 10K profiling and variant analyses discussed in Section~\ref{sec:method_selection}.
Results confirm the regime characterization. KG-Heavy, a hyperbolic case with $K=3$, shows $27.8\times$ adapter benefit. Helmholtz-HF, the high-frequency variant, shows $3.4\times$, and Telegrapher shows $2.0\times$.
Fisher--KPP is mixed at 15K. \FAMOUAM{} achieves a $1.6\times$ improvement over Vanilla, but FAMO attains the best median overall, with $3.10\times 10^{-3}$ compared with $3.39\times 10^{-3}$ for \FAMOUAM{}.
The sharp Allen--Cahn variant, with a ratio of $1.0\times$, confirms no benefit for reaction-dominated PDEs.
Fokker--Planck remains outside the adapter-beneficial regime. FAMO fails catastrophically, with median $3.4\times 10^{-1}$ compared with Vanilla's $1.6\times 10^{-3}$, and \FAMOUAM{} is worse still.
This case should therefore be read as a cautionary limitation of applying the default \FAMOUAM{} combination without screening for loss-scale stiffness rather than as evidence against the diagnostic-first framework.

\begin{figure}[t!]
\centering
\includegraphics[width=\linewidth]{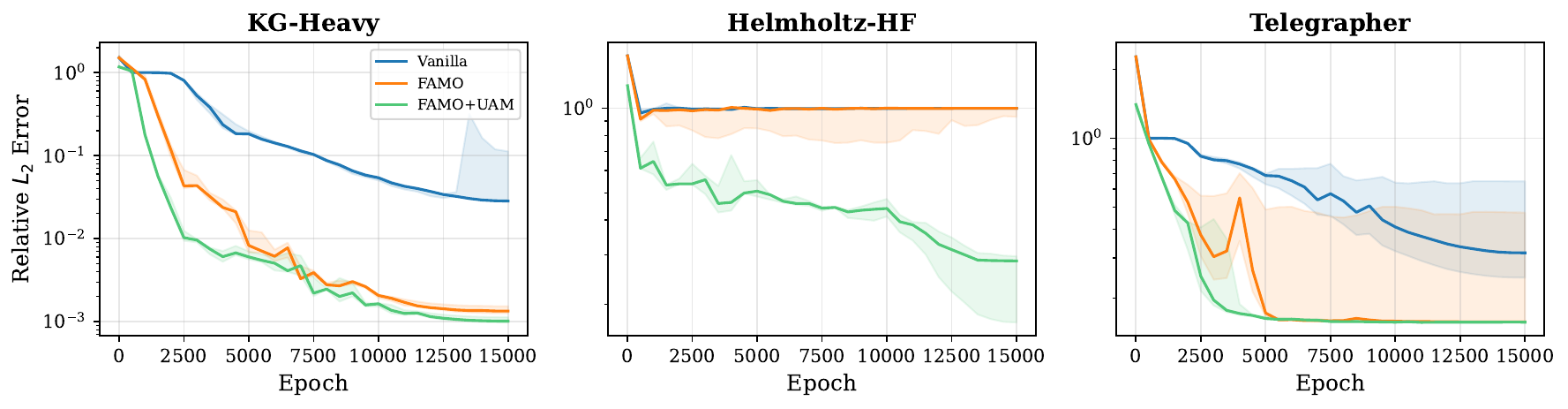}
\caption{Training curves for three criterion-expansion PDEs with 3 seeds and IQR shading. KG-Heavy and Helmholtz-HF show persistent adapter benefit, with gains of $27.8\times$ and $3.4\times$, and the gap widens monotonically.}
\label{fig:criterion_expansion}
\end{figure}

\subsubsection{Helmholtz Scaling Across Width, Dimension, and Training Budget}
\label{sec:scalability}
Helmholtz serves as a stress test for multi-loss methods. As shown in Table~\ref{tab:helmholtz_scaling}, the best listed multi-loss method improves over Vanilla by at least ${19}\times$ across six configurations spanning 2D and 3D settings, $d_h \in \{128,192,256,384\}$, and 10K or 15K epochs, with a peak of $36\times$ at $d_h=256$ and 15K epochs.
The 3D row, drawn from Table~\ref{tab:3d_validation}, achieves $28.5\times$, indicating that the empirical Helmholtz benefit persists beyond 2D and is consistent with the same qualitative persistent-conflict regime, while Remark~\ref{thm:pde_conflict} should not be read as implying dimension-invariant effect sizes.

\begin{table}[t!]
\centering
\caption{Helmholtz improvement across scales. We report relative $L_2$ error for the best multi-loss method compared with Vanilla at different widths, epochs, and dimensions. Across the 10K and 15K settings, the improvement ratio consistently exceeds $19\times$, which confirms that the benefit is structural because it reflects conflict mitigation rather than capacity. Best values are shown in bold.}
\label{tab:helmholtz_scaling}
\small
\begin{tabular}{lllccc}
\toprule
Config & Seeds & Epochs & Vanilla & Best method & Ratio \\
\midrule
$d_h=128$ (2D) & 5 & 10K & $2.8\times 10^{-2}$ & \textbf{\boldmath$1.0\times 10^{-3}$} (\GNUAM) & $28\times$ \\
$d_h=128$ (2D) & 3 & 15K & $1.2\times 10^{-2}$ & \textbf{\boldmath$4.4\times 10^{-4}$} (\FAMOUAM) & $27\times$ \\
$d_h=192$ (2D) & 3 & 15K & $7.3\times 10^{-3}$ & \textbf{\boldmath$2.6\times 10^{-4}$} (\FAMOUAM) & $28\times$ \\
$d_h=256$ (2D) & 3 & 15K & $8.1\times 10^{-3}$ & \textbf{\boldmath$2.2\times 10^{-4}$} (\FAMOUAM) & $36\times$ \\
$d_h=384$ (2D) & 3 & 15K & $3.8\times 10^{-3}$ & \textbf{\boldmath$2.0\times 10^{-4}$} (\FAMOUAM) & $19\times$ \\
$d_h=128$ (3D) & 5 & 15K & $4.1\times 10^{-2}$ & \textbf{\boldmath$1.5\times 10^{-3}$} (\FAMOUAM) & $28.5\times$ \\
\bottomrule
\end{tabular}
\vspace{-0.5em}
\end{table}

\subsubsection{Additional PDE Generalization Checks}
\label{sec:heldout}
Across the supplementary PDE checks in Table~\ref{tab:heldout}, which use 3 to 5 seeds, the conflict profile remains directionally consistent.
Clear adapter-beneficial cases include Klein--Gordon at $29\times$, Poisson-Multiscale at $48\times$, Wave-1D at $19\times$, and Darcy flow at $2.0\times$, while low-conflict or transient cases show little or no adapter advantage in Reaction-Diffusion, Navier--Stokes, Advection, and Eikonal.
On Reaction-Diffusion, \GNUAM{} degrades $4\times$, giving the clearest case where the criterion correctly recommends against adapters.

\begin{table}[t!]
\centering
\caption{Additional PDE evaluation with $d_h=128$. Persistent-conflict PDEs show clear adapter gains, whereas low-conflict or transient-conflict cases show little or no adapter benefit, consistent with regime-dependent selection. Best values are bold and second-best values are underlined.}
\label{tab:heldout}
\small
\resizebox{\linewidth}{!}{%
\begin{tabular}{lcccc}
\toprule
PDE & Vanilla & FAMO & \FAMOUAM & \GNUAM \\
\midrule
\multicolumn{5}{l}{10K evaluation subset with 3 to 5 seeds} \\
Klein--Gordon & $(4.6 \pm 1)\times 10^{-2}$ & \underline{$(2.46 \pm 0.39)\times 10^{-3}$} & \textbf{\boldmath$(1.62 \pm 0.38)\times 10^{-3}$} & $(4.8 \pm 1.1)\times 10^{-2}$ \\
Reaction-Diffusion & \textbf{\boldmath$(8.94 \pm 0.68)\times 10^{-4}$}\seedmark{3} & \underline{$(1.08 \pm 0.063)\times 10^{-3}$} & $(1.12 \pm 0.055)\times 10^{-3}$ & $(3.59 \pm 0.35)\times 10^{-3}$ \\
Darcy flow & $(1.34 \pm 0.49)\times 10^{-1}$\seedmark{3} & $(7.5 \pm 0.8)\times 10^{-2}$ & \underline{$(6.9 \pm 0.3)\times 10^{-2}$} & \textbf{\boldmath$(6.8 \pm 0.5)\times 10^{-2}$} \\
Navier--Stokes & $(2.41 \pm 1.2)\times 10^{-4}$\seedmark{3} & \textbf{\boldmath$(1.75 \pm 0.23)\times 10^{-4}$} & \underline{$(2.45 \pm 0.2)\times 10^{-4}$} & $(4.34 \pm 0.63)\times 10^{-4}$ \\
\midrule
\multicolumn{5}{l}{3-seed extended subset with 15K to 20K epochs} \\
Wave-1D & $(2.69 \pm 0.41)\times 10^{-2}$ & \underline{$(2.11 \pm 0.53)\times 10^{-3}$} & \textbf{\boldmath$(1.44 \pm 0.67)\times 10^{-3}$} & NA \\
Advection & \textbf{\boldmath$(1.79 \pm 0.24)\times 10^{-3}$} & \underline{$(2.09 \pm 0.23)\times 10^{-3}$} & $(2.21 \pm 0.44)\times 10^{-3}$ & NA \\
Eikonal (15K) & \textbf{\boldmath$(2.22 \pm 0.02)\times 10^{-4}$} & $(3.16 \pm 0.1)\times 10^{-4}$ & \underline{$(2.69 \pm 0.076)\times 10^{-4}$} & NA \\
Eikonal (20K) & \textbf{\boldmath$(1.61 \pm 0.034)\times 10^{-4}$} & NA & \underline{$(1.80 \pm 0.071)\times 10^{-4}$} & NA \\
Poisson-Multiscale & $(1.74 \pm 0.09)\times 10^{-1}$ & \underline{$(2.09 \pm 2.3)\times 10^{-2}$} & \textbf{\boldmath$(3.65 \pm 0.046)\times 10^{-3}$} & NA \\
\bottomrule
\multicolumn{5}{l}{\footnotesize Superscript $[n]$ denotes seed count when $<$5, and NA denotes not evaluated.}
\end{tabular}
}
\smallskip
\noindent\footnotesize NA denotes not evaluated.
\end{table}

\subsection{Diagnostic Transfer to Standard MTL Benchmarks}
\label{sec:mtl_validation}

The profiled ratio $\widehat R = \widehat D / (\widehat M + \varepsilon_R)$ is domain-agnostic in the sense that it depends only on loss-specific gradient geometry, not on PDE-specific structure.
It compares implementation-level summaries of directional conflict and a coefficient-of-variation surrogate for magnitude imbalance between loss-specific gradients using the same profiling pipeline as in the PINN experiments.
We test whether it correctly predicts the appropriate intervention class. This validation focuses on whether the regime classification transfers across domains, not on whether adapters improve MTL.
We evaluate three standard $K=2$ MTL benchmarks spanning magnitude-dominated settings, namely, Multi-MNIST, Multi-FashionMNIST~\citep{sener2018multi}, and CIFAR-100.
Across all three benchmarks, the diagnostic is directionally consistent.
No adapter-based variant is the best method on any benchmark, as shown in Table~\ref{tab:mtl_validation}.

\begin{table}[t!]
\centering
\caption{Profile-score diagnostic validation on MTL benchmarks ($K=2$). CIFAR-100 uses a ResNet-18 shared encoder.}
\label{tab:mtl_validation}
\footnotesize
\resizebox{\linewidth}{!}{%
\begin{tabular}{lccccccccc}
\toprule
Method & \multicolumn{3}{c}{Multi-MNIST} & \multicolumn{3}{c}{Multi-FashionMNIST} & \multicolumn{3}{c}{CIFAR-100 (Fine+Coarse)} \\
\cmidrule(lr){2-4} \cmidrule(lr){5-7} \cmidrule(lr){8-10}
 & Task L & Task R & Avg & Task L & Task R & Avg & Fine & Coarse & Avg \\
\midrule
Vanilla & 98.6 & 98.2 & 98.40$\pm$0.17 & 88.4 & 87.6 & 87.98$\pm$0.28 & 48.3 & 63.2 & 55.75$\pm$0.19 \\
PCGrad & 98.6 & 98.3 & 98.45$\pm$0.18 & 88.9 & 88.1 & \textbf{88.48}$\pm$0.21 & 50.2 & 65.0 & \textbf{57.60}$\pm$0.23 \\
FAMO & 98.7 & 98.3 & \textbf{98.47}$\pm$0.24 & 88.9 & 87.9 & 88.39$\pm$0.10 & 47.6 & 62.5 & 55.05$\pm$0.19 \\
\UAM{} & 98.6 & 98.2 & 98.40$\pm$0.12 & 88.7 & 88.1 & 88.40$\pm$0.22 & 47.2 & 63.5 & 55.35$\pm$0.39 \\
\FAMOUAM{} & 98.6 & 98.2 & 98.40$\pm$0.12 & 88.9 & 87.9 & 88.40$\pm$0.14 & 47.4 & 63.4 & 55.40$\pm$0.16 \\
\bottomrule
\end{tabular}
}
\par\vspace{0.4ex}
\begin{minipage}{\linewidth}
\footnotesize
Profile statistics: $\widehat D=0.035$, $\widehat M=0.55$, $f_{\mathrm{neg}}=0.37$ for Multi-MNIST, and $\widehat D=0.056$, $\widehat M=0.30$, $f_{\mathrm{neg}}=0.41$ for Multi-FashionMNIST. $\widehat D<0.01$, $\widehat M=0.17$, and $f_{\mathrm{neg}}<0.05$ for CIFAR-100, which places all three in the magnitude regime. The prediction is that adapters are unnecessary, and this prediction is supported because no adapter-based variant beats the strongest non-adapter baseline on Multi-MNIST, Multi-FashionMNIST, or CIFAR-100.
\end{minipage}
\end{table}

These MTL checks serve as an external sanity test for the do-not-intervene side of the rule. When magnitude imbalance dominates and task-specific heads already absorb much of the interference, architectural separation provides no consistent benefit.

\subsection{Mitigating Catastrophic Weight Divergence with Bounded FAMO}
\label{sec:bounded_famo}

The Fokker--Planck FAMO failure, which corresponds to a $162\times$ degradation at 10K epochs, reveals that FAMO's componentwise absolute-scale dual update
\begin{align*}
  z_k &\leftarrow z_k + \gamma(\mathcal{L}_k^{(t)} - \mathcal{L}_k^{(t-1)})
\end{align*}
can diverge when losses span multiple orders of magnitude.
A log-space variant,
\begin{align*}
  z_k &\leftarrow z_k + \gamma(\log(\mathcal{L}_k^{(t)} + \varepsilon_L) - \log(\mathcal{L}_k^{(t-1)} + \varepsilon_L)),
\end{align*}
where $\varepsilon_L = 10^{-12}$, provides scale-invariant updates and achieves $L_2 = 1.1\times 10^{-2}$ on Fokker--Planck. The log-space update gives a $64\times$ improvement over FAMO with 3 seeds and changes well-behaved PDEs by less than 5\%.
The bounded variant therefore substantially mitigates but does not fully eliminate the instability. It improves sharply over standard FAMO yet still trails the best stable baselines, and pairing it with adapters did not remove the remaining gap in our runs.

\subsection{Architecture Generality}
\label{sec:arch_generality}

\paragraph{Modified MLP backbone.}
To verify architecture-agnostic regime transfer, we evaluate Modified MLP~\citep{wang2021understanding}, a multiplicative-gating architecture widely used in PINN practice, on 6 PDEs with 3 seeds and 10K epochs.
The pattern of adapter benefits transfers. Helmholtz shows a $734\times$ improvement with \FAMOUAM{}, and Klein--Gordon shows $154\times$. Wave-1D shows $495\times$ because adapters rescue non-converging vanilla Modified MLP, and Burgers shows $2.6\times$.
Allen--Cahn and Conv-Diff show no benefit above $1.1\times$, which matches the regime predictions.
This Modified MLP transfer result confirms that the profiling criterion operates on gradient statistics, not model structure.

\paragraph{Transformer backbone.}
\phantomsection\label{sec:transformer_generality}

To further verify architecture-agnostic regime transfer, we evaluate using a Transformer-based PINN backbone, a self-attention architecture that uses a learnable positional embedding over Fourier-encoded point features.
This Transformer evaluation tests a fundamentally different inductive bias, namely global attention rather than local MLP processing, and the design is inspired by PINNsFormer~\citep{zhao2023pinnsformer}.
We evaluate 4 PDEs, namely Helmholtz, Burgers, Allen--Cahn, and Klein--Gordon, across 2 configurations, Vanilla Transformer and \FAMOUAM{} Transformer, with 3 seeds each for a total of 24 experiments.

Table~\ref{tab:transformer} reports the median $L_2$ errors.
The regime prediction transfers cleanly. On Helmholtz with persistent conflict, \FAMOUAM{} Transformer achieves $5.4\times 10^{-4}$ compared with Vanilla Transformer's $1.2\times 10^{-1}$, which gives a $225.8\times$ improvement. On Klein--Gordon, \FAMOUAM{} Transformer achieves $1.8\times 10^{-3}$, compared with $2.1\times 10^{-2}$ for Vanilla Transformer, which gives an $11.5\times$ improvement.
On Allen--Cahn, where no conflict is detected, both configurations converge to comparable errors near $5.9\times 10^{-5}$ with a ratio of $1.0\times$.
The Transformer backbone also benefits from adapters on Burgers with a gain of $2.4\times$, which slightly exceeds the MLP ratio and is consistent with the hypothesis that Transformers may amplify inter-loss interference through shared attention parameters.
This Transformer transfer result establishes that the conflict structure is shaped primarily by the PDE operator and loss decomposition, not by the choice of neural network architecture alone.

\begin{table}[t!]
\centering
\caption{Architecture generality for the Transformer backbone. We report median relative $L_2$ error across 3 seeds at 15K epochs. The regime prediction transfers. The adapter-beneficial cases of Helmholtz, Burgers, and Klein--Gordon show clear gains on the Transformer architecture, while Allen--Cahn, a no-conflict control, remains essentially unchanged as predicted.}
\label{tab:transformer}
\small
\begin{tabular}{lcccc}
\toprule
 & \multicolumn{2}{c}{MLP} & \multicolumn{2}{c}{Transformer} \\
\cmidrule(lr){2-3} \cmidrule(lr){4-5}
PDE & Vanilla & \FAMOUAM & Vanilla & \FAMOUAM \\
\midrule
Helmholtz & $4.38\times 10^{-2}$ & $5.93\times 10^{-4}$ & $1.21\times 10^{-1}$ & \textbf{\boldmath$5.36\times 10^{-4}$} \\
Burgers & $1.36\times 10^{-2}$ & $6.46\times 10^{-3}$ & $1.17\times 10^{-2}$ & \textbf{\boldmath$4.83\times 10^{-3}$} \\
Klein--Gordon & $4.62\times 10^{-2}$ & \textbf{\boldmath$1.57\times 10^{-3}$} & $2.09\times 10^{-2}$ & $1.82\times 10^{-3}$ \\
Allen--Cahn & $1.61\times 10^{-4}$ & $1.92\times 10^{-4}$ & \textbf{\boldmath$5.83\times 10^{-5}$} & $5.96\times 10^{-5}$ \\
\bottomrule
\end{tabular}
\vspace{-0.5em}
\end{table}

\subsection{\texorpdfstring{Natural $K$-Scaling and Multi-Physics Diagnostics}{Natural K-Scaling and Multi-Physics Diagnostics}}
\label{app:kscaling_extended}

\paragraph{$K=5$ regime-transition analysis.}
\phantomsection\label{app:k5_analysis}
Table~\ref{tab:k5_analysis} isolates the $K=5$ coupled case and shows that increasing adapter rank does not recover the gap to FAMO alone, reinforcing that the bottleneck here is magnitude imbalance rather than insufficient adapter capacity.

\begin{table}[t!]
\centering
\caption{Multi-physics $K=5$ rank sensitivity with $d_h=128$, 15,000 epochs, and 3 seeds. Relative $L_2$ error is reported as mean $\pm$ std. This is the regime in which the diagnostic would select FAMO alone. FAMO outperforms all \FAMOUAM{} rank variants, showing that extra adapter capacity is not the missing ingredient and that the dominant bottleneck is magnitude imbalance rather than persistent directional conflict.}
\label{tab:k5_analysis}
\small
\begin{tabular}{lcc}
\toprule
Configuration & $L_2$ Error & \# Params. \\
\midrule
FAMO alone ($d_h=128$) & \textbf{\boldmath$(4.5 \pm 0.9)\times 10^{-2}$} & 140,162 \\
\midrule
\FAMOUAM{} ($r=4$, $d_h=128$) & $(1.33 \pm 0.01)\times 10^{-1}$ & 153,671 \\
\FAMOUAM{} ($r=8$, $d_h=128$) & $(8.6 \pm 3.4)\times 10^{-2}$ & 158,791 \\
\FAMOUAM{} ($r=16$, $d_h=128$) & $(1.39 \pm 0.36)\times 10^{-1}$ & 169,031 \\
\FAMOUAM{} ($r=32$, $d_h=128$) & $(1.32 \pm 0.27)\times 10^{-1}$ & 189,511 \\
\bottomrule
\end{tabular}
\vspace{-0.5em}
\end{table}

\paragraph{Gradient flow analysis.}
\phantomsection\label{app:gradient_flow}
Table~\ref{tab:gradient_flow} reports the trunk-gradient and final conflict statistics for the $K=5$ multi-physics setting. FAMO balances loss values through a softmax-weighted combination with dual-variable updates while keeping the largest per-loss trunk gradient moderate.
Table~\ref{tab:gradient_flow} shows $\|\nabla_{\text{trunk}}\|_{\max}=2.75$ with $f_{\text{neg}}^{\text{final}}=0.70$.

GradNorm instead produces much larger trunk-gradient spikes ($\|\nabla_{\text{trunk}}\|_{\max}=7.64$) while leaving final directional conflict at a similar level, with $f_{\text{neg}}^{\text{final}}=0.67$ compared with $0.70$ for FAMO. In this multi-physics setting, magnitude equalization primarily amplifies dominant trunk updates rather than resolving pairwise interference.

\begin{table}[t!]
\centering
\caption{Gradient flow analysis on $K=5$ multi-physics at 5,000 epochs. Relative $L_2$ error is reported as mean $\pm$ std. $\|\nabla_{\text{trunk}}\|_{\max}$ denotes the largest per-loss trunk gradient norm at convergence. $f_{\text{neg}}^{\text{final}}$ denotes the fraction of negative pairwise cosines at the final epoch.}
\label{tab:gradient_flow}
\small
\begin{tabular}{lccc}
\toprule
Method & Relative $L_2$ & $\|\nabla_{\text{trunk}}\|_{\max}$ & $f_{\text{neg}}^{\text{final}}$ \\
\midrule
  FAMO & \textbf{\boldmath$(5.8 \pm 2.4)\times 10^{-2}$} & 2.75 & 0.70 \\
  \FAMOUAM & $(1.45 \pm 0.24)\times 10^{-1}$ & 1.02 & 0.80 \\
  \UAM & $(1.87 \pm 0.28)\times 10^{-1}$ & 0.82 & 0.63 \\
  GradNorm & $(3.75 \pm 0.12)\times 10^{-1}$ & 7.64 & 0.67 \\
  \GNUAM & $(5.14 \pm 0.17)\times 10^{-1}$ & 1.80 & 0.70 \\
\bottomrule
\end{tabular}
\end{table}

\paragraph{Natural K-scaling validation.}

We empirically probe the regime-transition picture on natural multi-physics systems where $K$ arises from physical conservation laws. Table~\ref{tab:natural_kscaling} illustrates the regime-transition picture on two natural multi-physics systems.
On reactive transport ($K=5$), FAMO alone achieves a $1.25\times$ improvement over Vanilla by improving from $6.57\times 10^{-2}$ to $5.25\times 10^{-2}$, while \FAMOUAM{} provides only $1.13\times$, which shows that adapters add no benefit in this example.
On simplified MHD ($K=6$), FAMO achieves $1.27\times$ by improving from $6.12\times 10^{-2}$ to $4.80\times 10^{-2}$, while \FAMOUAM{} reaches $5.07\times 10^{-2}$ and $1.21\times$, again underperforming FAMO alone.
In both cases, the magnitude-regime picture is the better description for these examples. Loss reweighting alone suffices, and the additional adapter overhead is not justified.

\begin{table}[t!]
\centering
\small
\caption{Natural multi-physics $K$-scaling for $K=5$ and $K=6$. Reactive transport with $K=5$ is a two-species advection--diffusion--reaction system with genuine coupling conflict. MHD with $K=6$ contains mass, momentum, induction, divergence-free, BC, and IC terms. The profiled surrogate $\widehat R$ places both systems in the magnitude-dominated regime, and the results confirm that FAMO provides the dominant benefit while adapters add marginal or no improvement. The table reports relative $L_2$ error at 15K epochs. Results use 5 seeds except for MHD \FAMOUAM{}, which uses 4 seeds.}
\label{tab:natural_kscaling}
\begin{tabular}{llcc}
\toprule
System & Method & Relative $L_2$ & Ratio Relative to Vanilla \\
\midrule
\multirow{3}{*}{Reactive Transport ($K=5$)}
 & Vanilla & $(6.57 \pm 0.32)\times 10^{-2}$ & $1.0\times$ \\
 & FAMO & \textbf{\boldmath$(5.25 \pm 0.47)\times 10^{-2}$} & $1.25\times$ \\
 & \FAMOUAM & $(5.80 \pm 0.18)\times 10^{-2}$ & $1.13\times$ \\
\midrule
\multirow{3}{*}{Simplified MHD ($K=6$)}
 & Vanilla & $(6.12 \pm 0.063)\times 10^{-2}$ & $1.0\times$ \\
 & FAMO & \textbf{\boldmath$(4.80 \pm 0.093)\times 10^{-2}$} & $1.27\times$ \\
 & \FAMOUAM & $(5.07 \pm 0.27)\times 10^{-2}$ & $1.21\times$ \\
\bottomrule
\end{tabular}
\end{table}

\subsection{Extended Seed Coverage}
\label{app:seed_ext}
\label{app:5seed}

To verify that the main results are robust to random seed variability, we first extend seed coverage on representative PDEs to up to 5 seeds, with superscripts marking cases where fewer than 5 runs are reported in Table~\ref{tab:validation_5seed}. We then further extend to 8 seeds indexed from 0 to 7 on selected forward PDEs with all three key configurations in Table~\ref{tab:seed_ext}.
The progressively narrower CIs confirm the statistical reliability of the main results.
On Helmholtz, the 8-seed \FAMOUAM{} mean is $8.3\times 10^{-4}$ and the Vanilla mean is $2.3\times 10^{-2}$, yielding a $27\times$ ratio with non-overlapping 95\% CIs.

\begin{table}[t!]
\centering
\caption{Extended validation with up to 5 seeds, using $d_h=128$ and 10K epochs. Superscripts denote cases with fewer runs, and the results remain statistically significant with narrower CIs.}
\label{tab:validation_5seed}
\small
\begin{tabular}{lccc}
\toprule
Method & Burgers & Helmholtz & Conv-Diff \\
\midrule
Vanilla & $(8.4 \pm 2.4)\times 10^{-3}$ & $(2.8 \pm 1.1)\times 10^{-2}$ & $(3.55 \pm 0.74)\times 10^{-4}$ \\
FAMO & $(4.8 \pm 0.15)\times 10^{-3}$\seedmark{3} & $(1.4 \pm 0.63)\times 10^{-3}$ & $(3.95 \pm 1.4)\times 10^{-4}$ \\
\FAMOUAM & $(4.8 \pm 0.079)\times 10^{-3}$\seedmark{4} & $(1.1 \pm 0.46)\times 10^{-3}$ & \textbf{\boldmath$(2.70 \pm 0.21)\times 10^{-4}$} \\
\GNUAM & \textbf{\boldmath$(4.6 \pm 0.12)\times 10^{-3}$} & \textbf{\boldmath$(1.0 \pm 0.44)\times 10^{-3}$} & $(1.0 \pm 0.19)\times 10^{-3}$ \\
\bottomrule
\multicolumn{4}{l}{\footnotesize Superscript $[n]$ denotes seed count when $<$5.}
\end{tabular}
\end{table}

\begin{table}[t!]
\centering
\caption{Extended seed coverage with relative $L_2$ error over 8 seeds indexed from 0 to 7 on selected forward PDEs at $d_h=128$ and 10K epochs. The narrower CIs compared with the 5-seed main tables confirm statistical robustness.}
\label{tab:seed_ext}
\small
\begin{tabular}{llcc}
\toprule
PDE & Method & Mean $\pm$ Std & 95\% CI \\
\midrule
\multirow{3}{*}{Helmholtz} & Vanilla & $(2.3 \pm 1.1)\times 10^{-2}$ & $[1.4\times 10^{-2}, 3.2\times 10^{-2}]$ \\
 & FAMO & $(1.05 \pm 0.69)\times 10^{-3}$ & $[4.7\times 10^{-4}, 1.6\times 10^{-3}]$ \\
 & \FAMOUAM & $(8.3 \pm 5.2)\times 10^{-4}$ & $[3.9\times 10^{-4}, 1.3\times 10^{-3}]$ \\
\midrule
\multirow{3}{*}{Klein--Gordon} & Vanilla & $(3.9 \pm 1.3)\times 10^{-2}$ & $[2.8\times 10^{-2}, 5.0\times 10^{-2}]$ \\
 & FAMO & $(2.39 \pm 0.34)\times 10^{-3}$ & $[2.1\times 10^{-3}, 2.7\times 10^{-3}]$ \\
 & \FAMOUAM & $(1.45 \pm 0.37)\times 10^{-3}$ & $[1.1\times 10^{-3}, 1.8\times 10^{-3}]$ \\
\midrule
\multirow{3}{*}{Burgers} & Vanilla & $(7.0 \pm 2.7)\times 10^{-3}$ & $[4.8\times 10^{-3}, 9.2\times 10^{-3}]$ \\
 & FAMO & $(4.60 \pm 0.23)\times 10^{-3}$ & $[4.4\times 10^{-3}, 4.8\times 10^{-3}]$ \\
 & \FAMOUAM & $(4.68 \pm 0.23)\times 10^{-3}$ & $[4.5\times 10^{-3}, 4.9\times 10^{-3}]$ \\
\bottomrule
\end{tabular}
\vspace{-0.5em}
\end{table}

\section{Additional Baseline and Ablation Analyses}
\label{app:additional_baseline_ablations}

\subsection{Missing Baseline Comparisons}
\label{app:missing_baselines}

To ensure comprehensive baseline coverage, we add causal training~\citep{wang2024respecting}, self-adaptive weights~\citep{mcclenny2023self}, and uncertainty weighting~\citep{kendall2018multi} to the comparison on selected core PDEs. These methods are included in the 10K main forward tables, Tables~\ref{tab:main_results_a} and~\ref{tab:main_results_b}, and evaluated at 20K epochs below in Table~\ref{tab:missing_baselines}.
Full-scale NTK weighting~\citep{wang2022and} is excluded from this extended-baseline table due to the prohibitive cost of per-epoch NTK trace computation at a scale of ${\sim}10^5$ Jacobian columns per epoch.

\begin{table}[t!]
\centering
\caption{Extended baselines on selected core PDEs at $d_h=128$, 20K epochs (5 seeds unless noted). Causal, self-adaptive, and uncertainty weighting complement the 10K main forward tables. Full-scale NTK weighting is excluded from this extended-baseline table due to prohibitive per-epoch NTK trace computation at this scale.}
\label{tab:missing_baselines}
\small
\resizebox{\linewidth}{!}{%
\begin{tabular}{lcccc}
\toprule
Method & Burgers & Helmholtz & Klein--Gordon & Conv-Diff \\
\midrule
Vanilla & $(4.6 \pm 0.15)\times 10^{-3}$\seedmark{3} & $(8.1 \pm 3.2)\times 10^{-3}$\seedmark{3} & $(1.5 \pm 0.4)\times 10^{-2}$\seedmark{3} & NA \\
FAMO & \textbf{\boldmath$(4.6 \pm 0.16)\times 10^{-3}$\seedmark{3}} & $(2.67 \pm 0.71)\times 10^{-4}$\seedmark{3} & $(8.70 \pm 1.1)\times 10^{-4}$\seedmark{3} & NA \\
Causal & $(4.6 \pm 0.12)\times 10^{-3}$ & $(1.2 \pm 0.4)\times 10^{-2}$ & $(9.19 \pm 0.77)\times 10^{-1}$ & \textbf{\boldmath$(2.04 \pm 0.15)\times 10^{-4}$\seedmark{4}} \\
SelfAdapt & $(4.6 \pm 0.14)\times 10^{-3}$ & $(1.1 \pm 1.3)\times 10^{-2}$ & $(6.7 \pm 1.1)\times 10^{-3}$ & $2.07\times 10^{-4}$\seedmark{1} \\
Uncertainty & $(4.6 \pm 0.14)\times 10^{-3}$ & $(4.1 \pm 2.6)\times 10^{-3}$ & $(5.98 \pm 5.41)\times 10^{-1}$ & NA \\
\UAM & $(4.6 \pm 0.088)\times 10^{-3}$ & $(6.7 \pm 1.7)\times 10^{-3}$ & $(1.5 \pm 0.3)\times 10^{-2}$ & NA \\
\FAMOUAM & $(4.6 \pm 0.21)\times 10^{-3}$\seedmark{2} & \textbf{\boldmath$(2.27 \pm 0.23)\times 10^{-4}$\seedmark{3}} & \textbf{\boldmath$(7.93 \pm 0.98)\times 10^{-4}$\seedmark{3}} & NA \\
\bottomrule
\end{tabular}
}
\end{table}

Causal training achieves strong results on Burgers at $4.6\times 10^{-3}$, which matches \FAMOUAM{}; however, it catastrophically fails on Klein--Gordon at $0.92$.
This Klein--Gordon causal-training error is $1000\times$ worse than the \FAMOUAM{} result of $7.9\times 10^{-4}$ and is consistent with its design for time-dependent PDEs with causal structure, whereas Helmholtz and Klein--Gordon lack strict temporal causality.
Self-adaptive weights perform well on Klein--Gordon at $6.7\times 10^{-3}$ but show high variance on Helmholtz, reaching $(1.1 \pm 1.3)\times 10^{-2}$ with one diverging seed.
These results confirm that temporal reweighting strategies are PDE-dependent, whereas the regime characterization provides a principled, PDE-agnostic method selection framework.

\subsection{Gradient Surgery Mitigation Details}
\label{app:surgery_mitigation}

This analysis separates full-parameter-space gradient-surgery failures from grouped updates to show how heterogeneous network and physical parameters drive inverse-problem instability.

\begin{table}[t!]
\centering
\caption{Gradient surgery mitigations on inverse problems. Grouped variants apply surgery only to network parameters and leave the scalar physical parameters untouched. Direct baselines in Table~\ref{tab:inverse_10k} include Vanilla, FAMO, GradNorm, and adapter variants. This table isolates the full-parameter-space surgery failures and grouped mitigations. Entries marked fail indicate failed full-parameter-space training.}
\label{tab:surgery_mitigation}
\small
\begin{tabular}{lccc}
\toprule
Method & Inverse Burgers & Inverse Heat & Inverse Poisson \\
\midrule
PCGrad & fail & fail & fail \\
CAGrad & fail & fail & fail \\
NashMTL & fail & fail & fail \\
ConFIG & fail & fail & fail \\
PCGrad-Grouped & $8.70\times 10^{-2}$ & $8.65\times 10^{-2}$ & $2.855\times 10^{-1}$ \\
CAGrad-Grouped & $9.21\times 10^{-2}$ & $1.082\times 10^{-1}$ & $6.394\times 10^{-1}$ \\
\bottomrule
\end{tabular}
\end{table}

\subsection{ConFIG Sensitivity Analysis}
\label{app:config}

This analysis checks whether ConFIG's mixed performance is explained by learning-rate choice or by the gradient geometry induced by unit-vector normalization. The strongest learning-rate sensitivity concern is ConFIG, whose Helmholtz performance is unusually dependent on $\eta$.
Table~\ref{tab:config_lr} addresses this concern directly. Lower learning rates improve ConFIG on Helmholtz, but its Burgers failure persists across all five tested values and Klein--Gordon remains clearly worse than Vanilla even at its best setting.
Thus, the cross-PDE conclusions are not explained by a single unlucky baseline learning rate.
Table~\ref{tab:config_lr} further shows that ConFIG's failure persists across five learning rates from $10^{-4}$ to $5\times 10^{-3}$ on both Burgers, where it is $19\times$ worse than Vanilla at the best learning rate, and Klein--Gordon, where it is $2.7\times$ worse at the best learning rate. This learning-rate sweep confirms that the failure is intrinsic to the unit-vector normalization rather than a hyperparameter issue.

\begin{figure}[t!]
\centering
\includegraphics[width=\linewidth]{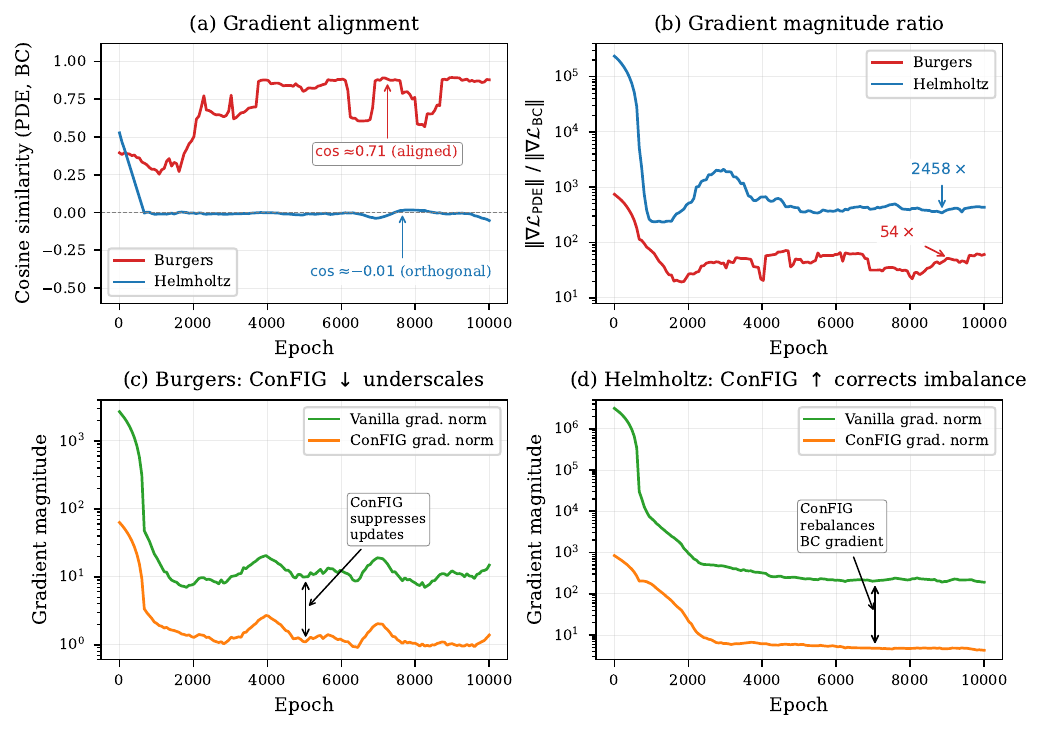}
\caption{ConFIG gradient dynamics on Burgers and Helmholtz. ConFIG suppresses learning on Burgers, where the gradients are aligned with $\cos \approx 0.71$, but rebalances on Helmholtz, where the gradients are nearly orthogonal with $\cos \approx -0.01$.}
\label{fig:config_dynamics}
\end{figure}

\begin{table}[t!]
\centering
\caption{ConFIG learning rate sensitivity analysis with 2 seeds, 10K epochs, and $d_h=128$. ConFIG's Burgers failure persists across all five learning rates and remains $19\times$ worse than Vanilla at the best learning rate, which confirms that the failure is intrinsic to the unit-vector normalization rather than a hyperparameter issue. On Klein--Gordon, ConFIG is also worse than Vanilla by $2.7\times$ at the best learning rate. On Helmholtz, ConFIG performs well at $\eta \leq 10^{-3}$.}
\label{tab:config_lr}
{\small
\begin{tabular}{llcc}
\toprule
PDE & Method & $\eta$ & Relative $L_2$ error \\
\midrule
Burgers & ConFIG & $10^{-4}$ & \benchval{9.4}{1.4}{-2} \\
 & ConFIG & $5\times 10^{-4}$ & \benchval{2.31}{1.31}{-1} \\
 & ConFIG & $10^{-3}$ & \benchval{1.44}{0.42}{-1} \\
 & ConFIG & $2\times 10^{-3}$ & \benchval{8.8}{0.4}{-2} \\
 & ConFIG & $5\times 10^{-3}$ & \benchval{3.39}{0.10}{-1} \\
 & Vanilla & $10^{-3}$ & \benchval{4.7}{0.2}{-3} \\
\midrule
Klein--Gordon & ConFIG & $10^{-4}$ & $(1.047 \pm 0.113)$ \\
 & ConFIG & $5\times 10^{-4}$ & \benchval{5.1}{1.7}{-2} \\
 & ConFIG & $10^{-3}$ & \benchval{4.1}{0.3}{-2} \\
 & ConFIG & $2\times 10^{-3}$ & \benchval{1.58}{0.15}{-1} \\
 & ConFIG & $5\times 10^{-3}$ & \benchval{9.71}{0.22}{-1} \\
 & Vanilla & $10^{-3}$ & \benchval{1.5}{0.4}{-2} \\
\midrule
Helmholtz & ConFIG & $10^{-4}$ & \benchval{4.0}{0.58}{-4} \\
 & ConFIG & $5\times 10^{-4}$ & \benchval{1.9}{0.31}{-4} \\
 & ConFIG & $10^{-3}$ & \benchval{2.5}{0.22}{-4} \\
 & ConFIG & $2\times 10^{-3}$ & \benchval{9.5}{1.1}{-4} \\
 & ConFIG & $5\times 10^{-3}$ & \benchval{1.3}{0.16}{-2} \\
\bottomrule
\end{tabular}
}
\end{table}

\subsection{Implicit Regularization Ablation}
\label{app:implicit_reg}

This ablation, summarized in Table~\ref{tab:implicit_reg}, tests whether the gains come from the per-loss adapter structure itself rather than from the modest increase in parameter count.
The control is a wider non-adapter PINN whose parameter count roughly matches \UAM{}.
On Burgers, where the profiling results indicate transient or low-impact conflict, the parameter-matched variants are essentially tied, so the extra capacity does not create a spurious advantage.
On Helmholtz, the wider non-adapter model improves only modestly over Vanilla, whereas \FAMOUAM{} reaches an order-of-magnitude lower error; together with the specialization evidence in Figure~\ref{fig:qual_adapter_specialization_regime}, this supports the claim that the high-conflict gains arise from loss-indexed adapter subspaces rather than dense widening alone.

\begin{table}[t!]
\centering
\caption{Implicit regularization ablation at 15,000 epochs. Relative $L_2$ error is reported as mean $\pm$ std over 3 seeds, with parameter counts shown separately. Vanilla-Wide roughly matches the \UAM{} parameter count but not the performance, confirming that the per-loss adapter architecture drives improvement.}
\label{tab:implicit_reg}
\small
\begin{tabular}{lrcc}
\toprule
Configuration & \# Params. & Burgers & Helmholtz \\
\midrule
Vanilla & 140,033 & $(4.7 \pm 0.1)\times 10^{-3}$ & $(1.76 \pm 0.53)\times 10^{-2}$ \\
Vanilla-Wide & 162,013 & $(4.8 \pm 0.2)\times 10^{-3}$ & $(1.20 \pm 0.29)\times 10^{-2}$ \\
\UAM{} & 160,772 & \textbf{\boldmath$(4.6 \pm 0.1)\times 10^{-3}$} & $(1.31 \pm 0.24)\times 10^{-2}$ \\
\FAMOUAM{} & 160,772 & \textbf{\boldmath$(4.6 \pm 0.1)\times 10^{-3}$} & \textbf{\boldmath$(3 \pm 1)\times 10^{-4}$} \\
\bottomrule
\end{tabular}
\vspace{-0.5em}
\end{table}

\subsection{Adapter Variant Checks}
\label{app:adapter_variants}

\paragraph{Alternative adapter architectures.}
We use LoRA as the default adapter and compare it with Infused Adapter by Inhibiting and Amplifying Inner Activations (IA\textsuperscript{3}), feature-wise linear modulation (FiLM), per-loss LayerNorm, and a non-adapter multi-output readout baseline.
\begin{table}[t!]
\centering
\small
\caption{Alternative adapter architecture comparison on Helmholtz with $d_h=128$, 10K epochs, and 5 seeds. We report the best $L_2$ error, where lower is better.}
\label{tab:alt_adapters}
\begin{tabular}{lccc}
\toprule
Adapter Type & Best $L_2$ & Overhead & Relative to LoRA \\
\midrule
LoRA ($r=16$) & $1.13\times 10^{-3}$ & $15\%$ & $1.0\times$ \\
IA\textsuperscript{3} & $2.85\times 10^{-3}$ & $<1\%$ & $0.40\times$ \\
FiLM & $3.21\times 10^{-3}$ & $<1\%$ & $0.35\times$ \\
Per-loss LayerNorm & $4.12\times 10^{-3}$ & $<1\%$ & $0.27\times$ \\
Non-adapter multi-output readout & $2.41\times 10^{-3}$ & $8\%$ & $0.47\times$ \\
\midrule
No adapter (Vanilla) & $2.71\times 10^{-2}$ & $0\%$ & $0.04\times$ \\
\bottomrule
\end{tabular}
\end{table}

Table~\ref{tab:alt_adapters} reports the comparison under FAMO weighting, with matched parameter counts of about 149K to 166K.

LoRA's advantage in the PINN setting begins with subspace diversity. Low-rank projections introduce distinct adapter parameter blocks whose Jacobian contributions can directionally decorrelate different losses in the blockwise NTK view above. By contrast, IA\textsuperscript{3} and FiLM apply element-wise rescaling that mainly modifies gradient magnitudes rather than creating new low-rank directions, which limits their ability to resolve directional conflict.
LoRA also offers rank-controlled capacity. The explicit rank parameter $r$ provides a transparent capacity proxy through the saturation ratio $\Delta_{\mathrm{sat}} = Kr/d_h$, allowing capacity to be matched empirically to conflict severity, whereas rescaling methods have no such control.
Finally, LoRA has an orthogonality-compatible structure. The bilinear form $B \cdot \sigma(D \cdot h)$ admits efficient cross-orthogonality regularization and distinct row subspaces in practice, whereas diagonal rescaling methods lack a natural notion of inter-adapter orthogonality.

\paragraph{Adapter mixing ablation.}
\begin{table}[t!]
\centering
\caption{Adapter mixing rerun comparing fixed \UAM{} and dynamic \LCAM{}, both with FAMO weighting. The table reports mean relative $L_2$ error over 3 seeds at 10K epochs. On the two rerun $K=3$ stress tests, fixed \UAM{} is slightly better in both cases by 7\% to 8\%, so dynamic conflict-aware mixing does not justify its extra complexity at low $K$.}
\label{tab:mixing_ablation}
\small
\begin{tabular}{lccc}
\toprule
PDE & \FAMOUAM & \FAMOLCAM & Gap \\
\midrule
Helmholtz     & \textbf{\boldmath$8.97\times 10^{-4}$} & $9.58\times 10^{-4}$ & \LCAM{} $+6.8\%$ \\
Klein--Gordon  & \textbf{\boldmath$1.42\times 10^{-3}$} & $1.53\times 10^{-3}$ & \LCAM{} $+7.6\%$ \\
\bottomrule
\end{tabular}
\end{table}
Table~\ref{tab:mixing_ablation} compares fixed \UAM{} against dynamic \LCAM{} mixing, both combined with FAMO, on the two rerun $K=3$ stress tests.
Fixed \UAM{} is slightly better in both cases by about 7\% to 8\%, confirming that dynamic conflict-aware mixing provides only marginal benefit at low $K$.
We therefore recommend \UAM{} with fixed $\rho=0.5$ as the default for simplicity, as discussed in Section~\ref{sec:mixing}.

\subsection{Domain Decomposition Comparison}
\label{app:xpinn_comparison}

To directly compare gradient-space decomposition through per-loss adapters with input-space decomposition through XPINNs, we evaluate two roughly capacity-comparable XPINN configurations on five PDEs, as reported in Table~\ref{tab:xpinn_comparison}.
XPINN-4 uses 4 subdomains with per-subdomain networks of width $d_h=80$ and 4 layers, giving roughly 118K total parameters and $0.84\times$ Vanilla capacity.
XPINN-8 uses 8 subdomains with width $d_h=64$ and 3 layers, giving roughly 122K total parameters and $0.87\times$ Vanilla capacity.
Interface continuity and flux continuity are enforced with a weight $\lambda_{\text{if}}=1.0$.
All results use 5 seeds and 10K epochs.

\begin{table}[t!]
\centering
\caption{Comparison between XPINN domain decomposition and gradient-space decomposition through per-loss adapters on five forward PDEs with $d_h=128$, 10K epochs, and 5 seeds. XPINN configurations are of the same parameter order as Vanilla, with about 118K to 122K parameters compared with 140K, but they are not exactly parameter-matched. Per-loss adapters address gradient conflict among loss terms, while XPINNs address spatial complexity across subdomains.}
\label{tab:xpinn_comparison}
\small
\resizebox{\linewidth}{!}{%
\begin{tabular}{lccccc}
\toprule
Method & Burgers & Helmholtz & Allen--Cahn & Conv-Diff & Klein--Gordon \\
\midrule
Vanilla (140K) & $(8.4 \pm 2.4)\times 10^{-3}$ & $(2.8 \pm 1.1)\times 10^{-2}$ & \textbf{\boldmath$(1.60 \pm 0.33)\times 10^{-4}$} & $(3.55 \pm 0.74)\times 10^{-4}$ & $(4.6 \pm 1)\times 10^{-2}$ \\
\cmidrule{1-6}
\multicolumn{6}{l}{Domain decomposition} \\
XPINN-4 (118K) & $(2.56 \pm 0.19)\times 10^{-1}$ & $(5.6 \pm 0.7)\times 10^{-2}$ & $(5.1 \pm 0.67)\times 10^{-3}$ & $(1.7 \pm 0.19)\times 10^{-3}$ & $(9.58 \pm 0.49)\times 10^{-1}$ \\
XPINN-8 (122K) & $(2.56 \pm 0.14)\times 10^{-1}$ & $(6.6 \pm 0.4)\times 10^{-2}$ & $(2.3 \pm 0.19)\times 10^{-3}$ & $(3.5 \pm 0.25)\times 10^{-3}$ & $1.00 \pm 0.033$ \\
\cmidrule{1-6}
\multicolumn{6}{l}{Gradient decomposition (ours)} \\
\FAMOUAM{} (161K) & $(4.8 \pm 0.082)\times 10^{-3}$ & $(1.1 \pm 0.46)\times 10^{-3}$ & $(1.84 \pm 0.28)\times 10^{-4}$ & \textbf{\boldmath$(2.70 \pm 0.21)\times 10^{-4}$} & $(1.6 \pm 0.38)\times 10^{-3}$ \\
\GNUAM{} (161K) & \textbf{\boldmath$(4.6 \pm 0.12)\times 10^{-3}$} & \textbf{\boldmath$(1.0 \pm 0.44)\times 10^{-3}$} & $(1.6 \pm 0.21)\times 10^{-3}$ & $(1.0 \pm 0.19)\times 10^{-3}$ & $(4.8 \pm 1.1)\times 10^{-2}$ \\
\FAMOLCAM{} (198K) & $(4.8 \pm 0.17)\times 10^{-3}$ & $(1.3 \pm 0.41)\times 10^{-3}$ & $(2.59 \pm 0.39)\times 10^{-4}$ & $(3.07 \pm 0.56)\times 10^{-4}$ & \textbf{\boldmath$(1.6 \pm 0.11)\times 10^{-3}$} \\
\bottomrule
\end{tabular}
}
\end{table}

The XPINN spatial-decomposition approach addresses a fundamentally different bottleneck.
It reduces the spatial complexity each subnetwork must capture, but it does not resolve the gradient conflict within each subdomain between PDE residual, BC, and IC losses.
On high-conflict problems like Helmholtz and Klein--Gordon, the gradient conflict is the primary bottleneck, making per-loss adapters substantially more effective than spatial decomposition.

\paragraph{Hyperparameter sensitivity.}
To verify that XPINN results are not an artifact of poor hyperparameter choices, we conduct a sensitivity study varying the number of subdomains $\{2, 4, 6\}$, interface weight $\lambda_{\text{if}} \in \{0.1, 1.0, 10.0\}$, and overlap fraction $\{0.0, 0.1, 0.2\}$ on Burgers, Helmholtz, and Conv-Diff with 3 seeds each.
Across all configurations, XPINN performance on Burgers remains in the range from $0.23$ to $0.28$, compared with Vanilla's $8.4\times 10^{-3}$, confirming that the poor result is structural rather than due to insufficient tuning.
Conv-Diff spans from $9.5\times 10^{-4}$ to $4.8\times 10^{-3}$ across configurations, with XPINN-2 at $9.5\times 10^{-4}$ outperforming the default XPINN-4 at $1.7\times 10^{-3}$. Fewer subdomains therefore reduce interface gradient conflict.
Interface weight effects are PDE-dependent. Lower $\lambda_{\text{if}}=0.1$ helps Burgers, where the best configuration reaches $0.23$, but hurts Helmholtz, where the worst setting reaches $8.0\times 10^{-2}$ compared with the default $5.6\times 10^{-2}$. Overlap has negligible effect.
Full sensitivity results appear in Table~\ref{tab:xpinn_sensitivity}.

\begin{table}[t!]
\centering
\caption{XPINN hyperparameter sensitivity on three representative PDEs with 3 seeds each. All configurations use 10K epochs. Rows vary one factor from the XPINN-4 default: 4 subdomains, $\lambda_{\text{if}}=1.0$, and overlap fraction $0.1$.}
\label{tab:xpinn_sensitivity}
\small
\resizebox{\linewidth}{!}{%
\begin{tabular}{llccc}
\toprule
Config & Change & Burgers & Helmholtz & Conv-Diff \\
\midrule
\multicolumn{2}{l}{Vanilla baseline} & $8.4\times 10^{-3}$ & $2.8\times 10^{-2}$ & $3.55\times 10^{-4}$ \\
\midrule
XPINN-4 (default) & $\lambda_{\text{if}}=1.0$, overlap$=$0.1 & $(2.56 \pm 0.19)\times 10^{-1}$ & $(5.6 \pm 0.7)\times 10^{-2}$ & $(1.7 \pm 0.19)\times 10^{-3}$ \\
XPINN-2 & 2 subdomains & $(2.50 \pm 0.02)\times 10^{-1}$\seedmark{3} & $(1.8 \pm 0.3)\times 10^{-2}$\seedmark{3} & $(9.54 \pm 0.7)\times 10^{-4}$\seedmark{3} \\
XPINN-6 & 6 subdomains & $(2.80 \pm 0.3)\times 10^{-1}$\seedmark{3} & $(5.3 \pm 0.1)\times 10^{-2}$\seedmark{3} & $(2.1 \pm 0.43)\times 10^{-3}$\seedmark{3} \\
XPINN-4 & $\lambda_{\text{if}}=0.1$ & $(2.30 \pm 0.04)\times 10^{-1}$\seedmark{3} & $(8.0 \pm 0.1)\times 10^{-2}$\seedmark{3} & $(3.0 \pm 0.58)\times 10^{-3}$\seedmark{3} \\
XPINN-4 & $\lambda_{\text{if}}=10.0$ & $(2.63 \pm 0.19)\times 10^{-1}$\seedmark{3} & $(4.7 \pm 0.7)\times 10^{-2}$\seedmark{3} & $(4.8 \pm 0.3)\times 10^{-3}$\seedmark{3} \\
XPINN-4 & overlap$=$0.0 & $(2.46 \pm 0.07)\times 10^{-1}$\seedmark{3} & $(5.3 \pm 0.5)\times 10^{-2}$\seedmark{3} & $(1.9 \pm 0.22)\times 10^{-3}$\seedmark{3} \\
XPINN-4 & overlap$=$0.2 & $(2.46 \pm 0.07)\times 10^{-1}$\seedmark{3} & $(5.8 \pm 0.4)\times 10^{-2}$\seedmark{3} & $(1.6 \pm 0.22)\times 10^{-3}$\seedmark{3} \\
\bottomrule
\end{tabular}
}
\end{table}

\section{Limitations and Future Work}
\label{app:limitations}

The theory uses stylized gradients and an explanatory blockwise NTK view rather than a finite-time convergence proof. This level of theory is nevertheless aligned with the paper's claim, which is to predict when adapter gains should appear across conflict regimes rather than to guarantee global convergence for nonconvex PINN training.

Our experiments cover 60+ PyTorch PINN settings and two alternative backbones, but not fractional, nonlocal, stochastic, production-scale, or operator-learning systems; extending to these settings is future work. The evaluated suite still spans forward, inverse, multi-physics, parameter-varying, high-dimensional, NTK-weighting, MTL, and backbone-transfer checks, making the empirical claim about regime-dependent intervention substantially broader than a single benchmark family.


\end{document}